\newcolumntype{P}[1]{>{\centering\arraybackslash}p{#1}}
\setlist[enumerate]{leftmargin=.5in}
\setlist[itemize]{leftmargin=.5in}
\crefname{hypothesis}{Hypothesis}{Hypotheses}
\title{Symmetric Positive Semi-definite Riemannian Geometry with Application to Domain Adaptation\thanks{Submitted to the editors DATE.
\funding{This work was funded by the European Union’s Horizon 2020 research grant agreement 802735.}}}
\author{Or Yair\thanks{Viterbi Faculty of Electrical Engineering, Technion, Israel Institute of Technology (\email{oryair@campus.technion.ac.il}), (\email{ronen@ef.technion.ac.il}.)
O. Yair is supported by the Adams Fellowship Program of the Israel Academy of Sciences and Humanities}
\and Almog Lahav\footnotemark[2]
\and Ronen Talmon\footnotemark[2]}
\newlist{propenum}{enumerate}{1} 
\setlist[propenum]{label=\bfseries Property \arabic*., 
                   ref=\arabic*, wide}
\begin{document}

\maketitle

\begin{abstract}
    In this paper, we present new results on the Riemannian geometry of symmetric positive semi-definite (SPSD) matrices. 
    First, based on an existing approximation of the geodesic path, we introduce approximations of the logarithmic map, the exponential maps, and Parallel Transport (PT). Second, we derive a canonical representation for a set of SPSD matrices. 
    Based on these results, we propose an algorithm for Domain Adaptation (DA) and demonstrate its performance in two applications: fusion of hyper-spectral images and motion recognition. 
\end{abstract}

\begin{keywords}
  Symmetric Positive Semi-definite, Parallel Transport, Domain Adaptation, Riemannian Geometry
\end{keywords}

\begin{AMS}
  62-07, 57M50, 53C22, 53C20, 53C21, 68T99.
\end{AMS}


\section{Introduction}
    %
Recent technological advances give rise to the collection and storage of massive complex datasets. 
These datasets are often high dimensional and multimodal, calling for the development of informative representations. 
Since such complex data typically do not live in a Euclidean space, standard linear analysis techniques applied directly to the data are inappropriate. 
Consequently, analysis techniques based on Riemannian geometry have attracted significant research attention. 
While the basic mathematical operations used for data analysis, e.g., addition, subtraction, and comparison, are straight-forward in the Euclidean space, they are often non-trivial or intractable in particular Riemannian spaces.

A considerable amount of literature has been published on the Riemannian geometry of Symmetric Positive Definite (SPD) matrices, where it was shown to be useful for various applications, e.g., in computer vision, medical data analysis, and machine learning \cite{pennec2006riemannian,barachant2013classification,jayasumana2015kernel,huang2015log,bacak2016second, rodrigues2017dimensionality,rodrigues2018riemannian,bergmann2018priors}. 
For example, in \cite{pennec2006riemannian}, Pennec et al. introduced the use of the affine-invariant metric, facilitating closed-form expressions of the exponential and logarithmic maps, in medical imaging. In \cite{barachant2013classification}, Barachant et al. proposed an algorithm based on the Riemannian distance and the estimation of the Riemannian mean \cite{moakher2005differential} in the manifold of SPD matrices for Brain Computer Interface (BCI). 
The PT on the SPD manifold, which has a closed-form expression, was used in \cite{yair2019parallel} for DA.
Similar geometric operations in other Riemannian spaces have been developed as well, e.g., on the Grassmann manifold \cite{absil2004riemannian} and on the Stiefel manifold \cite{edelman1998geometry}, and were shown to be beneficial for a wide variety of data analysis tasks, e.g. \cite{shrivastava2014unsupervised}.

In this paper, we consider the Riemannian geometry of symmetric positive semi-definite (SPSD) matrices. 
Formally, let $\mathcal{S}_{d,r}^{+}$ denote the set of $d \times d$ SPSD matrices with a fixed rank $r < d$. 
Based on the eigenvalue decomposition, it can be shown that any SPSD matrix $\boldsymbol{C} \in \mathcal{S}_{d,r}^{+}$ can be represented by
\[
\boldsymbol{C}=\boldsymbol{G}\boldsymbol{P}\boldsymbol{G}^{T}
\]
where $\boldsymbol{G}\in \mathbb{R}^{d\times r}$ has orthonormal columns representing a point on the Grassmann manifold and $\boldsymbol{P}\in \mathbb{R}^{r\times r}$ is an SPD matrix. 
This geometry extends the Riemannian geometry of SPD matrices. 
In addition, it facilitates the analysis of a larger pool of data features.
For example, the SPD geometry supports the analysis of only full-rank covariance matrices.
However, it is well known that in many real-world problems, this is not the case. Often, high-dimensional data such as gene expression data \cite{kapur2016gene} and hyper-spectral imaging data \cite{HyperspectralLowRank, niu2016hyperspectral} have an intrinsic low-rank structure, and therefore, the associated covariance matrices are not full-rank. 
In addition to supporting low-rank covariance matrices, in contrast to the SPD geometry, the SPSD geometry applies to a wide variety of kernels, graph Laplacians, and similarity matrices, which are common data features in contemporary data analysis.

Despite the high relevance to many data analysis techniques, the usage of the Riemannian geometry of SPSD matrices has thus far been limited, since it lacks several pivotal components.
First, there is no available explicit expression for the geodesic path in $\mathcal{S}_{d,r}^{+}$ connecting two SPSD matrices. As a consequence, there is no definitive expression for the Riemannian distance between two SPSD matrices, which is typically defined as the arc length of the geodesic path. In addition, basic operations such as the logarithmic and the exponential maps, which are derived from the geodesic path, are undefined. Second, the representation of $\boldsymbol{C}\in \mathcal{S}^{+}_{d,r}$ by a pair $(\boldsymbol{G},\boldsymbol{P})$ is not unique, posing challenges when jointly analyzing multiple SPSD matrices.
These missing components led to an avenue of research, where full-rank structure is imposed by adding a scalar matrix to each of the given low-rank matrices \cite{wang2012covariance,fang2018new}. Essentially, this approach ``artificially'' transforms the SPSD geometry into the SPD geometry by introducing a component that does not stem from the data.

Instead, here we propose to extend the Riemannian geometry of SPSD matrices head-on.
Our developments largely rely on the work of Bonnabel and Sepulchre \cite{bonnabel2010riemannian}, where an approximation of the geodesic path on the SPSD manifold was presented, giving rise to a meaningful measure of proximity between two SPSD matrices, and on the work of Bonnabel et al. \cite{bonnabel2013rank}, where a rank-preserving mean of a set of fixed-rank SPSD matrices was defined. 
First, based on the approximation of the geodesic path in $\mathcal{S}_{d,r}^{+}$ \cite{bonnabel2010riemannian}, we introduce an approximation of the logarithmic and exponential maps. Second, we present {\color{black}an approximation of PT on $\mathcal{S}_{d,r}^{+}$}. Finally, using the mean of SPSD matrices proposed in \cite{bonnabel2013rank}, we derive a canonical representation of a set of SPSD matrices. 

Based on the developed mathematical infrastructure for the analysis of SPSD matrices with a fixed rank, we address the problem of DA.
%
Often, due to the inherent heterogeneity of many types of datasets, useful representations usually cannot be achieved simply by considering the union of multiple datasets. 
We present an algorithm for DA, which is based on the proposed canonical representation and PT on $\mathcal{S}_{d,r}^{+}$ and facilitates an informative representation of multiple heterogeneous datasets. We showcase the performance of our algorithm in two applications. First, we demonstrate fusion of hyper-spectral images collected by airborne sensors, which allows high-quality categorization of land-covers in one image by training a classifier on another image. Second, we show accurate motion recognition based on recordings of motions, which is actor-independent, i.e., independent of the actor executing these motions.

The remainder of the paper is organized as follows.
In \cref{sec:Prel} we present preliminaries on the Riemannian manifolds which are relevant to our work: the manifold of SPD matrices $\mathcal{P}_d$, the Grassmann manifold $\mathcal{G}_{d,r}$ and the manifold of SPSD matrices with a fixed rank $\mathcal{S}_{d,r}^+$. 
In \cref{sec:PTandGrassTransportation}, we describe a particular transportation map on $\mathcal{P}_d$ and $\mathcal{G}_{d,r}$ that is derived from PT.
\Cref{sec:SpsdTransportation} presents our approximations for the logarithmic and the exponential maps on $\mathcal{S}_{d,r}^+$, the PT-driven transportation map on $\mathcal{S}_{d,r}^+$, and a canonical representation for a set of SPSD matrices. Next, we propose a new DA algorithm in \cref{sec:DA}. \Cref{sec:Experiments} consists of two applications of the proposed DA algorithm to real data. Finally, \cref{sec:conclusions} concludes the paper.

\section{Preliminaries}
\label{sec:Prel}
In this section, we briefly describe several known properties of the manifold of SPD matrices $\mathcal{P}_d$, the Grassmann manifold $\mathcal{G}_{d,r}$, and the manifold of SPSD matrices $\mathcal{S}_{d,r}^+$, which will be extensively used throughout the paper.
First, we formally denote the following sets:
\begin{itemize}
\item $\mathcal{P}_{d}$ -- The set of $d\times d$ SPD matrices.
\item $\mathcal{S}_{d,r}^{+}$ -- The set of $d\times d$ SPSD matrices
with rank $r< d$.
\item $\mathcal{G}_{d,r}$ -- The set of $r$-dimensional subspaces of
$\mathbb{R}^{d}$.
\item $\mathcal{V}_{d,r}$ -- The set of $d\times r$ matrices with orthonormal
columns: $\boldsymbol{U}^{T}\boldsymbol{U}=\boldsymbol{I}_{r}$ for $\boldsymbol{U} \in \mathcal{V}_{d,r}$.
\item $\mathcal{O}_{d}$ -- The set of $d\times d$ orthogonal matrices
$\mathcal{O}_{d}\cong\mathcal{V}_{d,d}$.
\end{itemize}
In addition, given a manifold $\mathcal{M}$ with its Riemannian geodesic distance $d_R$, we denote the Fr\'echet (Karcher) mean $\overline{x}$ of the set $\left\{ x_{i}\in\mathcal{M}\right\} _{i}$ by
\[
\overline{x}=M\left(\left\{ x_{i}\right\} \right)\coloneqq\arg\min_{x\in\mathcal{M}}\sum_{i}d_{R}^{2}\left(x,x_{i}\right)
\]

%

\subsection{The manifold of SPD matrices $\mathcal{P}_d$}
The matrix $\boldsymbol{P} \in \mathbb{R}^{d\times d}$ is an SPD matrix if it is symmetric and all of its eigenvalues are strictly positive.
Denote the set of all $d \times d$ SPD matrices by
\[
\mathcal{P}_{d}=\left\{ \boldsymbol{P}\in\mathbb{R}^{d\times d}:\boldsymbol{P}=\boldsymbol{P}^{T},\boldsymbol{P}\succ0\right\} 
\]
The set $\mathcal{P}_d$ can be embedded in a $\frac{1}{2}d\left(d+1\right)$ dimensional space, that is
\[
\dim\left(\mathcal{P}_{d}\right)=\frac{1}{2}d\left(d+1\right)
\]
The tangent space $\mathcal{T}_{\boldsymbol{P}}\mathcal{P}_d$ at any point $\boldsymbol{P}\in\mathcal{P}_d$ is the set of all symmetric matrices
\[
\mathcal{T}_{\boldsymbol{P}}\mathcal{P}_d=\left\{ \boldsymbol{S}\in\mathbb{R}^{d\times d}:\boldsymbol{S}=\boldsymbol{S}^{T}\right\} 
\]
The affine invariant metric (inner product) in the tangent space $\mathcal{T}_{\boldsymbol{P}}\mathcal{P}_{d}$ is given by
\begin{equation}\label{eq:spd_inner}
\left\langle \boldsymbol{S}_{1},\boldsymbol{S}_{2}\right\rangle _{\boldsymbol{P}}=\left\langle \boldsymbol{P}^{-\frac{1}{2}}\boldsymbol{S}_{1}\boldsymbol{P}^{-\frac{1}{2}},\boldsymbol{P}^{-\frac{1}{2}}\boldsymbol{S}_{2}\boldsymbol{P}^{-\frac{1}{2}}\right\rangle 
\end{equation}
for any $\boldsymbol{S}_{1},\boldsymbol{S}_{2}\in\mathcal{T}_{\boldsymbol{P}}\mathcal{P}_{d}$, where $\left\langle \cdot,\cdot\right\rangle $ is the standard Euclidean inner product given by $\left\langle \boldsymbol{A},\boldsymbol{B}\right\rangle =\text{Tr}\left\{ \boldsymbol{A}^{T}\boldsymbol{B}\right\} $.

The set $\mathcal{P}_d$ equipped with the affine invariant metric \cref{eq:spd_inner} gives rise to a Riemannian manifold.
Below, we outline the main properties of this manifold. For more details, we refer the readers to \cite{bhatia2009positive}. 
\begin{itemize}
\item The geodesic path from $\boldsymbol{P}_{1}\in\mathcal{P}_{d}$ to
$\boldsymbol{P}_{2}\in\mathcal{P}_{d}$ can be parametrized by
\begin{equation}
    \label{eq:SpdGeodesic}    
\gamma_{\boldsymbol{P}_{1}\rightarrow\boldsymbol{P}_{2}}^{\mathcal{P}}\left(t\right)=\boldsymbol{P}_{1}^{\frac{1}{2}}\left(\boldsymbol{P}_{1}^{-\frac{1}{2}}\boldsymbol{P}_{2}\boldsymbol{P}_{1}^{-\frac{1}{2}}\right)^{t}\boldsymbol{P}_{1}^{\frac{1}{2}},\qquad t\in\left[0,1\right]
\end{equation}

\item The arc length of the geodesic path defines an affine invariant distance and is explicitly given by
\[
d_{\mathcal{P}}^{2}\left(\boldsymbol{P}_{1},\boldsymbol{P}_{2}\right)=\left\Vert \log\left(\boldsymbol{P}_{1}^{-\frac{1}{2}}\boldsymbol{P}_{2}\boldsymbol{P}_{1}^{-\frac{1}{2}}\right)\right\Vert _{F}^{2}=\sum_{i=1}^{d}\log^{2}\left(\lambda_{i}\left(\boldsymbol{P}_{1}^{-1}\boldsymbol{P}_{2}\right)\right)
\]
where $\lambda_{i}\left(\boldsymbol{A}\right)$ is the $i$th eigenvalue
of the matrix $\boldsymbol{A}$, and $\left\Vert \cdot\right\Vert _{F}$ is the Frobenius norm.

\item The exponential map from the point $\boldsymbol{P}\in\mathcal{P}_{d}$ at the direction $\boldsymbol{S}\in\mathcal{T}_{\boldsymbol{P}}\mathcal{P}_{d}$ is given by
\begin{equation}
    \label{eq:SpdExp}
\mathcal{P}_{d} \ni \text{Exp}_{\boldsymbol{P}}\left(\boldsymbol{S}\right)=\boldsymbol{P}^{\frac{1}{2}}\exp\left(\boldsymbol{P}^{-\frac{1}{2}}\boldsymbol{S}\boldsymbol{P}^{-\frac{1}{2}}\right)\boldsymbol{P}^{\frac{1}{2}}
\end{equation}
\item The logarithmic map, which is the inverse of the exponential map, is given by
\begin{equation}
    \label{eq:SpdLog}
\mathcal{T}_{\boldsymbol{P}}\mathcal{P}_{d} \ni \text{Log}_{\boldsymbol{P}}\left(\boldsymbol{P}_{0}\right)=\boldsymbol{P}^{\frac{1}{2}}\log\left(\boldsymbol{P}^{-\frac{1}{2}}\boldsymbol{P}_{0}\boldsymbol{P}^{-\frac{1}{2}}\right)\boldsymbol{P}^{\frac{1}{2}}
\end{equation}
for any $\boldsymbol{P},\boldsymbol{P}_0 \in \mathcal{P}_d$.
\item The PT $\Gamma_{\boldsymbol{P}_{1}\to\boldsymbol{P}_{2}}:\mathcal{T}_{\boldsymbol{P}_{1}}\mathcal{P}_{d}\to\mathcal{T}_{\boldsymbol{P}_{2}}\mathcal{P}_{d}$
of the tangent vector $\boldsymbol{S}\in\mathcal{T}_{\boldsymbol{P}_{1}}\mathcal{P}_{d}$
to $\mathcal{T}_{\boldsymbol{P}_{2}}\mathcal{P}_{d}$, is given by
\[
\Gamma_{\boldsymbol{P}_{1}\rightarrow\boldsymbol{P}_{2}}\left(\boldsymbol{S}\right)=\boldsymbol{E}\boldsymbol{S}\boldsymbol{E}^{T},\qquad\boldsymbol{E}=\left(\boldsymbol{P}_{2}\boldsymbol{P}_{1}^{-1}\right)^{\frac{1}{2}}
\]
\item Given a set of SPD matrices $\left\{ \boldsymbol{P}_{i}\in\mathcal{P}_{d}\right\} _{i}$, a useful Euclidean vector approximation in the tangent space $\mathcal{T}_{\overline{\boldsymbol{P}}}\mathcal{P}$, where $\overline{\boldsymbol{P}}=M\left(\left\{ \boldsymbol{P}_{i}\right\} \right)$, is given by
\[
d_{\mathcal{P}}\left(\boldsymbol{P}_{i},\boldsymbol{P}_{j}\right)\underset{\geq}{\approx}\left\Vert \widehat{\boldsymbol{S}}_{i}-\widehat{\boldsymbol{S}}_{j}\right\Vert _{F}
\]
where $\widehat{\boldsymbol{S}}_i=\overline{\boldsymbol{P}}^{-\frac{1}{2}}\text{Log}_{\overline{\boldsymbol{P}}}\left(\boldsymbol{P}_i\right)\overline{\boldsymbol{P}}^{-\frac{1}{2}}=\log\left(\overline{\boldsymbol{P}}^{-\frac{1}{2}}\boldsymbol{P}_i\overline{\boldsymbol{P}}^{-\frac{1}{2}}\right)$
\end{itemize}

Given a set of SPD matrices $\left\{ \boldsymbol{P}_{i}\in\mathcal{P}_{d}\right\} _{i}$, \cref{alg:SPD_mean} can be used to obtain the Riemannian SPD mean $\overline{\boldsymbol{P}}=M\left(\left\{ \boldsymbol{P}_{i}\right\} \right)$.

\begin{algorithm}
\caption{SPD Mean}
    \label{alg:SPD_mean}

\textbf{\uline{Input}}\textbf{:} A set of SPD matrices  $\left\{ \boldsymbol{P}_{i}\in\mathcal{P}_{d}\right\} _{i=1}^{N}$

\textbf{\uline{Output}}\textbf{:} The Riemannian mean $\overline{\boldsymbol{P}}=M\left(\left\{ \boldsymbol{P}_{i}\right\} \right)$
\begin{enumerate}
\item \textbf{set} $\overline{\boldsymbol{P}}\leftarrow\frac{1}{N}\sum_{i=1}^{N}\boldsymbol{P}_{i}$
\item \textbf{do}
\begin{enumerate}
\item $\overline{\boldsymbol{S}}\leftarrow\frac{1}{N}\sum_{i=1}^{N}\text{Log}_{\overline{\boldsymbol{P}}}\left(\boldsymbol{P}_{i}\right)$
\hfill $\rhd$ using \cref{eq:SpdLog}
\item $\overline{\boldsymbol{P}}\leftarrow\text{Exp}_{\overline{\boldsymbol{P}}}\left(\overline{\boldsymbol{S}}\right)$
\hfill $\rhd$ using \cref{eq:SpdExp}
\end{enumerate}
\textbf{while $\left\Vert \overline{\boldsymbol{S}}\right\Vert _{F}>\epsilon$}
\end{enumerate}
\end{algorithm}

\subsection{The Grassmann manifold $\mathcal{G}_{d,r}$}
Let
\begin{equation}\label{eq:g_fiber}
\left[\boldsymbol{Q}\right]\coloneqq\left\{ \boldsymbol{Q}\left[\begin{matrix}\boldsymbol{Q}_{r} & \boldsymbol{0}\\
\boldsymbol{0} & \boldsymbol{Q}_{d-r}
\end{matrix}\right]\,\bigg|\,\boldsymbol{Q}\in\mathcal{O}_{d},\boldsymbol{Q}_{r}\in\mathcal{O}_{r},\boldsymbol{Q}_{d-r}\in\mathcal{O}_{d-r}\right\} 
\end{equation}
be the equivalence class of all orthogonal matrices such that their $r$ 
leftmost columns span the same subspace. If $\boldsymbol{Q}_{1},\boldsymbol{Q}_{2}\in\left[\boldsymbol{Q}\right]$,
that is, the $r$ leftmost columns have the same span, we denote the equivalence relation by
\begin{equation}
    \label{eq:EquivalentClass}
\boldsymbol{Q}_{1},\boldsymbol{Q}_{2}\in\left[\boldsymbol{Q}\right] \iff \boldsymbol{Q}_{1}\sim\boldsymbol{Q}_{2}.
\end{equation}

For convenience, when considering only
the $r$ leftmost columns of $\boldsymbol{Q}\in\mathcal{O}_{d}$ we sometimes use $\boldsymbol{G}\in\mathcal{V}_{d,r}$ (and similarly $\left[\boldsymbol{G}\right]$)
instead of $\boldsymbol{Q}$ (and $\left[\boldsymbol{Q}\right]$), and we will state the dimensions
explicitly when necessary. This ``thin representation'', using $\boldsymbol{G}$ instead of $\boldsymbol{Q}$, gives rise to economic implementations of most of the operations detailed below. 

Let $\mathcal{G}_{d,r}=\left\{ \left[\boldsymbol{Q}\right]\right\}$ be the set of all $r$-dimensional subspaces of
$\mathbb{R}^{d}$, where $\left[\boldsymbol{Q}\right]$ represents any unique $r$-dimensional span as in \cref{eq:g_fiber}.
It can also be viewed as the quotient space
$$\mathcal{G}_{d,r}=\mathcal{O}_{d}\big/\left(\mathcal{O}_{r}\times\mathcal{O}_{d-r}\right)$$

Following \cite{edelman1998geometry}, for computational purposes, we usually consider a single matrix, either $\boldsymbol{G}\in\mathcal{V}_{d,r}$ or $\boldsymbol{Q}\in\mathcal{O}_{d}$, to represent the entire equivalence class $\left[\boldsymbol{Q}\right]$.
Throughout the paper, when considering multiple points (subspaces) on the Grassmann manifold, we assume that the principal angels between those subspaces are strictly smaller than $\frac{\pi}{2}$.

The set $\mathcal{G}_{d,r}$ can be embedded in a $r\left(d-r\right)$ dimensional space, that is
\[
\dim\left(\mathcal{G}_{d,r}\right)=r\left(d-r\right)
\]
%
%
%
The tangent space $\mathcal{T}_{\boldsymbol{Q}}\mathcal{G}_{d,r}$ at $[\boldsymbol{Q}] \in \mathcal{G}_{d,r}$, represented by the orthogonal matrix $\boldsymbol{Q} \in \mathcal{O}_d$, is given by
\[
\mathcal{T}_{\boldsymbol{Q}}\mathcal{G}_{d,r}=\left\{ \boldsymbol{\Delta}\in\mathbb{R}^{d\times d}\bigg|\boldsymbol{\Delta}=\boldsymbol{Q}\boldsymbol{B}^{\text{skew}}\right\} 
\]
where $\boldsymbol{B}^{\text{skew}}=\left[\begin{matrix}\boldsymbol{0} & -\boldsymbol{B}^{T}\\
\boldsymbol{B} & \boldsymbol{0}
\end{matrix}\right]$ for any $\boldsymbol{B}\in\mathbb{R}^{(d-r)\times r}$.
For simplicity, the tangent space $\mathcal{T}_{\boldsymbol{Q}}\mathcal{G}_{d,r}$ can be equivalently written as
\[
\mathcal{T}_{\boldsymbol{G}}\mathcal{G}_{d,r}=\left\{ \boldsymbol{G}_{\perp}\boldsymbol{B}\in\mathbb{R}^{d\times r}\bigg|\boldsymbol{B}\in\mathbb{R}^{\left(d-r\right)\times r}\right\} 
\]
where $\boldsymbol{Q}=\left[\begin{matrix}\boldsymbol{G} & \boldsymbol{G}_{\perp}\end{matrix}\right]$, $\boldsymbol{G}\in\mathcal{V}_{d,r}$, and $\boldsymbol{G}_{\perp}\in\mathcal{V}_{d,d-r}$ is the orthogonal complement of $\boldsymbol{G}$.
%
The inner product in $\mathcal{T}_{\boldsymbol{Q}}\mathcal{G}_{d,r}$ is given by
\begin{equation}\label{eq:grass_inner}
\left\langle \boldsymbol{\Delta}_{1},\boldsymbol{\Delta}_{2}\right\rangle _{\boldsymbol{Q}}=\frac{1}{2}\left\langle \boldsymbol{\Delta}_{1},\boldsymbol{\Delta}_{2}\right\rangle =\left\langle \boldsymbol{B}_{1},\boldsymbol{B}_{2}\right\rangle 
\end{equation}
where $\boldsymbol{\Delta}_{i}=\boldsymbol{Q}\boldsymbol{B}_{i}^{\text{skew}}\in\mathcal{T}_{\boldsymbol{Q}}\mathcal{G}_{d,r}$.

The set $\mathcal{G}_{d,r}$ and the inner product \cref{eq:grass_inner} form the Grassmann manifold. 
Below, we outline its main properties.
For more details, we refer the readers to \cite{edelman1998geometry}.



\begin{itemize}

\item The exponential map from the point $\boldsymbol{Q}\in\mathcal{O}_{d}$, which represents the point $\left[\boldsymbol{Q}\right]\in\mathcal{G}_{d,r}$, at the direction $\boldsymbol{\Delta}=\boldsymbol{Q}\boldsymbol{B}^{\text{skew}}\in\mathcal{T}_{\boldsymbol{Q}}\mathcal{G}_{d,r}$ is given by
\begin{equation}\label{eq:GrassExp}
\text{Exp}_{\boldsymbol{Q}}\left(\boldsymbol{\Delta}\right)=\boldsymbol{Q}\exp\left(\boldsymbol{B}^{\text{skew}}\right)
\end{equation}
where $\left[\text{Exp}_{\boldsymbol{Q}}\left(\boldsymbol{\Delta}\right)\right]\in\mathcal{G}_{d,r}$.
For small values of $t$, the curve $\left[\text{Exp}_{\boldsymbol{Q}}\left(t\boldsymbol{\Delta}\right)\right]$
is a geodesic.
Similarly, the exponential map from the point $\boldsymbol{G}\in\mathcal{V}_{d,r}$, which represents the point $\left[\boldsymbol{G}\right]\in\mathcal{G}_{d,r}$, at the direction $\boldsymbol{G}_{\perp}\boldsymbol{B}\in\mathcal{T}_{\boldsymbol{G}}\mathcal{G}_{d,r}$ is given by
\begin{equation}
    \label{eq:GrassExpG}
    \text{Exp}_{\boldsymbol{G}}\left(\boldsymbol{G}_{\perp}\boldsymbol{B}\right)=\left(\boldsymbol{G}\boldsymbol{V}\cos\left(\boldsymbol{\Sigma}\right)+\boldsymbol{U}\sin\left(\boldsymbol{\Sigma}\right)\right)\boldsymbol{V}^{T}
\end{equation}
where $\boldsymbol{G}_{\perp}\boldsymbol{B}=\boldsymbol{U}\boldsymbol{\Sigma}\boldsymbol{V}^T$ is a compact SVD.

\item Given two points $\boldsymbol{G},\boldsymbol{G}_{0}\in\mathcal{V}_{d,r}$, representing the two points $\left[\boldsymbol{G}\right],\left[\boldsymbol{G}_{0}\right]\in\mathcal{G}_{d,r}$, the logarithmic map, which is the inverse of the exponential map, is given by
\begin{equation}
    \label{eq:GrassLogG}
\mathcal{T}_{\boldsymbol{G}}\mathcal{G}_{d,r}\ni\text{Log}_{\boldsymbol{G}}\left(\boldsymbol{G}_{0}\right)=\boldsymbol{U}\arctan\left(\boldsymbol{\Sigma}\right)\boldsymbol{V}^{T}
\end{equation}
where
\[
\left(\boldsymbol{I}-\boldsymbol{G}\boldsymbol{G}^{T}\right)\boldsymbol{G}_{0}\left(\boldsymbol{G}^{T}\boldsymbol{G}_{0}\right)^{-1}=\boldsymbol{U}\boldsymbol{\Sigma}\boldsymbol{V}^{T}
\]
is a compact SVD decomposition.
Let $\boldsymbol{Q}=\left[\begin{matrix}\boldsymbol{G} & \boldsymbol{G}_{\perp}\end{matrix}\right]$ and $\boldsymbol{Q}_{0}=\left[\boldsymbol{G}_{0},\boldsymbol{G}_{0,\perp}\right]$.
The tangent vector in \cref{eq:GrassLogG} can be recast as
\begin{equation}
    \label{eq:GrassLogQ}
\mathcal{T}_{\boldsymbol{Q}}\mathcal{G}_{d,r}\ni\text{Log}_{\boldsymbol{Q}}\left(\boldsymbol{Q}_{0}\right)=\boldsymbol{\Delta}=\boldsymbol{Q}\boldsymbol{B}^{\text{skew}}_0
\end{equation}
where $\boldsymbol{B}_{0}^{\text{skew}}=\left[\begin{matrix}\boldsymbol{0} & -\boldsymbol{B}_{0}^{T}\\
\boldsymbol{B}_{0} & \boldsymbol{0}
\end{matrix}\right]$ and $\text{Log}_{\boldsymbol{G}}\left(\boldsymbol{G}_{0}\right)=\boldsymbol{G}_{\perp}\boldsymbol{B}_{0}$.

\item Given $\boldsymbol{G}_{1}, \boldsymbol{G}_{2}\in\mathcal{V}_{d,r}$, the geodesic between the two points can be computed by
\begin{equation}
    \label{eq:GrassGeodesic}
    \gamma_{\boldsymbol{G}_{1}\to\boldsymbol{G}_{2}}^{\mathcal{G}}\left(t\right)=\text{Exp}_{\boldsymbol{G}_{1}}\left(t\text{Log}_{\boldsymbol{G}_{1}}\left(\boldsymbol{G}_{2}\right)\right),\qquad t\in\left[0,1\right]
\end{equation}
Note that in general $\gamma_{\boldsymbol{G}_{1}\to\boldsymbol{G}_{2}}^{\mathcal{G}}\left(1\right)\sim\boldsymbol{G}_{2}$ but not necessarily $\gamma_{\boldsymbol{G}_{1}\to\boldsymbol{G}_{2}}^{\mathcal{G}}\left(1\right)=\boldsymbol{G}_{2}$.
We note that the expression in \cref{eq:GrassGeodesic} is well defined if all the principal angels between the two subspaces $\left[\boldsymbol{G}_{1}\right]$ and $\left[\boldsymbol{G}_{2}\right]$ are strictly smaller than $\frac{\pi}{2}$.

\item The arc length of the geodesic path between the points $\left[\boldsymbol{G}_{1}\right]\in\mathcal{G}_{d,r}$ and $\left[\boldsymbol{G}_{2}\right]\in\mathcal{G}_{d,r}$ is given by
\[
d_{\mathcal{G}}\left(\boldsymbol{G}_{1},\boldsymbol{G}_{2}\right)=\left\Vert \boldsymbol{\Theta}\right\Vert _{F}
\]
where $\boldsymbol{G}_{1}^{T}\boldsymbol{G}_{2}=\boldsymbol{O}_{1}\left(\cos\boldsymbol{\Theta}\right)\boldsymbol{O}_{2}^{T}$ is an SVD decomposition,
$\boldsymbol{O}_{1},\boldsymbol{O}_2\in\mathcal{O}_{r}$, $\boldsymbol{\Theta}=\text{diag}\left(\left[\theta_{1},\theta_{2},\dots,\theta_{r}\right]\right)$,
and $\{\theta_{i}\}$ are known as the principal angles between the two subspaces
$\left[\boldsymbol{G}_{1}\right]$ and $\left[\boldsymbol{G}_{2}\right]$.


\item The PT of the tangent vector $\boldsymbol{\Delta}=\boldsymbol{Q}\boldsymbol{B}^{\text{skew}}\in\mathcal{T}_{\boldsymbol{Q}}\mathcal{G}_{d,r}$ along the geodesic $\text{Exp}_{\boldsymbol{Q}}\left(t\widetilde{\boldsymbol{\Delta}}\right)$
where $\widetilde{\boldsymbol{\Delta}}=\boldsymbol{Q}\widetilde{\boldsymbol{B}}^{\text{skew}}\in\mathcal{T}_{\boldsymbol{Q}}\mathcal{G}_{d,r}$ is given by 
\begin{align}
\begin{split}
    \label{eq:GrassPT}
\Gamma_{\boldsymbol{Q}\rightarrow\text{Exp}_{\boldsymbol{Q}}\left(t\boldsymbol{\Delta}\right)}\left(\boldsymbol{\Delta}\right) & =\text{Exp}_{\boldsymbol{Q}}\left(t\widetilde{\boldsymbol{\Delta}}\right)\boldsymbol{Q}^{T}\boldsymbol{\Delta}\\
 & =\boldsymbol{Q}\exp\left(t\widetilde{\boldsymbol{B}}^{\text{skew}}\right)\boldsymbol{B}^{\text{skew}}\in\mathcal{T}_{\text{Exp}_{\boldsymbol{Q}}\left(t\boldsymbol{\Delta}\right)}\mathcal{G}_{d,r}
\end{split}
\end{align}

Specifically, if $\widetilde{\boldsymbol{Q}}=\text{Exp}_{\boldsymbol{Q}}\left(\widetilde{\boldsymbol{\Delta}}\right)=\boldsymbol{Q}\exp\left(\widetilde{\boldsymbol{B}}^{\text{skew}}\right)$ we have:
\[
\Gamma_{\boldsymbol{Q}\rightarrow\widetilde{\boldsymbol{Q}}}\left(\boldsymbol{\Delta}\right)=\widetilde{\boldsymbol{Q}}\boldsymbol{B}^{\text{skew}}
\]

\end{itemize}

Given a set of matrices $\left\{ \boldsymbol{Q}_{i}\in\mathcal{O}_{d}\right\} _{i}$, where each represents a point $\left[\boldsymbol{Q}_{i}\right]\in\mathcal{G}_{d,r}$, \cref{alg:Grass_mean} can be used to obtain the Riemannian mean on the Grassmann manifold $\overline{\boldsymbol{Q}}=M\left(\left\{ \boldsymbol{Q}_{i}\right\} \right)$.

Let $\boldsymbol{Q}_{1}\in\mathcal{O}_{d}$ and $\boldsymbol{Q}_{2}\in\mathcal{O}_{d}$
represent two points on $\mathcal{G}_{d,r}$, and let $\boldsymbol{G}_{1}\in\mathcal{V}_{d,r}$
and $\boldsymbol{G}_{2}\in\mathcal{V}_{d,r}$ be their $r$ leftmost columns, respectively. When
considering the Stiefel manifold $\mathcal{V}_{d,r}=\mathcal{O}_{d}/\mathcal{O}_{d-r}$, the closest
point $\widetilde{\boldsymbol{Q}}_{2}$ in
$\left[\boldsymbol{Q}_{2}\right]$ to $\boldsymbol{Q}_{1}$ is given
by
\begin{equation}
    \label{eq:GrassProjectionQ}
\widetilde{\boldsymbol{Q}}_{2}=\Pi_{\boldsymbol{Q}_{1}}\left(\boldsymbol{Q}_{2}\right)\coloneqq\text{Exp}_{\boldsymbol{Q}_{1}}\left(\text{Log}_{\boldsymbol{Q}_{1}}\left(\boldsymbol{Q}_{2}\right)\right)=\text{Exp}_{\boldsymbol{Q}_{1}}\left(\boldsymbol{Q}_{1}\boldsymbol{B}_{2}^{\text{skew}}\right)=\boldsymbol{Q}_{1}\exp\left(\boldsymbol{B}_{2}^{\text{skew}}\right)
\end{equation}
where the logarithmic map is computed using \cref{eq:GrassLogG} and \cref{eq:GrassLogQ}, and the exponential map is computed using \cref{eq:GrassExp}.
See \cref{fig:GrassProj} for illustration.
Using the compact representations $\boldsymbol{G}_1$ and $\boldsymbol{G}_2$, \cref{eq:GrassProjectionQ} can be simply recast as
\begin{equation}
    \label{eq:GrassProjectionG}
\widetilde{\boldsymbol{G}}_{2}=\Pi_{\boldsymbol{G}_{1}}\left(\boldsymbol{G}_{2}\right)\coloneqq\text{Exp}_{\boldsymbol{G}_{1}}\left(\text{Log}_{\boldsymbol{G}_{1}}\left(\boldsymbol{G}_{2}\right)\right)=\boldsymbol{G}_{2}\boldsymbol{O}_{2}\boldsymbol{O}_{1}^{T}
\end{equation}
where $\boldsymbol{G}_{1}^{T}\boldsymbol{G}_{2}=\boldsymbol{O}_{1}\boldsymbol{\Sigma}\boldsymbol{O}_{2}^{T}$
is an SVD decomposition. 
This result will be heavily used in remainder of the paper.

\begin{figure}
    \centering
    \includegraphics[width=0.2\columnwidth]{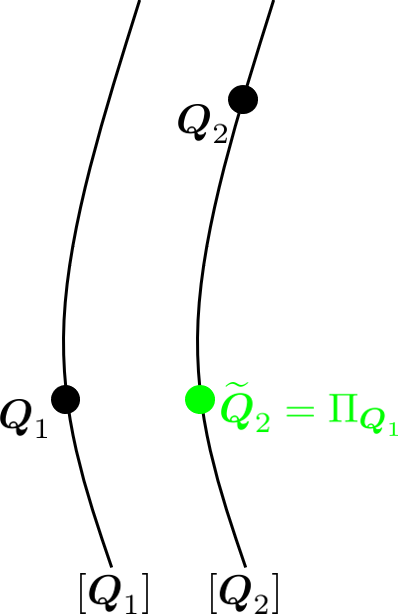}
    \caption{Illustration of the computation of the closest point in $[\boldsymbol{Q}_2]$ to $\boldsymbol{Q}_1$ in \cref{eq:GrassProjectionQ}.}
    \label{fig:GrassProj}
\end{figure}

\begin{algorithm}
\caption{Grassmann Mean}
    \label{alg:Grass_mean}

\textbf{\uline{Input}}\textbf{:} A set of matrices $\left\{ \boldsymbol{G}_{i}\in\mathcal{V}_{d,r}\right\} _{i}$, each represents a point $\left[\boldsymbol{G}_{i}\right]\in\mathcal{G}_{d,r}$

\textbf{\uline{Output}}\textbf{:} The Grassmann mean $\overline{\boldsymbol{G}}=M\left(\left\{ \boldsymbol{G}_{i}\right\} \right)$
\begin{enumerate}
\item \textbf{set} $\overline{\boldsymbol{G}}\leftarrow\boldsymbol{G}_{1}$
\item \textbf{do}
\begin{enumerate}
\item $\overline{\boldsymbol{\Delta}}\leftarrow\frac{1}{N}\sum_{i=1}^{N}\text{Log}_{\overline{\boldsymbol{G}}}\left(\boldsymbol{Q}_{i}\right)$
\hfill $\rhd$ using \cref{eq:GrassLogG}
\item $\overline{\boldsymbol{G}}\leftarrow\text{Exp}_{\overline{\boldsymbol{G}}}\left(\overline{\boldsymbol{\Delta}}\right)$
\hfill $\rhd$ using \cref{eq:GrassExpG}
\end{enumerate}
\textbf{while $\left\Vert \overline{\boldsymbol{\Delta}}\right\Vert _{F}>\epsilon$}
\end{enumerate}
\end{algorithm}

\subsection{The manifold of SPSD matrices $\mathcal{S}_{d,r}^+$}
    \label{sub:SPSD}
The set of all $d\times d$ SPSD matrices with a fixed rank $r<d$ is given by
\[
\mathcal{S}_{d,r}^{+}=\left\{ \boldsymbol{C}\in\mathbb{R}^{d\times d}:\boldsymbol{C}=\boldsymbol{C}^{T},\boldsymbol{C}\succeq0,\text{rank}\left(\boldsymbol{C}\right)=r\right\} 
\]
Since any $\boldsymbol{C}\in\mathcal{S}_{d,r}^{+}$ can be represented by
\[
\boldsymbol{C}=\boldsymbol{G}\boldsymbol{P}\boldsymbol{G}^{T}
\]
where $\boldsymbol{G}\in\mathcal{V}_{d,r}$ and $\boldsymbol{P}\in\mathcal{P}_{r}$, Bonnabel and Sepulchre \cite{bonnabel2010riemannian}
proposed the following structure space representation
\[
\boldsymbol{C}\cong\left(\boldsymbol{G},\boldsymbol{P}\right)
\]
We note that this representation is not unique since
\[
\boldsymbol{C}=\boldsymbol{G}\boldsymbol{P}\boldsymbol{G}^{T}=\left(\boldsymbol{G}\boldsymbol{O}\right)\left(\boldsymbol{O}^{T}\boldsymbol{P}\boldsymbol{O}\right)\left(\boldsymbol{G}\boldsymbol{O}\right)^{T}
\]
and therefore
\[
\boldsymbol{C}\cong\left(\boldsymbol{G}\boldsymbol{O},\boldsymbol{O}^{T}\boldsymbol{P}\boldsymbol{O}\right)
\]
for any $\boldsymbol{O}\in\mathcal{O}_{r}$.
In other words, the set $\mathcal{S}_{d,r}^+$ can be written as the quotient manifold
$$
\mathcal{S}_{d,r}^{+}\cong\left(\mathcal{V}_{d,r}\times\mathcal{P}_{r}\right)\big/\mathcal{O}_{r}
$$
Using the structure space, the set $\mathcal{S}_{d,r}^{+}$ can be embedded in $rd -\frac{1}{2}r\left(r-1\right)$ dimensional space, that is
\begin{align*}
\dim\left(\mathcal{S}^+_{d,r}\right) & =rd -\frac{1}{2}r\left(r-1\right)
\end{align*}
The tangent space $\mathcal{T}_{\left(\boldsymbol{G},\boldsymbol{P}\right)}\mathcal{S}^{+}_{d,r}$ in the structure space is given by
\[
\mathcal{T}_{\left(\boldsymbol{G},\boldsymbol{P}\right)}\mathcal{S}_{d,r}^{+}=\left\{ \left(\boldsymbol{\Delta},\boldsymbol{S}\right):\boldsymbol{\Delta}\in\mathcal{T}_{\boldsymbol{G}}\mathcal{G}_{d,r},\boldsymbol{S}\in\mathcal{T}_{\boldsymbol{P}}\mathcal{P}_{r}\right\} 
\]
The inner product in the tangent space $\mathcal{T}_{\left(\boldsymbol{G},\boldsymbol{P}\right)}\mathcal{S}_{d,r}^{+}$ is given by
\begin{equation}
    \label{eq:SPSD_inner}
\left\langle \left(\boldsymbol{\Delta}_{1},\boldsymbol{S}_{1}\right),\left(\boldsymbol{\Delta}_{2},\boldsymbol{S}_{2}\right)\right\rangle _{\left(\boldsymbol{G},\boldsymbol{P}\right)}=\left\langle \boldsymbol{\Delta}_{1},\boldsymbol{\Delta}_{2}\right\rangle _{\boldsymbol{G}}+k\left\langle \boldsymbol{S}_{1},\boldsymbol{S}_{2}\right\rangle _{\boldsymbol{P}},\qquad k>0
\end{equation}
where $\left(\boldsymbol{\Delta}_{i},\boldsymbol{S}_{i}\right)\in\mathcal{T}_{\left(\boldsymbol{G},\boldsymbol{P}\right)}\mathcal{S}_{d,r}^{+}$.

There is no definitive expression for the geodesic path between two points on the manifold. Bonnabel and Sepulchre \cite{bonnabel2010riemannian} proposed the following approximation in the structure space.
Let $\boldsymbol{C}_{1}\cong\left(\boldsymbol{G}_{1},\boldsymbol{P}_{1}\right)$
and $\boldsymbol{C}_{2}\cong\left(\boldsymbol{G}_{2},\boldsymbol{P}_{2}\right)$ be two points on $\mathcal{S}^{+}_{d,r}$ such that $\boldsymbol{G}_{2}\coloneqq\Pi_{\boldsymbol{G}_{1}}\left(\boldsymbol{G}_{2}\right)$.
Then, the approximate geodesic path between $\boldsymbol{C}_1$ and $\boldsymbol{C}_2$ is given by
\begin{equation}
    \label{eq:SpsdGeodesic}
\widetilde{\gamma}_{\boldsymbol{C}_{1}\rightarrow\boldsymbol{C}_{2}}\left(t\right)=\boldsymbol{G}\left(t\right)\boldsymbol{P}\left(t\right)\boldsymbol{G}^{T}\left(t\right),\qquad t\in\left[0,1\right]
\end{equation}
where $\boldsymbol{G}\left(t\right)=\gamma_{\boldsymbol{G}_{1}\rightarrow\boldsymbol{G}_{2}}^{\mathcal{G}}\left(t\right)$
and $\boldsymbol{P}\left(t\right)=\gamma_{\boldsymbol{P}_{1}\rightarrow\boldsymbol{P}_{2}}^{\mathcal{P}}\left(t\right)$ as in \cref{eq:SpdGeodesic} and \cref{eq:GrassGeodesic}.
In addition, the length of the curve $\widetilde{\gamma}_{\boldsymbol{C}_{1}\rightarrow\boldsymbol{C}_{2}}$
is given by
\begin{equation}
    \label{eq:SpsdMetricApprox}
l^{2}\left(\widetilde{\gamma}_{\boldsymbol{C}_{1}\rightarrow\boldsymbol{C}_{2}}\right)=d_{\mathcal{G}}^{2}\left(\boldsymbol{G}_{1},\boldsymbol{G}_{2}\right)+kd_{\mathcal{P}}^{2}\left(\boldsymbol{P}_{1},\boldsymbol{P}_{2}\right),\qquad k>0
\end{equation}
We note that this is not a distance on $\mathcal{S}_{d,r}^{+}$, since it
does not satisfy the triangle inequality.
For more details on the SPSD manifold, we refer the readers to \cite{bonnabel2010riemannian}.

Given a set of SPSD matrices $\left\{ \boldsymbol{C}_{i}\in\mathcal{S}_{d,r}^{+}\right\} _{i}$, an algorithm to compute a point which admits the desirable property of the geometric mean was proposed in \cite{bonnabel2013rank}. The algorithm is summarized in \cref{alg:SpsdMean}.

We remark that the lack of definitive expression for the geodesic path entails that there is also no definitive expression for the logarithmic and exponential maps.

\begin{algorithm}
\caption{Riemannian Mean of SPSD matrices as proposed in \cite{bonnabel2013rank}}
    \label{alg:SpsdMean}

\textbf{\uline{Input}}\textbf{:} A set of SPSD matrices $\left\{ \boldsymbol{C}_{i}\in\mathcal{S}_{d,r}^{+}\right\} _{i=1}^{N}$

\textbf{\uline{Output}}\textbf{:} The proposed mean $\overline{\boldsymbol{C}}=\overline{\boldsymbol{G}}\overline{\boldsymbol{P}}\overline{\boldsymbol{G}}^{T}$
\begin{enumerate}
\item Obtain the Grassmann mean $\overline{\boldsymbol{G}}$:
\begin{enumerate}
\item Obtain $\boldsymbol{G}_{i}\in\mathcal{V}_{d,r}$, the range of $\boldsymbol{C}_{i}$
(e.g., by using SVD)
\item Compute $\overline{\boldsymbol{G}}\in\mathcal{V}_{d,r}$, the Grassmann mean of
$\left\{ \boldsymbol{G}_{i}\right\} _{i=1}^{N}$
\hfill $\rhd$ using \cref{alg:Grass_mean}
\end{enumerate}
\item Obtain the SPD mean $\overline{\boldsymbol{P}}$:
\begin{enumerate}
\item Compute $\boldsymbol{O}_{i}$ and $\overline{\boldsymbol{O}_{i}}$
using SVD:
\[
\boldsymbol{G}_{i}^{T}\overline{\boldsymbol{G}}=\boldsymbol{O}_{i}\boldsymbol{\Sigma}_{i}\overline{\boldsymbol{O}}_{i}
\]
\item Set $\boldsymbol{P}_{i}=\overline{\boldsymbol{O}}_{i}\boldsymbol{O}_{i}^{T}\boldsymbol{G}_{i}^{T}\boldsymbol{C}_{i}\boldsymbol{G}_{i}\boldsymbol{O}_{i}\overline{\boldsymbol{O}}_{i}^{T}\in\mathcal{P}_{r}$
\item Compute $\overline{\boldsymbol{P}}=M\left(\left\{ \boldsymbol{P}_{i}\right\} \right)$, the SPD mean of $\left\{ \boldsymbol{P}_{i}\right\} _{i=1}^{N}$
\hfill $\rhd$ using \cref{alg:SPD_mean}
\end{enumerate}
\item Set $\overline{\boldsymbol{C}}=\overline{\boldsymbol{G}}\overline{\boldsymbol{P}}\overline{\boldsymbol{G}}^{T}$
\end{enumerate}
\end{algorithm}

\section{Transportation on a Riemannian manifold}
    \label{sec:PTandGrassTransportation}
In this section, we study a transport map of a set of points with respect to two reference points on a Riemannian manifold.
This transportation gives the foundation to the proposed DA, as we will show in the sequel. We begin with a general definition.
\begin{definition}
    \label{def:isometricTransportation}
    Consider a set of points $\mathcal{X}=\left\{ x_{i}\in\mathcal{M}\right\} _{i=1}^{N_{x}}$ on a Riemannian manifold $\mathcal{M}$. Let $\overline{x}=M\left(\mathcal{X}\right)$ be the Riemannian mean of the set, and let $\overline{y}\in\mathcal{M}$ be a target mean.
    
    We call a transport map $\varphi_{\overline{x}\to\overline{y}}:\mathcal{M\to}\mathcal{M}$ an \emph{isometric transport of $\mathcal{X}$ from $\overline{x}$ to $\overline{y}$}, if it satisfies the following two properties.
    \begin{enumerate}
        \item $\varphi_{\overline{x}\to\overline{y}}$ preserves pairwise distances:
        $d\left(x_{i},x_{j}\right)=d\left(\varphi_{\overline{x}\to\overline{y}}\left(x_{i}\right),\varphi_{\overline{x}\to\overline{y}}\left(x_{j}\right)\right)$.
        
        \item The mean of the transported set is $\overline{y}$, that is $M\left(\left\{ \varphi_{\overline{x}\to\overline{y}}\left(\mathcal{X}\right)\right\} \right)=\overline{y}$.
        
    \end{enumerate}
\end{definition}
By the above definition, given such a transport map $\varphi_{\overline{x} \rightarrow \overline{y}}$, any composition of ``rotation'' about $\overline{y}$ (in the Riemannian sense) applied to $\varphi_{\overline{x} \rightarrow \overline{y}}\left(\mathcal{X}\right)$ also satisfies \cref{def:isometricTransportation}.

In order to resolve this degree of freedom, we focus on transport maps defined as follows.
Let $\Gamma_{\overline{x}\to\overline{y}}:\mathcal{T}_{\overline{x}}\mathcal{M}\mathcal{\to}\mathcal{T}_{\overline{y}}\mathcal{M}$ be the PT from $\mathcal{T}_{\overline{x}}\mathcal{M}$ to $\mathcal{T}_{\overline{y}}\mathcal{M}$ on the manifold $\mathcal{M}$.
Based on $\Gamma_{\overline{x}\to\overline{y}}$, we define the transport map $\Gamma_{\overline{x}\to\overline{y}}^{+}:\mathcal{M}\rightarrow\mathcal{M}$ as follows
\[
\widetilde{x}_i=\Gamma^{+}_{\overline{x}\rightarrow\overline{y}}\left(x_i\right)\coloneqq\text{Exp}_{\overline{y}}\left(\Gamma_{\overline{x}\rightarrow\overline{y}}\left(\text{Log}_{\overline{x}}\left(x_{i}\right)\right)\right),
\]
for any $x_i \in \mathcal{X}$.
Namely, the map $\Gamma^{+}_{\overline{x}\rightarrow\overline{y}}$ is a composition of three steps:
\begin{enumerate}
\item Apply the logarithmic map to $x_i$ and obtain the corresponding vector $\xi_{i}\in \mathcal{T}_{\overline{x}}\mathcal{M}$
\[
\xi_{i}=\text{Log}_{\overline{x}}\left(x_{i}\right).
\]
\item Parallel transport $\xi_{i}$ from $\mathcal{T}_{\overline{x}}\mathcal{M}$ to $\mathcal{T}_{\overline{y}}\mathcal{M}$
\[
\widetilde{\xi}_{i}=\Gamma_{\overline{x}\rightarrow\overline{y}}\left(\xi_{i}\right).
\]
\item Apply the exponential map to $\widetilde{\xi}_{i}$ and obtain the point $\widetilde{x}_i \in \mathcal{M}$
\[
\widetilde{x}_{i}=\text{Exp}_{\overline{y}}\left(\widetilde{\xi}_{i}\right).
\]
\end{enumerate}
This transport is derived from PT with one important distinction: while PT maps points from tangent space to tangent space, this transport maps points from the manifold to the manifold.

The extra degree of freedom associated with \cref{def:isometricTransportation} is resolved by considering the map $\Gamma^{+}_{\overline{x}\rightarrow\overline{y}}$, because the PTs $\Gamma_{\overline{x}\rightarrow\overline{y}}$ we consider on the specific manifolds of interest are with respect to the Levi-Civita connection. The Levi-Civita connection is the unique torsion-free metric connection. As a result, such PTs along a curve are torsion-free, i.e., they preserve the inner products on the various tangent spaces, circumventing the ``screw around the curve''.

In the following, we will show that the specifications of $\Gamma_{\overline{x}\to\overline{y}}^{+}$ to the manifolds $\mathcal{P}_d$ and $\mathcal{G}_{d,r}$ satisfy \cref{def:isometricTransportation}. In addition, we will provide compact and closed-form expressions for these transports.

\subsection{$\Gamma^+$ on $\mathcal{P}_d$}
    \label{sub:SpdTransportation}
Let $\mathcal{X}=\left\{\boldsymbol{P}_{i}\in\mathcal{P}_{d}\right\}_{i=1}^{N_{x}}$
be a set of points on $\mathcal{P}_d$ with mean $M\left(\mathcal{X}\right)=\overline{\boldsymbol{P}} \in \mathcal{P}_d$,
and let $\overline{\boldsymbol{R}}\in\mathcal{P}_{d}$ be a target mean.
In \cite{yair2019parallel}, it was shown that $\Gamma_{\overline{\boldsymbol{P}}\to\overline{\boldsymbol{R}}}^{+}:\mathcal{P}_{d}\to\mathcal{P}_{d}$ can be written in a compact
(linear) form
\begin{equation}
    \label{eq:SpdPtPlus}
\Gamma^{+}_{\overline{\boldsymbol{P}}\rightarrow\overline{\boldsymbol{R}}}\left(\boldsymbol{P}_i\right)=\boldsymbol{E}\boldsymbol{P}_i\boldsymbol{E}^{T},
\end{equation}
where
\begin{equation}
    \label{eq:E}
\boldsymbol{E}=\left(\overline{\boldsymbol{R}}\overline{\boldsymbol{P}}^{-1}\right)^{\frac{1}{2}}=\overline{\boldsymbol{P}}^{\frac{1}{2}}\left(\overline{\boldsymbol{P}}^{-\frac{1}{2}}\overline{\boldsymbol{R}}\overline{\boldsymbol{P}}^{-\frac{1}{2}}\right)^{\frac{1}{2}}\overline{\boldsymbol{P}}^{-\frac{1}{2}}.
\end{equation}
Direct computation yields that $\Gamma_{\overline{\boldsymbol{P}}\to\overline{\boldsymbol{R}}}^{+}$ admits the properties of \cref{def:isometricTransportation}.
First, $\Gamma_{\overline{\boldsymbol{P}}\to\overline{\boldsymbol{R}}}^{+}$ is isometric, i.e., it preserves the pairwise distances; for any $\boldsymbol{P}_1,\boldsymbol{P}_2 \in \mathcal{P}_d$ 
\[
d_{\mathcal{P}_{d}}\left(\Gamma^{+}_{\overline{\boldsymbol{P}}\rightarrow\overline{\boldsymbol{R}}}\left(\boldsymbol{P}_{1}\right),\Gamma^{+}_{\overline{\boldsymbol{P}}\rightarrow\overline{\boldsymbol{R}}}\left(\boldsymbol{P}_{2}\right)\right)=d_{\mathcal{P}_{d}}\left(\boldsymbol{E}\boldsymbol{P}_{1}\boldsymbol{E}^{T},\boldsymbol{E}\boldsymbol{P}_{2}\boldsymbol{E}^{T}\right)=d_{\mathcal{P}_{d}}\left(\boldsymbol{P}_{1},\boldsymbol{P}_{2}\right),
\]
exploiting the fact that $d_{\mathcal{P}_{d}}$ is affine invariant.
Second, the mean of the transported set coincides with the target mean, that is
\begin{align}
    \label{eq:SpdMeanTransportation}
\begin{split}
M\left(\Gamma_{\overline{\boldsymbol{P}}\rightarrow\overline{\boldsymbol{R}}}^{+}\left(\mathcal{X}\right)\right) & =M\left(\boldsymbol{E}\mathcal{X}\boldsymbol{E}^{T}\right)\underset{\left(*\right)}{=}\boldsymbol{E}M\left(\mathcal{X}\right)\boldsymbol{E}^{T}\\
 & =\boldsymbol{E}\overline{\boldsymbol{P}}\boldsymbol{E}^{T}\underset{\left(**\right)}{=}\overline{\boldsymbol{P}}^{\frac{1}{2}}\left(\overline{\boldsymbol{P}}^{-\frac{1}{2}}\overline{\boldsymbol{R}}\overline{\boldsymbol{P}}^{-\frac{1}{2}}\right)\overline{\boldsymbol{P}}^{\frac{1}{2}}=\overline{\boldsymbol{R}}
\end{split}
\end{align}
where in $\left(*\right)$ we use the congruence invariance property of geometric
mean (see \cite{bhatia2009positive}), and in $\left(**\right)$ we use \cref{eq:E}.

Suppose that $\overline{\boldsymbol{R}}$ is the Riemannian mean of another set $\mathcal{Y} = \left\{ \boldsymbol{R}_{i}\in\mathcal{P}_{d}\right\} _{i=1}^{N_{x}}$.
In addition to satisfying \cref{def:isometricTransportation}, the transported set $\{\Gamma_{\overline{\boldsymbol{P}}\rightarrow\overline{\boldsymbol{R}}}^{+}\left(\boldsymbol{P}_{i}\right) | \boldsymbol{P}_i \in \mathcal{X}\}_{i=1}^{N_x}$ coincides with $\mathcal{Y}$ under the conditions specified in the following statement.
\begin{proposition}
Let $\left\{ \boldsymbol{P}_{i}\right\} _{i}$ be a set of points on $\mathcal{P}_d$ with the Riemannian mean $\overline{\boldsymbol{P}}$.
Consider the map $t:\mathcal{P}_d \rightarrow \mathcal{P}_d$ defined by
\[
\boldsymbol{R}_{i}=t(\boldsymbol{P}_i)=\boldsymbol{T}\boldsymbol{P}_{i}\boldsymbol{T}^{T}
\]
where $\boldsymbol{T}\in \mathrm{GL}_{d}$.
Let $\overline{\boldsymbol{R}}$ be the Riemannian mean of the resulting set $\left\{ \boldsymbol{R}_{i}\right\}_i$.
The following holds
\[
\Gamma_{\overline{\boldsymbol{P}}\rightarrow\overline{\boldsymbol{R}}}^{+}\left(\boldsymbol{P}_{i}\right)=\boldsymbol{R}_{i},\qquad\forall i
\]
if and only if $\boldsymbol{T}$ is of the form $\boldsymbol{T}=\overline{\boldsymbol{P}}^{\frac{1}{2}}\boldsymbol{B}\overline{\boldsymbol{P}}^{-\frac{1}{2}}$
where either $\boldsymbol{B}\succ0$ or $\boldsymbol{B}\prec0$.
\end{proposition}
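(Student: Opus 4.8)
The plan is to reduce the matrix identity $\Gamma^{+}_{\overline{\boldsymbol{P}}\to\overline{\boldsymbol{R}}}(\boldsymbol{P}_i)=\boldsymbol{R}_i$ to a transparent condition on a single orthogonal factor, by normalizing everything through the whitening map $\boldsymbol{X}\mapsto\overline{\boldsymbol{P}}^{-\frac12}\boldsymbol{X}\,\overline{\boldsymbol{P}}^{-\frac12}$, which sends $\overline{\boldsymbol{P}}$ to $\boldsymbol{I}$.

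First I would pin down the target mean. Since $\boldsymbol{R}_i=\boldsymbol{T}\boldsymbol{P}_i\boldsymbol{T}^{T}$ is a congruence and the geometric mean is congruence-invariant (the property invoked at $(*)$ in \cref{eq:SpdMeanTransportation}), we get $\overline{\boldsymbol{R}}=\boldsymbol{T}\overline{\boldsymbol{P}}\boldsymbol{T}^{T}$. Writing every invertible $\boldsymbol{T}$ uniquely as $\boldsymbol{T}=\overline{\boldsymbol{P}}^{\frac12}\boldsymbol{B}\overline{\boldsymbol{P}}^{-\frac12}$ with $\boldsymbol{B}:=\overline{\boldsymbol{P}}^{-\frac12}\boldsymbol{T}\overline{\boldsymbol{P}}^{\frac12}\in\mathrm{GL}_d$, a short computation gives $\overline{\boldsymbol{P}}^{-\frac12}\overline{\boldsymbol{R}}\,\overline{\boldsymbol{P}}^{-\frac12}=\boldsymbol{B}\boldsymbol{B}^{T}$, so that, by \cref{eq:E}, $\boldsymbol{E}=\overline{\boldsymbol{P}}^{\frac12}(\boldsymbol{B}\boldsymbol{B}^{T})^{\frac12}\overline{\boldsymbol{P}}^{-\frac12}$.

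Next I would conjugate the target identity $\boldsymbol{E}\boldsymbol{P}_i\boldsymbol{E}^{T}=\boldsymbol{T}\boldsymbol{P}_i\boldsymbol{T}^{T}$ by $\overline{\boldsymbol{P}}^{-\frac12}$ on both sides. With $\widetilde{\boldsymbol{P}}_i:=\overline{\boldsymbol{P}}^{-\frac12}\boldsymbol{P}_i\overline{\boldsymbol{P}}^{-\frac12}$ and $\boldsymbol{S}:=(\boldsymbol{B}\boldsymbol{B}^{T})^{\frac12}\succ0$, the identity becomes $\boldsymbol{S}\widetilde{\boldsymbol{P}}_i\boldsymbol{S}=\boldsymbol{B}\widetilde{\boldsymbol{P}}_i\boldsymbol{B}^{T}$ for all $i$. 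Introducing the polar decomposition $\boldsymbol{B}=\boldsymbol{S}\boldsymbol{W}$ with $\boldsymbol{W}:=\boldsymbol{S}^{-1}\boldsymbol{B}\in\mathcal{O}_d$ and cancelling the invertible $\boldsymbol{S}$ from both sides, the condition collapses to $\boldsymbol{W}\widetilde{\boldsymbol{P}}_i\boldsymbol{W}^{T}=\widetilde{\boldsymbol{P}}_i$ for all $i$, that is, $\boldsymbol{W}$ commutes with every $\widetilde{\boldsymbol{P}}_i$. This is the crux: the whole question is now about the orthogonal polar factor of $\boldsymbol{B}$.

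Finally I would close both directions using uniqueness of the polar decomposition, which gives the equivalence $\boldsymbol{B}$ definite $\iff \boldsymbol{W}=\pm\boldsymbol{I}$ (namely $\boldsymbol{B}\succ0\Rightarrow\boldsymbol{W}=\boldsymbol{I}$ and $\boldsymbol{B}\prec0\Rightarrow\boldsymbol{W}=-\boldsymbol{I}$, and conversely $\boldsymbol{W}=\pm\boldsymbol{I}\Rightarrow\boldsymbol{B}=\pm\boldsymbol{S}$). For the \emph{if} direction this is immediate: $\boldsymbol{W}=\pm\boldsymbol{I}$ commutes with every $\widetilde{\boldsymbol{P}}_i$ (equivalently $\boldsymbol{E}=\pm\boldsymbol{T}$, and the sign is absorbed inside the congruence), so the identity holds. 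I expect the \emph{only if} direction to be the main obstacle: commutation with every $\widetilde{\boldsymbol{P}}_i$ does \emph{not} force $\boldsymbol{W}=\pm\boldsymbol{I}$ for a completely arbitrary set (e.g. if the $\widetilde{\boldsymbol{P}}_i$ share a common invariant subspace, a nontrivial orthogonal $\boldsymbol{W}$ can commute with all of them). The clean equivalence therefore requires the non-degeneracy assumption that the only orthogonal matrices commuting with the whole set $\{\widetilde{\boldsymbol{P}}_i\}$ are $\pm\boldsymbol{I}$ — the generic case — which I would state explicitly; modulo this, $\boldsymbol{W}=\pm\boldsymbol{I}$ follows and hence $\boldsymbol{B}$ is definite.
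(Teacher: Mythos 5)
Your proposal is correct, and its algebraic core coincides with the paper's: both proofs use congruence invariance of the geometric mean to obtain $\overline{\boldsymbol{R}}=\boldsymbol{T}\overline{\boldsymbol{P}}\boldsymbol{T}^{T}$, both compute $\boldsymbol{E}=\overline{\boldsymbol{P}}^{\frac{1}{2}}\left(\overline{\boldsymbol{P}}^{-\frac{1}{2}}\boldsymbol{T}\overline{\boldsymbol{P}}\boldsymbol{T}^{T}\overline{\boldsymbol{P}}^{-\frac{1}{2}}\right)^{\frac{1}{2}}\overline{\boldsymbol{P}}^{-\frac{1}{2}}$, and both reduce the claim to showing that the discrepancy between $\boldsymbol{T}$ and $\boldsymbol{E}$ is a global sign. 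Where you genuinely differ is the \emph{only if} direction. The paper first assumes, ``without loss of generality,'' that $\boldsymbol{I}$ belongs to the set, plugs it in to get $\boldsymbol{E}\boldsymbol{E}^{T}=\boldsymbol{T}\boldsymbol{T}^{T}$, and deduces that $\boldsymbol{U}=\boldsymbol{E}^{-1}\boldsymbol{T}$ is orthogonal; your whitening-plus-polar-decomposition makes that step structural, since $\boldsymbol{W}=\boldsymbol{S}^{-1}\boldsymbol{B}$ is orthogonal by construction, so you never need that assumption (equivalently, one could apply both maps to $\overline{\boldsymbol{P}}$ itself, because $\boldsymbol{E}\overline{\boldsymbol{P}}\boldsymbol{E}^{T}=\overline{\boldsymbol{R}}=\boldsymbol{T}\overline{\boldsymbol{P}}\boldsymbol{T}^{T}$ holds automatically). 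The paper then assumes, again as ``WLOG,'' that some $\boldsymbol{P}_{j}$ in the set has distinct eigenvalues and concludes $\boldsymbol{U}=\pm\boldsymbol{I}$; strictly speaking, that argument only forces $\boldsymbol{U}$ to be a diagonal $\pm 1$ matrix in the eigenbasis of $\boldsymbol{P}_{j}$, and mixed sign patterns are ruled out only by commutation with the \emph{remaining} matrices of the set. Your formulation --- the identity holds if and only if the orthogonal polar factor $\boldsymbol{W}$ commutes with every whitened matrix $\widetilde{\boldsymbol{P}}_{i}$, so the equivalence with definiteness of $\boldsymbol{B}$ holds exactly when the orthogonal commutant of $\{\widetilde{\boldsymbol{P}}_{i}\}_i$ is $\{\pm\boldsymbol{I}\}$ --- isolates the precise non-degeneracy condition. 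Accordingly, the caveat you raise is not a weakness of your argument but a correct diagnosis of a gap in the proposition as stated: for degenerate sets (e.g., all $\boldsymbol{P}_{i}$ sharing a nontrivial orthogonal symmetry) the \emph{only if} direction fails, and the paper's proof smuggles genericity in under two ``WLOG'' claims that are not actually without loss of generality. In short, the paper's route is more concrete and elementary, while yours is cleaner, dispenses with one of the two unjustified assumptions entirely, and states explicitly the hypothesis under which the claim is actually true.
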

See the Supplementary Material (SM) for the proof.

Note that on $\mathcal{P}_d$, one can overload the PT operator $\Gamma_{\overline{\boldsymbol{P}}\to\overline{\boldsymbol{R}}}$ with the manifold transportation $\Gamma_{\overline{\boldsymbol{P}}\to\overline{\boldsymbol{R}}}^{+}$, that is
\begin{equation}
\label{eq:SpdOverload}
\Gamma_{\overline{\boldsymbol{P}}\rightarrow\overline{\boldsymbol{R}}}^{+}\left(\boldsymbol{P}_{i}\right)=\Gamma_{\overline{\boldsymbol{P}}\rightarrow\overline{\boldsymbol{R}}}\left(\boldsymbol{P}_{i}\right)=\boldsymbol{E}\boldsymbol{P}_{i}\boldsymbol{E}^{T}.
\end{equation}

\subsection{$\Gamma^+$ on $\mathcal{G}_{d,r}$}
    \label{sub:grassTransportation}
Let $\mathcal{X}=\left\{ \left[\boldsymbol{Q}\right]_{i}\in\mathcal{G}_{d,r}\right\} _{i=1}^{N_{x}}$ be a set of points on $\mathcal{G}_{d,r}$
with mean $M\left(\mathcal{X}\right)=\left[\overline{\boldsymbol{Q}}\right]$,
and let $\left[\overline{\boldsymbol{V}}\right]\in\mathcal{G}_{d,r}$ be a target mean.
On the Grassmann manifold, we have an equivalent result to \cref{eq:SpdOverload}, giving rise to a closed-form expression of $\Gamma_{\overline{\boldsymbol{Q}}\to\overline{\boldsymbol{V}}}^{+}$.
\begin{proposition}
    \label{prop:GrassOverload}
Let $\overline{\boldsymbol{Q}} \in \left[\overline{\boldsymbol{Q}}\right]$ and $\overline{\boldsymbol{V}}\in\left[\overline{\boldsymbol{V}}\right]$
be two points in $\mathcal{O}_{d}$, such that $\overline{\boldsymbol{V}}=\Pi_{\overline{\boldsymbol{Q}}}\left(\overline{\boldsymbol{V}}\right)$.
Define $\Gamma_{\overline{\boldsymbol{Q}}\to\overline{\boldsymbol{V}}}^{+}:\mathcal{G}_{d,r}\to\mathcal{G}_{d,r}$ by
\begin{equation}
    \label{eq:GrassGammsPlus}
\Gamma_{\overline{\boldsymbol{Q}}\rightarrow\overline{\boldsymbol{V}}}^{+}\left(\boldsymbol{Q}_{i}\right)=\mathrm{Exp}_{\overline{\boldsymbol{V}}}\left(\Gamma_{\overline{\boldsymbol{Q}}\rightarrow\overline{\boldsymbol{V}}}\left(\mathrm{Log}_{\overline{\boldsymbol{Q}}}\left(\boldsymbol{Q}_{i}\right)\right)\right)
\end{equation}
Then
\begin{equation}
\label{eq:GrassPtPlus2}
\Gamma_{\overline{\boldsymbol{Q}}\rightarrow\overline{\boldsymbol{V}}}^{+}\left(\boldsymbol{Q}_{i}\right)\sim\Gamma_{\overline{\boldsymbol{Q}}\rightarrow\overline{\boldsymbol{V}}}\left(\boldsymbol{Q}_{i}\right)=\overline{\boldsymbol{V}}\overline{\boldsymbol{Q}}^{T}\boldsymbol{Q}_{i}
\end{equation}
where $\sim$ is the equivalent class as in \cref{eq:EquivalentClass},
and if $\boldsymbol{Q}_{i}$ is chosen such that $\boldsymbol{Q}_{i}=\Pi_{\overline{\boldsymbol{Q}}}\left(\boldsymbol{Q}_{i}\right)$, then the equivalence becomes equality
\begin{equation}
    \label{eq:GrassPtPlus}
\Gamma_{\overline{\boldsymbol{Q}}\rightarrow\overline{\boldsymbol{V}}}^{+}\left(\boldsymbol{Q}_{i}\right)=\Gamma_{\overline{\boldsymbol{Q}}\rightarrow\overline{\boldsymbol{V}}}\left(\boldsymbol{Q}_{i}\right)=\overline{\boldsymbol{V}}\overline{\boldsymbol{Q}}^{T}\boldsymbol{Q}_{i}
\end{equation}
\end{proposition}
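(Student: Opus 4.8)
The plan is to unwind the definition of $\Gamma_{\overline{\boldsymbol{Q}}\to\overline{\boldsymbol{V}}}^{+}$ in \cref{eq:GrassGammsPlus} one map at a time, using the closed-form logarithmic map, parallel transport, and exponential map on $\mathcal{G}_{d,r}$ recorded in \cref{eq:GrassLogQ,eq:GrassPT,eq:GrassExp}, and then to reconcile the representative produced by this computation with $\overline{\boldsymbol{V}}\,\overline{\boldsymbol{Q}}^{T}\boldsymbol{Q}_{i}$ via the fiber action \cref{eq:g_fiber} and the equivalence relation \cref{eq:EquivalentClass}.

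First I would write $\text{Log}_{\overline{\boldsymbol{Q}}}\left(\boldsymbol{Q}_{i}\right)=\overline{\boldsymbol{Q}}\boldsymbol{B}_{i}^{\text{skew}}$ as in \cref{eq:GrassLogQ}, and invoke the hypothesis $\overline{\boldsymbol{V}}=\Pi_{\overline{\boldsymbol{Q}}}\left(\overline{\boldsymbol{V}}\right)$ to write $\overline{\boldsymbol{V}}=\text{Exp}_{\overline{\boldsymbol{Q}}}\left(\text{Log}_{\overline{\boldsymbol{Q}}}\left(\overline{\boldsymbol{V}}\right)\right)=\overline{\boldsymbol{Q}}\exp\left(\overline{\boldsymbol{B}}^{\text{skew}}\right)$, where $\overline{\boldsymbol{Q}}\,\overline{\boldsymbol{B}}^{\text{skew}}=\text{Log}_{\overline{\boldsymbol{Q}}}\left(\overline{\boldsymbol{V}}\right)$. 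This is precisely the configuration in which the specialized transport formula just below \cref{eq:GrassPT} applies: the geodesic from $\overline{\boldsymbol{Q}}$ to $\overline{\boldsymbol{V}}$ is generated by $\widetilde{\boldsymbol{\Delta}}=\text{Log}_{\overline{\boldsymbol{Q}}}\left(\overline{\boldsymbol{V}}\right)$ with $\overline{\boldsymbol{V}}=\text{Exp}_{\overline{\boldsymbol{Q}}}(\widetilde{\boldsymbol{\Delta}})$, so transporting $\text{Log}_{\overline{\boldsymbol{Q}}}\left(\boldsymbol{Q}_{i}\right)=\overline{\boldsymbol{Q}}\boldsymbol{B}_{i}^{\text{skew}}$ yields $\Gamma_{\overline{\boldsymbol{Q}}\to\overline{\boldsymbol{V}}}\left(\text{Log}_{\overline{\boldsymbol{Q}}}\left(\boldsymbol{Q}_{i}\right)\right)=\overline{\boldsymbol{V}}\boldsymbol{B}_{i}^{\text{skew}}$.

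Next I would apply the exponential map \cref{eq:GrassExp} to this transported vector. Since $\overline{\boldsymbol{V}}\boldsymbol{B}_{i}^{\text{skew}}\in\mathcal{T}_{\overline{\boldsymbol{V}}}\mathcal{G}_{d,r}$ is already in the canonical form $\overline{\boldsymbol{V}}\boldsymbol{B}_{i}^{\text{skew}}$, I obtain $\Gamma_{\overline{\boldsymbol{Q}}\to\overline{\boldsymbol{V}}}^{+}\left(\boldsymbol{Q}_{i}\right)=\text{Exp}_{\overline{\boldsymbol{V}}}\left(\overline{\boldsymbol{V}}\boldsymbol{B}_{i}^{\text{skew}}\right)=\overline{\boldsymbol{V}}\exp\left(\boldsymbol{B}_{i}^{\text{skew}}\right)$. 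The same matrix $\exp\left(\boldsymbol{B}_{i}^{\text{skew}}\right)$ governs the projection \cref{eq:GrassProjectionQ}, namely $\Pi_{\overline{\boldsymbol{Q}}}\left(\boldsymbol{Q}_{i}\right)=\overline{\boldsymbol{Q}}\exp\left(\boldsymbol{B}_{i}^{\text{skew}}\right)$, whence $\exp\left(\boldsymbol{B}_{i}^{\text{skew}}\right)=\overline{\boldsymbol{Q}}^{T}\Pi_{\overline{\boldsymbol{Q}}}\left(\boldsymbol{Q}_{i}\right)$ and therefore $\Gamma_{\overline{\boldsymbol{Q}}\to\overline{\boldsymbol{V}}}^{+}\left(\boldsymbol{Q}_{i}\right)=\overline{\boldsymbol{V}}\,\overline{\boldsymbol{Q}}^{T}\Pi_{\overline{\boldsymbol{Q}}}\left(\boldsymbol{Q}_{i}\right)$.

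The final step, which I expect to be the only delicate one, is to replace $\Pi_{\overline{\boldsymbol{Q}}}\left(\boldsymbol{Q}_{i}\right)$ by $\boldsymbol{Q}_{i}$. Because $\Pi_{\overline{\boldsymbol{Q}}}\left(\boldsymbol{Q}_{i}\right)$ is the representative of $\left[\boldsymbol{Q}_{i}\right]$ closest to $\overline{\boldsymbol{Q}}$, it lies in the same fiber, so by \cref{eq:g_fiber} there is a block-diagonal $\boldsymbol{W}=\diag\left(\boldsymbol{W}_{r},\boldsymbol{W}_{d-r}\right)$ with $\boldsymbol{W}_{r}\in\mathcal{O}_{r}$, $\boldsymbol{W}_{d-r}\in\mathcal{O}_{d-r}$ such that $\Pi_{\overline{\boldsymbol{Q}}}\left(\boldsymbol{Q}_{i}\right)=\boldsymbol{Q}_{i}\boldsymbol{W}$. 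Then $\Gamma_{\overline{\boldsymbol{Q}}\to\overline{\boldsymbol{V}}}^{+}\left(\boldsymbol{Q}_{i}\right)=\overline{\boldsymbol{V}}\,\overline{\boldsymbol{Q}}^{T}\boldsymbol{Q}_{i}\boldsymbol{W}$, and since right-multiplication by such a block-diagonal orthogonal matrix is exactly the fiber action defining $\sim$, I conclude $\Gamma_{\overline{\boldsymbol{Q}}\to\overline{\boldsymbol{V}}}^{+}\left(\boldsymbol{Q}_{i}\right)\sim\overline{\boldsymbol{V}}\,\overline{\boldsymbol{Q}}^{T}\boldsymbol{Q}_{i}$, which is \cref{eq:GrassPtPlus2}. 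When the representative is chosen so that $\boldsymbol{Q}_{i}=\Pi_{\overline{\boldsymbol{Q}}}\left(\boldsymbol{Q}_{i}\right)$, one has $\boldsymbol{W}=\boldsymbol{I}$ and the equivalence collapses to the equality \cref{eq:GrassPtPlus}. The main care needed throughout is to track which identities hold for a fixed $\mathcal{O}_{d}$ representative and which hold only up to the fiber action $\sim$.
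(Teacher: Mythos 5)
Your proposal is correct and follows essentially the same route as the paper's proof: unwind \cref{eq:GrassGammsPlus} via the closed forms \cref{eq:GrassLogQ}, the specialized transport formula below \cref{eq:GrassPT}, and \cref{eq:GrassExp}, arriving at $\overline{\boldsymbol{V}}\exp\left(\boldsymbol{B}_{i}^{\text{skew}}\right)=\overline{\boldsymbol{V}}\,\overline{\boldsymbol{Q}}^{T}\Pi_{\overline{\boldsymbol{Q}}}\left(\boldsymbol{Q}_{i}\right)\sim\overline{\boldsymbol{V}}\,\overline{\boldsymbol{Q}}^{T}\boldsymbol{Q}_{i}$. You are in fact slightly more explicit than the paper, which leaves implicit both why the hypothesis $\overline{\boldsymbol{V}}=\Pi_{\overline{\boldsymbol{Q}}}\left(\overline{\boldsymbol{V}}\right)$ licenses the transport formula and how the fiber element $\boldsymbol{W}$ collapses to $\boldsymbol{I}$ in the equality case \cref{eq:GrassPtPlus}.
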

See the SM for the proof.
We remark that in \cref{eq:GrassGammsPlus}, we apply the map $\Gamma_{\overline{\boldsymbol{Q}}\rightarrow\overline{\boldsymbol{V}}}^{+}$ to a matrix $\boldsymbol{Q}_{i} \in \mathcal{O}_d$ in $\left[ \boldsymbol{Q}_i \right]$, rather than to $\left[ \boldsymbol{Q}_i \right]$. Similarly, the range of the map is also written as if it is in $\mathcal{O}_d$ rather than in $\mathcal{G}_{d,r}$. For simplicity, we continue with this slight abuse of notation throughout the paper.

Using \cref{prop:GrassOverload}, we show that $\Gamma_{\overline{\boldsymbol{Q}}\to\overline{\boldsymbol{V}}}^{+}$ satisfies the properties of \cref{def:isometricTransportation}.
First, from \eqref{eq:GrassPtPlus2}, since $\overline{\boldsymbol{Q}},\overline{\boldsymbol{V}}\in\mathcal{O}_{d}$, $\Gamma_{\overline{\boldsymbol{Q}}\to\overline{\boldsymbol{V}}}^{+}$ is a unitary transformation, and therefore it preserves the pairwise distances. That is
\[
d_{\mathcal{G}}\left(\Gamma_{\overline{\boldsymbol{Q}}\rightarrow\overline{\boldsymbol{V}}}^{+}\left(\boldsymbol{Q}_{1}\right),\Gamma_{\overline{\boldsymbol{Q}}\rightarrow\overline{\boldsymbol{V}}}^{+}\left(\boldsymbol{Q}_{2}\right)\right)=d_{\mathcal{G}}\left(\boldsymbol{Q}_{1},\boldsymbol{Q}_{2}\right).
\]
Second, the mean of the transported set coincides with the target mean. Namely
\begin{equation}
    \label{eq:GrassMeanTransportation}
M\left(\Gamma_{\overline{\boldsymbol{Q}}\rightarrow\overline{\boldsymbol{V}}}^{+}\left(\mathcal{X}\right)\right)=M\left(\overline{\boldsymbol{V}}\overline{\boldsymbol{Q}}^{T}\mathcal{X}\right)\underset{\left(*\right)}{=}\overline{\boldsymbol{V}}\overline{\boldsymbol{Q}}^{T}M\left(\mathcal{X}\right)=\overline{\boldsymbol{V}}\overline{\boldsymbol{Q}}^{T}\overline{\boldsymbol{Q}}=\overline{\boldsymbol{V}}
\end{equation}
where $\left(*\right)$ is due to the fact that the mean of a rotated set is the rotated mean (see \cite{bonnabel2013rank}).

In the spirit of the three steps comprising $\Gamma^{+}$, the maps $\Gamma_{\overline{\boldsymbol{P}}\to\overline{\boldsymbol{R}}}^{+}$ on the $\mathcal{P}_d$ and $\Gamma_{\overline{\boldsymbol{Q}}\to\overline{\boldsymbol{V}}}^{+}$ on $\mathcal{G}_{d,r}$ can be recast as three steps defined on the respective manifolds rather than via the tangent planes. On $\mathcal{P}_d$, using \cref{eq:SpdPtPlus} and \cref{eq:E} we can write
\begin{equation}
\Gamma_{\overline{\boldsymbol{P}}\rightarrow\overline{\boldsymbol{R}}}^{+}=f_{\mathcal{P}_d}^{-1}\circ g_{\mathcal{P}_d}\circ f_{\mathcal{P}_d}
\end{equation}
where
\[
f_{\mathcal{P}_d}\left(\boldsymbol{P}_{i}\right)=\overline{\boldsymbol{P}}^{-\frac{1}{2}}\boldsymbol{P}_{i}\overline{\boldsymbol{P}}^{-\frac{1}{2}}
\]
and
\[
g_{\mathcal{P}_d}\left(\boldsymbol{P}_{i}\right)=\left(\overline{\boldsymbol{P}}^{-\frac{1}{2}}\overline{\boldsymbol{R}}\overline{\boldsymbol{P}}^{-\frac{1}{2}}\right)^{\frac{1}{2}}\boldsymbol{P}_{i}\left(\overline{\boldsymbol{P}}^{-\frac{1}{2}}\overline{\boldsymbol{R}}\overline{\boldsymbol{P}}^{-\frac{1}{2}}\right)^{\frac{1}{2}}
\]
Similarly on $\mathcal{G}_{d,r}$, using \cref{eq:GrassPT} we can write
\begin{equation}
    \Gamma_{\overline{\boldsymbol{Q}}\rightarrow\overline{\boldsymbol{V}}}^{+}=f_{\mathcal{G}_{d,r}}^{-1}\circ g_{\mathcal{G}_{d,r}}\circ f_{\mathcal{G}_{d,r}}
\end{equation}
where
\[
f_{\mathcal{G}_{d,r}}\left(\boldsymbol{Q}_{i}\right)=\overline{\boldsymbol{Q}}^{T}\boldsymbol{Q}_{i}
\]
and
\[
g_{\mathcal{G}_{d,r}}\left(\boldsymbol{Q}_{i}\right)=\overline{\boldsymbol{Q}}^{T}\overline{\boldsymbol{V}}\boldsymbol{Q}_{i}
\]
On both manifolds, the map $f$ transports the cloud of points to the identity (so the new mean is $\boldsymbol{I}$), and the map $g$ transports the cloud from the identity to ``$\overline{\boldsymbol{R}}$ over $\overline{\boldsymbol{P}}$'', namely $\overline{\boldsymbol{P}}^{-\frac{1}{2}}\overline{\boldsymbol{R}}\overline{\boldsymbol{P}}^{-\frac{1}{2}}$, on $\mathcal{P}_d$, and to ``$\overline{\boldsymbol{V}}$ over $\overline{\boldsymbol{Q}}$'', namely $\overline{\boldsymbol{Q}}^{-1}\overline{\boldsymbol{V}} = \overline{\boldsymbol{Q}}^{T}\overline{\boldsymbol{V}}$ (since $\overline{\boldsymbol{Q}} \in \mathcal{O}_d$), on $\mathcal{G}_{d,r}$. Finally, $f^{-1}$ maps the cloud to $\overline{\boldsymbol{R}}$ (on $\mathcal{P}_d$) and $\overline{\boldsymbol{V}}$ (on $\mathcal{G}_{d,r}$).

\section{Transportation on $\mathcal{S}_{d,r}^+$}
    \label{sec:SpsdTransportation}
In this section we derive a transportation $\widetilde{\Gamma}^{+}:\mathcal{S}_{d,r}^{+}\rightarrow\mathcal{S}_{d,r}^{+}$ in a similar manner to $\Gamma^{+}$ on $\mathcal{P}_d$ and $\mathcal{G}_{d,r}$, which are presented in \cref{sub:SpdTransportation} and \cref{sub:grassTransportation}, respectively.
Given two points $\overline{\boldsymbol{C}},\overline{\boldsymbol{Y}} \in \mathcal{S}_{d,r}^{+}$, let $\widetilde{\Gamma}^{+}:\mathcal{S}_{d,r}^{+}\rightarrow\mathcal{S}_{d,r}^{+}$ be a composition of three steps: (i) projection to the tangent space $\mathcal{T}_{\overline{\boldsymbol{C}}}\mathcal{S}_{d,r}^{+}$, (ii) transportation between the two tangent spaces $\mathcal{T}_{\overline{\boldsymbol{C}}}\mathcal{S}_{d,r}^{+}\to\mathcal{T}_{\overline{\boldsymbol{Y}}}\mathcal{S}_{d,r}^{+}$, and (iii) projection back from $\mathcal{T}_{\overline{\boldsymbol{Y}}}\mathcal{S}_{d,r}^{+}$ to the manifold $\mathcal{S}_{d,r}^{+}$. 
The implementation used in \cref{sub:SpdTransportation} and \cref{sub:grassTransportation} comprises the logarithmic map, PT and the exponential map. However, the logarithmic and the exponential maps in $\mathcal{S}_{d,r}^{+}$ have no explicit expressions, and there are no existing numerical methods to compute these operators. Therefore, here we propose approximations of these operators, which in turn facilitate the construction of a transportation $\widetilde{\Gamma}^{+}:\mathcal{S}_{d,r}^{+}\rightarrow\mathcal{S}_{d,r}^{+}$, which could be viewed as the counterpart of ${\Gamma}^{+}$ from \cref{sec:PTandGrassTransportation} on $\mathcal{S}_{d,r}^{+}$.
However, in contrast to ${\Gamma}^{+}$, since there is no known method to compute the geodesic distance on $\mathcal{S}_{d,r}^+$, $\widetilde{\Gamma}^{+}$ is not guaranteed to admit the isometry property in \cref{def:isometricTransportation}. Nevertheless, we will show that $\widetilde{\Gamma}^{+}$ is useful for DA, similarly to ${\Gamma}^{+}$ on $\mathcal{P}_d$ \cite{yair2019parallel}.


\subsection{Operations on $\mathcal{S}^+_{d,r}$} 

The maps $\Gamma^+$ on $\mathcal{P}_d$ and $\mathcal{G}_{d,r}$ require the exponential and the logarithmic maps as well as PT, which are derived from the geodesics. 
Here, we present approximations to the logarithmic map{\color{black}, to the exponential map and to PT} on $\mathcal{S}_{d,r}^{+}$ based on the approximation of the geodesic given in \cref{eq:SpsdGeodesic}. The presented approximations make use of the structure space representation introduced in \cref{sub:SPSD}, that is, any SPSD matrix $\boldsymbol{C}_{i}\in\mathcal{S}_{d,r}^{+}$ can be represented as $\boldsymbol{C}_{i}\cong\left(\boldsymbol{G}_{i},\boldsymbol{P}_{i}\right)$ where $\boldsymbol{G}_{i}\in\mathcal{V}_{d,r}$ and $\boldsymbol{P}_{i}\in\mathcal{P}_{r}$.

Formally, given the curve $\widetilde{\gamma}_{\left(\boldsymbol{G}_{1},\boldsymbol{P}_{1}\right)\rightarrow\left(\boldsymbol{G}_{2},\boldsymbol{P}_{2}\right)}\left(t\right)$ between the two points $\left(\boldsymbol{G}_{1},\boldsymbol{P}_{1}\right)$ and $\left(\boldsymbol{G}_{2},\boldsymbol{P}_{2}\right)$ such that $\boldsymbol{G}_{2}=\Pi_{\boldsymbol{G}_{1}}\left(\boldsymbol{G}_{2}\right)$ as in \cref{eq:SpsdGeodesic}, we define an approximate of the logarithmic map $\widetilde{\text{L}}_{\left(\boldsymbol{G}_{1},\boldsymbol{P}_{1}\right)}:\mathcal{S}_{d,r}^{+}\rightarrow\mathcal{T}_{\left(\boldsymbol{G}_{1},\boldsymbol{P}_{1}\right)}\mathcal{S}_{d,r}^{+}$ using the derivative of $\widetilde{\gamma}$ (rather than the geodesic)
\begin{align}
    \label{eq:SpsdLogMap}
\begin{split}
\widetilde{\text{L}}_{\left(\boldsymbol{G}_{1},\boldsymbol{P}_{1}\right)}\left(\boldsymbol{G}_{2},\boldsymbol{P}_{2}\right) & =\dot{\widetilde{\gamma}}_{\left(\boldsymbol{G}_{1},\boldsymbol{P}_{1}\right)\rightarrow\left(\boldsymbol{G}_{2},\boldsymbol{P}_{2}\right)}\left(0\right)\\
 & =\left(\dot{\gamma}_{\boldsymbol{G}_{1}\rightarrow\boldsymbol{G}_{2}}^{\mathcal{G}}\left(0\right),\dot{\gamma}_{\boldsymbol{P}_{1}\rightarrow\boldsymbol{P}_{2}}^{\mathcal{P}}\left(0\right)\right)\\
 & =\left(\text{Log}_{\boldsymbol{G}_{1}}\left(\boldsymbol{G}_{2}\right),\text{Log}_{\boldsymbol{P}_{1}}\left(\boldsymbol{P}_{2}\right)\right)
\end{split}
\end{align}
Accordingly, an approximate of the exponential map $\widetilde{\text{E}}_{\left(\boldsymbol{G}_{1},\boldsymbol{P}_{1}\right)}:\mathcal{T}_{\left(\boldsymbol{G}_{1},\boldsymbol{P}_{1}\right)}\mathcal{S}_{d,r}^{+}\rightarrow\mathcal{S}_{d,r}^{+}$, which is the inverse map of the approximate logarithmic map, is given by
\begin{equation}
\widetilde{\text{E}}_{\left(\boldsymbol{G}_{1},\boldsymbol{P}_{1}\right)}\left(\boldsymbol{\Delta},\boldsymbol{S}\right)=\widetilde{\text{L}}^{-1}\left(\boldsymbol{\Delta},\boldsymbol{S}\right)=\left(\text{Exp}_{\boldsymbol{G}_{1}}\left(\boldsymbol{\Delta}\right),\text{Exp}_{\boldsymbol{P}_{1}}\left(\boldsymbol{S}\right)\right)
\end{equation}
where $\left(\boldsymbol{\Delta},\boldsymbol{S}\right) \in \mathcal{T}_{\left(\boldsymbol{G}_{1},\boldsymbol{P}_{1}\right)}\mathcal{S}_{d,r}^{+}$.

{\color{black}
Lastly, we define the transport $\widetilde{\Gamma}_{\left(\boldsymbol{G}_{1},\boldsymbol{P}_{1}\right)\rightarrow\left(\boldsymbol{G}_{2},\boldsymbol{P}_{2}\right)}:\mathcal{T}_{\left(\boldsymbol{G}_{1},\boldsymbol{P}_{1}\right)}\mathcal{S}_{d,r}^{+}\rightarrow\mathcal{T}_{\left(\boldsymbol{G}_{2},\boldsymbol{P}_{2}\right)}\mathcal{S}_{d,r}^{+}$ by
\begin{equation}
\widetilde{\Gamma}_{\left(\boldsymbol{G}_{1},\boldsymbol{P}_{1}\right)\rightarrow\left(\boldsymbol{G}_{2},\boldsymbol{P}_{2}\right)}\left(\boldsymbol{\Delta},\boldsymbol{S}\right):=\left(\Gamma_{\boldsymbol{G}_{1}\rightarrow\boldsymbol{G}_{2}}\left(\boldsymbol{\Delta}\right),\Gamma_{\boldsymbol{P}_{1}\rightarrow\boldsymbol{P}_{2}}\left(\boldsymbol{S}\right)\right)
\end{equation}
for any $\left(\boldsymbol{\Delta},\boldsymbol{S}\right) \in \mathcal{T}_{\left(\boldsymbol{G}_{1},\boldsymbol{P}_{1}\right)}\mathcal{S}_{d,r}^{+}$.
By definition, the transport $\widetilde{\Gamma}_{\left(\boldsymbol{G}_{1},\boldsymbol{P}_{1}\right)\rightarrow\left(\boldsymbol{G}_{2},\boldsymbol{P}_{2}\right)}$
preserves inner product since
\begin{align*}
 & \left\langle \widetilde{\Gamma}_{\left(\boldsymbol{G}_{1},\boldsymbol{P}_{1}\right)\rightarrow\left(\boldsymbol{G}_{2},\boldsymbol{P}_{2}\right)}\left(\boldsymbol{\Delta}_{1},\boldsymbol{S}_{1}\right),\widetilde{\Gamma}_{\left(\boldsymbol{G}_{1},\boldsymbol{P}_{1}\right)\rightarrow\left(\boldsymbol{G}_{2},\boldsymbol{P}_{2}\right)}\left(\boldsymbol{\Delta}_{2},\boldsymbol{S}_{2}\right)\right\rangle _{\left(\boldsymbol{G}_{2},\boldsymbol{P}_{2}\right)}=\\
 & =\left\langle \left(\Gamma_{\boldsymbol{G}_{1}\rightarrow\boldsymbol{G}_{2}}\left(\boldsymbol{\Delta}_{1}\right),\Gamma_{\boldsymbol{P}_{1}\rightarrow\boldsymbol{P}_{2}}\left(\boldsymbol{S}_{1}\right)\right),\left(\Gamma_{\boldsymbol{G}_{1}\rightarrow\boldsymbol{G}_{2}}\left(\boldsymbol{\Delta}_{2}\right),\Gamma_{\boldsymbol{P}_{1}\rightarrow\boldsymbol{P}_{2}}\left(\boldsymbol{S}_{2}\right)\right)\right\rangle _{\left(\boldsymbol{G}_{2},\boldsymbol{P}_{2}\right)}\\
 & =\left\langle \Gamma_{\boldsymbol{G}_{1}\rightarrow\boldsymbol{G}_{2}}\left(\boldsymbol{\Delta}_{1}\right),\Gamma_{\boldsymbol{G}_{1}\rightarrow\boldsymbol{G}_{2}}\left(\boldsymbol{\Delta}_{2}\right)\right\rangle _{\boldsymbol{G}_{2}}+k\left\langle \Gamma_{\boldsymbol{P}_{1}\rightarrow\boldsymbol{P}_{2}}\left(\boldsymbol{S}_{1}\right),\Gamma_{\boldsymbol{P}_{1}\rightarrow\boldsymbol{P}_{2}}\left(\boldsymbol{S}_{2}\right)\right\rangle _{\boldsymbol{P}_{2}}\\
 & =\left\langle \boldsymbol{\Delta}_{1},\boldsymbol{\Delta}_{2}\right\rangle _{\boldsymbol{G}_{1}}+k\left\langle \boldsymbol{S}_{1},\boldsymbol{S}_{2}\right\rangle _{\boldsymbol{P}_{1}}\\
 & =\left\langle \left(\boldsymbol{\Delta}_{1},\boldsymbol{S}_{1}\right),\left(\boldsymbol{\Delta}_{2},\boldsymbol{S}_{2}\right)\right\rangle _{\left(\boldsymbol{G}_{1},\boldsymbol{P}_{1}\right)}
\end{align*}
In addition, we have that
\begin{align*}
 & \widetilde{\Gamma}_{\left(\boldsymbol{G}_{1},\boldsymbol{P}_{1}\right)\rightarrow\left(\boldsymbol{G}_{2},\boldsymbol{P}_{2}\right)}\left(\dot{\widetilde{\gamma}}_{\left(\boldsymbol{G}_{1},\boldsymbol{P}_{1}\right)\rightarrow\left(\boldsymbol{G}_{2},\boldsymbol{P}_{2}\right)}\left(0\right)\right)=\\
 & =\left(\Gamma_{\boldsymbol{G}_{1}\rightarrow\boldsymbol{G}_{2}}\left(\dot{\gamma}_{\boldsymbol{G}_{1}\rightarrow\boldsymbol{G}_{2}}^{\mathcal{G}}\left(0\right)\right),\Gamma_{\boldsymbol{P}_{1}\rightarrow\boldsymbol{P}_{2}}\left(\dot{\gamma}_{\boldsymbol{G}_{1}\rightarrow\boldsymbol{G}_{2}}^{\mathcal{G}}\left(0\right)\right)\right)\\
 & =\left(\dot{\gamma}_{\boldsymbol{G}_{1}\rightarrow\boldsymbol{G}_{2}}^{\mathcal{G}}\left(1\right),\dot{\gamma}_{\boldsymbol{G}_{1}\rightarrow\boldsymbol{G}_{2}}^{\mathcal{G}}\left(1\right)\right)\\
 & =\dot{\widetilde{\gamma}}_{\left(\boldsymbol{G}_{1},\boldsymbol{P}_{1}\right)\rightarrow\left(\boldsymbol{G}_{2},\boldsymbol{P}_{2}\right)}\left(1\right)
\end{align*}
Thus, $\widetilde{\Gamma}_{\left(\boldsymbol{G}_{1},\boldsymbol{P}_{1}\right)\rightarrow\left(\boldsymbol{G}_{2},\boldsymbol{P}_{2}\right)}$ can be viewed as an approximation of PT.
}

Seemingly, equipped with the three operations defined above, we are ready for the construction of the transportation $\widetilde{\Gamma}^{+}$ on $\mathcal{S}_{d,r}^{+}$. 
However, the expression in \cref{eq:SpsdLogMap} is true only when $\boldsymbol{G}_{2}=\Pi_{\boldsymbol{G}_{1}}\left(\boldsymbol{G}_{2}\right)$.
Therefore, in \cref{sub:Canonical} we verify that this condition is met, and in \cref{sub:gamma_tilde_spsd} we present the construction of $\widetilde{\Gamma}^{+}$.

\subsection{Canonical representation}
    \label{sub:Canonical}
Let $\mathcal{C}=\left\{ \boldsymbol{C}_{i}\in\mathcal{S}_{d,r}^{+}\bigg|\boldsymbol{C}_{i}\cong\left(\boldsymbol{U}_{i},\boldsymbol{T}_{i}\right)\right\} _{i=1}^{N}$ be a set of SPSD matrices with some arbitrary structure space representation $\boldsymbol{C}_{i}\cong\left(\boldsymbol{U}_{i},\boldsymbol{T}_{i}\right)$.
Recall that the structure space representation is not unique, that is $\boldsymbol{C}_{i}\cong\left(\boldsymbol{U}_{i},\boldsymbol{T}_{i}\right)\cong\left(\boldsymbol{U}_{i}\boldsymbol{O}_i^{T},\boldsymbol{O}_i\boldsymbol{T}_{i}\boldsymbol{O}_i^T\right)$ for any $\boldsymbol{O}_i\in\mathcal{O}_{r}$.
Let $\left[\overline{\boldsymbol{G}}\right]=M\left(\left\{ \left[\boldsymbol{U}_{i}\right]\right\} \right)$ be the Grassmann mean of the ranges of the SPSD matrices $\left\{ \boldsymbol{C}_{i}\right\}$.
In this subsection, we reduce the number of degrees of freedom stemming from the possible arbitrary choice of $\left\{ \boldsymbol{O}_i \right\}_{i=1}^N$ by fixing a representative matrix $\overline{\boldsymbol{G}}\in\left[\overline{\boldsymbol{G}}\right]$ (any representative will do)
and deriving a structure space representation $\boldsymbol{C}_{i}\cong\left(\boldsymbol{G}_{i},\boldsymbol{P}_{i}\right)$ such that $\boldsymbol{G}_{i}=\Pi_{\overline{\boldsymbol{G}}}\left(\boldsymbol{G}_{i}\right)$. 
This particular structure space representation allows us to apply \cref{eq:SpsdLogMap} to the set $\mathcal{C}$.

Formally, let $\overline{\boldsymbol{C}}\in \mathcal{S}_{d,r}^{+}$ be the Riemannian mean of $\mathcal{C}$ on $\mathcal{S}_{d,r}^{+}$, and let $\overline{\boldsymbol{G}}\in\left[\overline{\boldsymbol{G}}\right]$ where both means can be obtained by \cref{alg:SpsdMean}.
Set $\overline{\boldsymbol{P}}=\overline{\boldsymbol{G}}^{T}\overline{\boldsymbol{C}}\overline{\boldsymbol{G}}\in \mathcal{P}_{r}$ such that $\overline{\boldsymbol{C}}\cong \left(\boldsymbol{\overline{G}},\overline{\boldsymbol{P}}\right)$,
and consider the following canonical representation
\begin{equation}
    \label{eq:Canonical}
\boldsymbol{C}_{i}\cong \left(\boldsymbol{G}_{i},\boldsymbol{P}_{i}\right)
\end{equation}
where $\boldsymbol{G}_{i}=\Pi_{\overline{\boldsymbol{G}}}\left(\boldsymbol{U}_{i}\right)$, 
and $\boldsymbol{P}_{i}=\boldsymbol{G}_{i}^{T}\boldsymbol{C}_{i}\boldsymbol{G}_{i}$.
This `alignment' procedure is illustrated in \cref{fig:CanonicalRepresentation}.
\begin{figure}
    \centering
    \includegraphics[width=0.4\columnwidth]{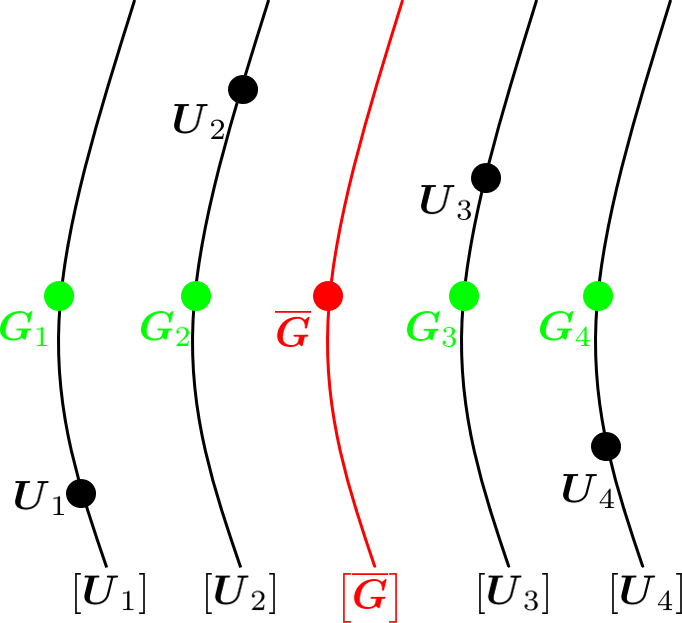}
    \caption{Illustration of the canonical representation.
    $\left[\overline{\boldsymbol{G}}\right]$ is the mean of the set $\left\{ \left[\boldsymbol{U}_{i}\right]\right\} $. We peak a representative $\overline{\boldsymbol{G}}$ (red point) and rotate each $\boldsymbol{U}_i$ to $\boldsymbol{G}_{i}$ (green points) such that $\boldsymbol{G}_{i}=\Pi_{\overline{\boldsymbol{G}}}\left(\boldsymbol{G}_{i}\right)=\Pi_{\overline{\boldsymbol{G}}}\left(\boldsymbol{U}_{i}\right)$.
     }
    \label{fig:CanonicalRepresentation}
\end{figure}

The proposed canonical representation \cref{eq:Canonical} admits the following properties.
First, since $\boldsymbol{G}_{i}\sim\boldsymbol{U}_{i}$ for all $i$, the
mean $\overline{\boldsymbol{G}}$ of $\{\boldsymbol{U}_i\}_i$ coincides with the mean of $\{\boldsymbol{G}_i\}_i$, namely, $\left[\overline{\boldsymbol{G}}\right]=M\left(\left\{ \left[\boldsymbol{U}_{i}\right]\right\} \right)=M\left(\left\{ \left[\boldsymbol{G}_{i}\right]\right\} \right)$.
Second, as shown in \cite{bonnabel2013rank}, it follows from \cref{alg:SpsdMean} that the mean of $\left\{ \boldsymbol{P}_{i}\right\} $ is indeed $\overline{\boldsymbol{P}}=M\left(\left\{ \boldsymbol{P}_{i}\right\} _{i}\right)$.

In addition, after applying the approximate logarithmic map $\widetilde{\mathrm{L}}$ \cref{eq:SpsdLogMap} to the set $\mathcal{C}$, the vectors in the tangent space $\mathcal{T}_{\left(\overline{\boldsymbol{G}},\overline{\boldsymbol{P}}\right)}\mathcal{S}_{d,r}^{+}$ are centered around the origin.
This stems from the above two properties. Specifically, taking the arithmetic mean of vectors obtained by applying $\widetilde{\mathrm{L}}$ to the canonical representations $\{(\boldsymbol{G}_i,\boldsymbol{P}_i)_i\}$ results in
\begin{equation}
\frac{1}{N}
\sum_{i=1}^{N}\widetilde{\text{L}}_{\left(\overline{\boldsymbol{G}},\overline{\boldsymbol{P}}\right)}\left(\boldsymbol{G}_{i},\boldsymbol{P}_{i}\right)=\frac{1}{N}\left(\sum_{i=1}^{N}\log_{\overline{\boldsymbol{G}}}\left(\boldsymbol{G}_{i}\right),\sum_{i=1}^{N}\log_{\overline{\boldsymbol{P}}}\left(\boldsymbol{P}_{i}\right)\right)=\left(\boldsymbol{0},\boldsymbol{0}\right)
\end{equation}
since the arithmetic means of the vectors in $\mathcal{T}_{\overline{\boldsymbol{G}}}\mathcal{G}_{d,r}$ and $\mathcal{T}_{\overline{\boldsymbol{P}}}\mathcal{P}_r$, namely $\sum_{i=1}^{N}\log_{\overline{\boldsymbol{G}}}\left(\boldsymbol{G}_{i}\right)$ and $\sum_{i=1}^{N}\log_{\overline{\boldsymbol{P}}}\left(\boldsymbol{P}_{i}\right)$, lie at the origin, because $\overline{\boldsymbol{G}} = M\left( \{ [\boldsymbol{G_i} ] \} \right)$ and $\overline{\boldsymbol{P}} = M\left( \{ \boldsymbol{P_i} \} \right)$, respectively.
Importantly, since the obtained vectors are centered at the origin, they can provide an approximation of the points $\left(\boldsymbol{G}_{i},\boldsymbol{P}_{i}\right)$ in a linear vector space, a fact that will be exploited in practice in \cref{sec:Experiments}.


\subsection{$\widetilde{\Gamma}^+$ on $\mathcal{S}^+_{d,r}$}
\label{sub:gamma_tilde_spsd}

Let $\widetilde{\Gamma}_{\overline{\boldsymbol{C}}\rightarrow\overline{\boldsymbol{Y}}}^{+}\left(\boldsymbol{C}_{i}\right)$ denote a transport of the set $\mathcal{C}$ from its mean $\overline{\boldsymbol{C}}\cong\left(\boldsymbol{\overline{\boldsymbol{G}},\overline{\boldsymbol{P}}}\right)$ to a new center of mass $\overline{\boldsymbol{Y}}\cong\left(\boldsymbol{\overline{\boldsymbol{V}},\overline{\boldsymbol{R}}}\right)$. 
Suppose $\overline{\boldsymbol{C}}\cong\left(\boldsymbol{\overline{\boldsymbol{G}},\overline{\boldsymbol{P}}}\right)$ is the canonical representation and suppose that $\overline{\boldsymbol{V}}=\Pi_{\overline{\boldsymbol{G}}}\left(\overline{\boldsymbol{V}}\right)$ and $\overline{\boldsymbol{R}}=\overline{\boldsymbol{V}}^{T}\overline{\boldsymbol{Y}}\overline{\boldsymbol{V}}$.
With the above preparation, define
\begin{align}
    \label{eq:SpsdGammaPlus}
\begin{split}
\widetilde{\Gamma}_{\overline{\boldsymbol{C}}\rightarrow\overline{\boldsymbol{Y}}}^{+}\left(\boldsymbol{C}_{i}\right) & \cong\widetilde{\Gamma}_{\left(\overline{\boldsymbol{G}},\overline{\boldsymbol{P}}\right)\rightarrow\left(\overline{\boldsymbol{V}},\overline{\boldsymbol{R}}\right)}^{+}\left(\boldsymbol{G}_{i},\boldsymbol{P}_{i}\right)\\
 &{\color{black} \coloneqq\widetilde{\text{E}}_{\left(\overline{\boldsymbol{V}},\overline{\boldsymbol{R}}\right)}\left(\widetilde{\Gamma}_{\left(\overline{\boldsymbol{G}},\overline{\boldsymbol{P}}\right)\rightarrow\left(\overline{\boldsymbol{V}},\overline{\boldsymbol{R}}\right)}\left(\widetilde{L}_{\left(\overline{\boldsymbol{G}},\overline{\boldsymbol{P}}\right)}\left(\left(\boldsymbol{G}_{i},\boldsymbol{P}_{i}\right)\right)\right)\right)}\\
 & =\left(\Gamma_{\overline{\boldsymbol{G}}\rightarrow\overline{\boldsymbol{V}}}^{+}\left(\boldsymbol{G}_{i}\right),\Gamma_{\overline{\boldsymbol{P}}\rightarrow\overline{\boldsymbol{R}}}^{+}\left(\boldsymbol{P}_{i}\right)\right)\\
 & =\left(\overline{\boldsymbol{O}}\overline{\boldsymbol{Q}}^{T}\boldsymbol{G}_{i},\boldsymbol{E}\boldsymbol{P}_{i}\boldsymbol{E}^{T}\right)\\
 & \cong\overline{\boldsymbol{O}}\overline{\boldsymbol{Q}}^{T}\boldsymbol{G}_{i}\boldsymbol{E}\boldsymbol{P}_{i}\boldsymbol{E}^{T}\boldsymbol{G}_{i}^{T}\overline{\boldsymbol{Q}}\overline{\boldsymbol{O}}^{T}
\end{split}
\end{align}
where $\boldsymbol{C}_{i}\cong\left(\boldsymbol{G}_{i},\boldsymbol{P}_{i}\right)$ is the canonical representation (see \cref{eq:Canonical}),  $\overline{\boldsymbol{Q}}$ is an orthogonal completion of $\overline{\boldsymbol{G}}$,
 $\overline{\boldsymbol{O}}$ is an orthogonal completion of $\overline{\boldsymbol{V}}$
such that $\overline{\boldsymbol{O}}=\Pi_{\overline{\boldsymbol{Q}}}\left(\overline{\boldsymbol{O}}\right)$ (see \cref{eq:GrassProjectionQ}),
and $\boldsymbol{E}=\left(\overline{\boldsymbol{R}}\overline{\boldsymbol{P}}^{-1}\right)^{\frac{1}{2}}$.

Since $M\left(\left\{ \boldsymbol{O}\boldsymbol{G}_{i}\right\} _{i}\right)=\overline{\boldsymbol{V}}$
and $M\left(\left\{ \boldsymbol{E}\boldsymbol{P}_{i}\boldsymbol{E}^{T}\right\} _{i}\right)=\overline{\boldsymbol{R}}$
(see \cref{eq:SpdMeanTransportation} and \cref{eq:GrassMeanTransportation}, respectively), the mean of the transported set $\widetilde{\Gamma}^{+}\left(\mathcal{C}\right)$
is indeed $\overline{\boldsymbol{Y}}\cong\left(\overline{\boldsymbol{V}},\overline{\boldsymbol{R}}\right)$ satisfying property 2 in \cref{def:isometricTransportation}.
\section{Domain adaptation on $\mathcal{S}^+_{d,r}$}
\label{sec:DA}


Consider two sets $\mathcal{X}=\left\{ x_{i}\in\mathcal{M}\right\} _{i=1}^{N_{x}}$ and $\mathcal{Y}=\left\{ y_{j}\in\mathcal{M}\right\} _{j=1}^{N_{y}}$, on a Riemannian manifold $\mathcal{M}$.
Suppose the two sets are intrinsically homogeneous but concentrated on different parts of the manifold, and thus, the two sets can be viewed as if they live in different domains.
To concur with the literature on DA, we will call $\mathcal{X}$ the \emph{source} domain and $\mathcal{Y}$ the \emph{target} domain. 
Multiple factors can contribute to this nuisance difference between the sets, such as different acquisition systems, sensing equipment, environments, and configurations, to name but a few.
The goal of DA is to mitigate the difference between the two sets and to provide a new representation of their union, such that any subsequent processing and analysis applied to the union could be unaware of their original partition and could treat them as one homogeneous set.

For this purpose of DA, when the data lie on $\mathcal{P}_d$, it was shown that applying $\Gamma^+$ to one set, transporting the data from its Riemannian mean to the Riemannian mean of the other set, gives rise to the desired outcome \cite{yair2019parallel}.
Analogously, when the data lie on $\mathcal{G}_{d,r}$, one could apply $\Gamma^+$ on $\mathcal{G}_{d,r}$ in a similar manner, and arguably, could expect similar performance as $\Gamma^+$ on $\mathcal{P}_d$, since the two variants of $\Gamma^+$ satisfy the two properties of \cref{def:isometricTransportation}, making them useful for DA. Namely, on $\mathcal{P}_d$ and on $\mathcal{G}_{d,r}$, the transportation is ``rigid'', that is, $\Gamma^+$ preserves pairwise distances and ``matches'' the means.
{\color{black} In fact, one could view the DA carried out by $\Gamma^{+}$ merely as the appropriate Riemannian counterpart of the a Euclidean mean subtraction. Remarkably, despite its simplicity, we will show empirically that such a DA gives rise to significantly improved results.}

Here, we propose to utilize $\widetilde{\Gamma}^+$ derived in \cref{sub:gamma_tilde_spsd} for DA on $\mathcal{S}^+_{d,r}$.
As discussed in \cref{sec:SpsdTransportation}, there is no definitive way to compute the pairwise distances on $\mathcal{S}_{d,r}^{+}$, and therefore, $\widetilde{\Gamma}^+$ is not guaranteed to admit property 1 of \cref{def:isometricTransportation}. Nevertheless, it does satisfy property 2, and, as we show in \cref{sec:Experiments}, it indeed facilitates a useful DA on the manifold of SPSD matrices.

Let $\overline{\boldsymbol{X}}$ and $\overline{\boldsymbol{Y}}$ be the Riemannian means of $\mathcal{X}=\left\{ \boldsymbol{X}_{i}\in\mathcal{S}^+_{d,r}\right\} _{i=1}^{N_{x}}$ and $\mathcal{Y}=\left\{ \boldsymbol{Y}_{j}\in\mathcal{S}^+_{d,r}\right\} _{j=1}^{N_{y}}$, respectively.
Let $\overline{\boldsymbol{X}}\cong\left(\overline{\boldsymbol{G}},\overline{\boldsymbol{P}}\right)$
and $\boldsymbol{X}_{i}\cong\left(\boldsymbol{G}_{i},\boldsymbol{P}_{i}\right)$ be the canonical representation of the mean and the SPSD matrices in $\mathcal{X}$ in the structure space as in \cref{eq:Canonical}.
In addition, let $\overline{\boldsymbol{V}}\in\mathcal{V}_{d,r}$
be the Grassmann mean of the ranges of the SPSD matrices in $\mathcal{Y}$ such that $\overline{\boldsymbol{V}}=\Pi_{\overline{\boldsymbol{G}}}\left(\overline{\boldsymbol{V}}\right)$. 
Denote
\[
\overline{\boldsymbol{R}}\coloneqq\overline{\boldsymbol{V}}^{T}\overline{\boldsymbol{Y}}\overline{\boldsymbol{V}}
\]
so that
\[
\overline{\boldsymbol{Y}}\cong\left(\overline{\boldsymbol{V}},\overline{\boldsymbol{R}}\right)
\]
These representations led in \cref{eq:SpsdGammaPlus} to the following explicit form of $\widetilde{\Gamma}_{\overline{\boldsymbol{X}}\rightarrow\overline{\boldsymbol{Y}}}^{+}$
\[
\widetilde{\Gamma}_{\overline{\boldsymbol{X}}\rightarrow\overline{\boldsymbol{Y}}}^{+}\left(\boldsymbol{X}_{i}\right)=\overline{\boldsymbol{O}}\overline{\boldsymbol{Q}}^{T}\boldsymbol{G}_{i}\boldsymbol{E}\boldsymbol{P}_{i}\boldsymbol{E}^{T}\boldsymbol{G}_{i}^{T}\overline{\boldsymbol{Q}}\overline{\boldsymbol{O}}^{T}
\]

With the above preparation, the proposed DA algorithm culminates in the application of $\widetilde{\Gamma}_{\overline{\boldsymbol{X}}\rightarrow\overline{\boldsymbol{Y}}}^{+}$ to every SPSD matrix $\boldsymbol{X}_i$ in $\mathcal{X}$, obtaining a new representation $\widetilde{\boldsymbol{X}}_i$. The complete DA algorithm is given in \cref{alg:SPSD_DA} and the source code is available in\footnote{\href{https://github.com/oryair/Symmetric-Positive-Semi-definite-Riemannian-Geometry-with-Application-to-Domain-Adaptation}{SpsdDomainAdaptation GitHub}}.


The proposed algorithm has several important features for DA. (i) The algorithm does not require many data points, because the computation of $\widetilde{\Gamma}^+$ only depends on coarse estimates of the two means $\overline{\boldsymbol{X}}=M\left(\mathcal{X}\right)$ and $\overline{\boldsymbol{Y}}=M\left(\mathcal{Y}\right)$.
(ii) Once $\widetilde{\Gamma}^+$ is computed, it can be applied to new unseen data. Let $\boldsymbol{X}^{\star}$ be a new (unseen) point obtained from the source domain, then $\widetilde{\boldsymbol{X}}^{\star}=\widetilde{\Gamma}_{\overline{\boldsymbol{X}}\to\overline{\boldsymbol{Y}}}^{+}\left(\boldsymbol{X}^{\star}\right)$ is the corresponding out of sample extension.
(iii) The extension of the algorithm to multiple data sets is straight-forward.
When more than two sets are given, one set can be designated as a reference (target) set, and then, all the remaining (source) sets are transported (one-by-one) to that reference set.
Recall that $\widetilde{\Gamma}^{+}$ is completely unsupervised, namely, no labels are required. Hence, any set can be chosen as the reference set.

One shortcoming of the algorithm is that it makes use only of the first order statistics. Namely, if two source sets $\mathcal{X}_{1}$ and $\mathcal{X}_{2}$
have different high order statistics but share the
same mean $\overline{\boldsymbol{X}}_{1}=\overline{\boldsymbol{X}}_{2}$, the transportations $\widetilde{\Gamma}_{\overline{\boldsymbol{X}}_1\rightarrow\overline{\boldsymbol{Y}}}^{+}$ and $\widetilde{\Gamma}_{\overline{\boldsymbol{X}}_2\rightarrow\overline{\boldsymbol{Y}}}^{+}$ (to $\overline{\boldsymbol{Y}}$) are identical.
For large data sets, where higher order statistics can be accurately estimated, 
we outline two possible modifications.
First, based on \cite{maman2019domain}, the proposed algorithm can be supplemented with a second moments alignment step.
Specifically, recall that $\widetilde{\Gamma}^{+}$ is a composition of three steps: (i) projection to the tangent space, (ii) application of PT, and (iii) projection back to the manifold.
Let $\left\{ \left(\widetilde{\boldsymbol{\Delta}}_{i},\widetilde{\boldsymbol{S}}_{i}\right)\in\mathcal{T}_{\left(\overline{\boldsymbol{V}},\overline{\boldsymbol{R}}\right)}\mathcal{S}_{d,r}^{+}\right\} _{i=1}^{N_{x}}$ be the tangent vectors obtained after step (ii).
Instead of projecting back to the manifold at step (iii), we propose to project the target set $\mathcal{Y}$ to $\mathcal{T}_{\left(\overline{\boldsymbol{V}},\overline{\boldsymbol{R}}\right)}\mathcal{S}_{d,r}^{+}$ as well. Now, the sets (the transported source set and the projected target set) are points in a vector space. This allow us to rotate the source set $\left\{ \left(\widetilde{\boldsymbol{\Delta}}_{i},\widetilde{\boldsymbol{S}}_{i}\right)\in\mathcal{T}_{\left(\overline{\boldsymbol{V}},\overline{\boldsymbol{R}}\right)}\mathcal{S}_{d,r}^{+}\right\} _{i=1}^{N_{x}}$ such that its second moments are aligned with the second moments of the target set.
In order to overcome the ambiguity in the orientation of the rotation, in
\cite{maman2019domain}, it was proposed to rotate each axis according to the smaller angle.

The second modification is based on \cite{courty2016optimal} and \cite{yair2019optimal}, where DA is carried out by solving a regularized optimal transport problem \cite{courty2014domain}.
There, the cost is based on the (squared) length of the curve $\widetilde{\gamma}_{\boldsymbol{X}_{i}\to\boldsymbol{Y}_{j}}$, and the transportation is applied using a weighted mean. Since the length of the curve $\widetilde{\gamma}$ is not a metric and the weighted mean on $\mathcal{S}_{d,r}^{+}$ needs to be developed, we postpone the development of this transportation to future work. We note that such a transformation does not aim to preserve pairwise distances, but rather, to align the respective distributions of the two sets $\mathcal{X}$ and $\mathcal{Y}$.


\begin{algorithm}
\caption{Domain Adaptation on $\mathcal{S}_{d,r}^{+}$}
    \label{alg:SPSD_DA}

\textbf{\uline{Input}}\textbf{:} Two sets of SPSD matrices $\mathcal{X}=\left\{ \boldsymbol{X}_{i}\in\mathcal{S}_{d,r}^{+}\right\} _{i=1}^{N_{x}}$
and $\mathcal{Y}=\left\{ \boldsymbol{Y}_{i}\in\mathcal{S}_{d,r}^{+}\right\} _{i=1}^{N_{y}}$.

\textbf{\uline{Output}}\textbf{:} The set $\widetilde{\mathcal{X}}=\left\{ \widetilde{\boldsymbol{X}}_{i}\in\mathcal{S}_{d,r}^{+}\right\} _{i=1}^{N_{x}}$ adapted to the domain of $\mathcal{Y}$.
\begin{enumerate}
\item Obtain the means of $\mathcal{X}$ and $\mathcal{Y}$:
\begin{enumerate}
\item Compute the SPSD and Grassmann means of $\mathcal{X}$ and $\mathcal{Y}$
\hfill $\rhd$ using \cref{alg:SpsdMean}
\begin{enumerate}
\item $\overline{\boldsymbol{X}}\in\mathcal{S}_{d,r}^{+}$ and $\overline{\boldsymbol{G}}\in\mathcal{V}_{d,r}$
for $\mathcal{X}$
\item $\overline{\boldsymbol{Y}}\in\mathcal{S}_{d,r}^{+}$ and $\overline{\boldsymbol{V}}\in\mathcal{V}_{d,r}$
for $\mathcal{Y}$
\end{enumerate}
\item Set $\overline{\boldsymbol{V}}\leftarrow\Pi_{\overline{\boldsymbol{G}}}\left(\overline{\boldsymbol{V}}\right)$ 
\hfill $\rhd$ using \cref{eq:GrassProjectionG}
\item Set $\overline{\boldsymbol{P}}=\overline{\boldsymbol{G}}^{T}\overline{\boldsymbol{X}}\overline{\boldsymbol{G}}$
and $\overline{\boldsymbol{R}}=\overline{\boldsymbol{V}}^{T}\overline{\boldsymbol{Y}}\overline{\boldsymbol{V}}$
\end{enumerate}

\item For $i=1,2\dots,N_{x}$:
\begin{enumerate}
\item Compute the canonical representation: $\boldsymbol{X}_{i}\cong\left(\boldsymbol{G}_{i},\boldsymbol{P}_{i}\right)$ such that $\boldsymbol{G}_{i}=\Pi_{\overline{\boldsymbol{G}}}\left(\boldsymbol{G}_{i}\right)$
$\rhd$ see \cref{eq:Canonical}

\item Compute ($\rhd$ see \cref{eq:SpsdGammaPlus}):
\[
\left(\widetilde{\boldsymbol{G}}_{i},\widetilde{\boldsymbol{P}}_{i}\right)=\widetilde{\Gamma}_{\left(\overline{\boldsymbol{G}},\overline{\boldsymbol{P}}\right)\rightarrow\left(\overline{\boldsymbol{V}},\overline{\boldsymbol{R}}\right)}^{+}\left(\boldsymbol{G}_{i},\boldsymbol{P}_{i}\right)=\left(\overline{\boldsymbol{O}}\overline{\boldsymbol{Q}}^{T}\boldsymbol{G}_{i},\boldsymbol{E}\boldsymbol{P}_{i}\boldsymbol{E}^{T}\right)
\]
where $\overline{\boldsymbol{Q}}$ is an orthogonal completion of $\overline{\boldsymbol{G}}$, $\overline{\boldsymbol{O}}$ is an orthogonal completion of $\overline{\boldsymbol{V}}$ such that $\overline{\boldsymbol{O}}=\Pi_{\overline{\boldsymbol{Q}}}\left(\overline{\boldsymbol{O}}\right)$ (see \cref{eq:GrassProjectionQ}),
and $\boldsymbol{E}=\left(\overline{\boldsymbol{R}}\overline{\boldsymbol{P}}^{-1}\right)^{\frac{1}{2}}$.
\item Set:
\[
\widetilde{\boldsymbol{X}}_{i}=\widetilde{\boldsymbol{G}}_{i}\widetilde{\boldsymbol{P}}_{i}\widetilde{\boldsymbol{G}}_{i}^{T}
\]
\end{enumerate}
\end{enumerate}
\end{algorithm}

\section{Experimental study}
    \label{sec:Experiments}
\subsection{Hyper-spectral imaging}
    \label{sub:HSI}
To demonstrate our algorithm for DA we apply it to a real hyper-spectral dataset. Hyper-Spectral Imaging (HSI) measures multiple spectral bands of the light reflected from a spatial area. Recent technological advances allow for the acquisition of hundreds of spectral bands which encode rich information on the captured scene. Therefore, a large and growing number of studies have addressed the challenge of analyzing and processing hyper-spectral images for various purposes, such as classification \cite{deng2018active,fang2018new,gross2019nonlinear}, change detection \cite{wu2017kernel} and target detection \cite{kang2017hyperspectral}.

A well known problem in analyzing two or more hyper-spectral images is 
the inherent diversity between different images.
This diversity could be the result of differences in illumination, viewing angle, sensor configuration, and even the type of sensors. 
In order to analyze two different images, or to exploit the model learned from
one image, say, $\boldsymbol{I}^{(1)}$, for analysis tasks in another image, say, $\boldsymbol{I}^{(2)}$, DA is required. 
We next explain how we apply our approach to this purpose. 

Consider a hyper-spectral image organized in a $3D$ cube $\boldsymbol{I} \in \mathbb{R}^{n_x \times n_y \times n_b}$, which is also referred as the hyper-spectral cube, where $n_x$ and $n_y$ are the spatial dimensions and $n_b$ is the number of spectral bands.
Let $\boldsymbol{p}_{i} \in \mathbb{R}^{n_b}$, $i=1,2,\ldots,n_x n_y$, be the $i_{th}$ pixel of $\boldsymbol{I}$ representing a local spectral signature. 
Recently, it was shown in \cite{fang2018new} that a good spatial-spectral feature of $\boldsymbol{p}_{i}$, which expresses the relations between the spectral bands, is the local covariance matrix $\boldsymbol{X}_{i} \in \mathbb{R}^{n_b \times n_b}$, given by
\begin{equation}
\boldsymbol{X}_{i}=\frac{1}{\left|\mathcal{N}_{i} \right|-1}\sum_{\boldsymbol{p}_{j}\in\mathcal{N}_{i}}\left(\boldsymbol{p}_{j}-\boldsymbol{\mu}_{i}\right)\left(\boldsymbol{p}_{j}-\boldsymbol{\mu}_{i}\right)^{T}
\label{eq:sample_cov}
\end{equation}
where $\mathcal{N}_{i}$ are the $J$ nearest neighbors of $\boldsymbol{p}_{i}$ from all the pixels in a patch of size $W\times W$ centered at $\boldsymbol{p}_i$, and $\boldsymbol{\mu}_i = \frac{1}{\left|\mathcal{N}_{i} \right|}\sum_{\boldsymbol{p}_{j}\in\mathcal{N}_{i}}\boldsymbol{p}_j$. The nearest neighbors are chosen with respect to the angular (cosine) similarity
$$
\theta_{ij}=\arccos\left(\frac{\boldsymbol{p}_{i}^{T}\boldsymbol{p}_{j}}{\left\Vert \boldsymbol{p}_{i}\right\Vert _{2}\left\Vert \boldsymbol{p}_{j}\right\Vert _{2}}\right)
$$
A common assumption in HSI is that the spectral signature $\boldsymbol{p}_i$ is a linear combination of a small number $r < n_b$ of spectral profiles \cite{zhang2013hyperspectral,iordache2011sparse}.
According to this assumption, $\boldsymbol{X}_{i}$ is an SPSD matrix with rank $r$.
Therefore, we can use our approach to adapt two (or more) hyper-spectral images $\boldsymbol{I}^{\left(1\right)}\in\mathbb{R}^{n_{x}^{\left(1\right)}\times n_{y}^{\left(1\right)}\times n_{b}}$ and $\boldsymbol{I}^{\left(2\right)}\in\mathbb{R}^{n_{x}^{\left(2\right)}\times n_{y}^{\left(2\right)}\times n_{b}}$ as follows:
(i) Compute the local covariance matrices $\mathcal{X}^{(1)}=\{\boldsymbol{X}^{(1)}_{i}  \in \mathcal{S}_{n_b,r}^{+}\}_{i=1}^{n_x^{(1)} n_y^{(1)}}$, of pixels from $\boldsymbol{I}^{(1)}$ and the local covariance matrices $\mathcal{X}^{(2)}=\{\boldsymbol{X}^{(2)}_{i}  \in \mathcal{S}_{n_b,r}^{+}\}_{i=1}^{n_x^{(2)} n_y^{(2)}}$, of pixels from $\boldsymbol{I}^{(2)}$. (ii) Transport $\mathcal{X}^{\left(1\right)}$ to the domain of $\mathcal{X}^{\left(2\right)}$ by applying \cref{alg:SPSD_DA} (giving rise to $\widetilde{\mathcal{X}}^{\left(1\right)}$).

We apply our method for the purpose of adapting hyper-spectral images of the same scene but with different time of acquisition, taken from the Greding dataset \cite{gross2019nonlinear}.
After removing rows and columns with non-valid pixels, the dimensions of the images are $n_x=626$, $n_y=591$ and $n_b = 127$. \cref{fig:Greding_RGB} shows an RGB representation (3 channels) of two images from the Greding dataset: \texttt{Greding\_Village1\_refl}, denoted by $\boldsymbol{I}^{(1)}$, and \texttt{Greding\_Village3\_refl}, denoted by $\boldsymbol{I}^{(2)}$. It can be visually observed that at least the illumination in these two images is different.

For the local covariance computation in \eqref{eq:sample_cov}, we use the same parameters as in \cite{fang2018new}: patch size $W=25$, number of neighbor pixels $J=220 $, and we set the rank to be $r=40$ because empirically it attains good performance. A similar rank was reported in \cite{fang2018new}. To reduce the computational load of \cref{alg:SPSD_DA}, we use subsets of 500 matrices $\mathcal{X}_s^{(1)} \subset \mathcal{X}^{(1)}$ and $\mathcal{X}_s^{(2)} \subset \mathcal{X}^{(2)}$ chosen randomly for the  mean computation in step 1(a), instead of the entire sets. 

In order to use a feasible amount of memory and to obtain a representation in a linear space, after applying \cref{alg:SPSD_DA} we represent each SPSD matrix only by its $40$ Principal Components (PCs) computed as follows. 
First, we compute the logarithmic map approximation \eqref{eq:SpsdLogMap} for each $ \boldsymbol{X}_i \in  \mathcal{X}^{(1)} \cup \mathcal{X}^{(2)}$, and get the corresponding vector in the tangent space $\left(\boldsymbol{\Delta}_i,\boldsymbol{S}_i\right) \in \mathcal{T}_{\overline{\boldsymbol{X}}}\mathcal{S}_{n_b,r}$, where $\overline{\boldsymbol{X}}$ is the mean of $\mathcal{X}_s^{(1)} \cup \mathcal{X}_s^{(2)}$. Second, we represent each tangent vector $\left(\boldsymbol{\Delta}_i,\boldsymbol{S}_i\right)$ by a column stack of $\boldsymbol{\Delta}_i$ and $k \boldsymbol{S}_i$, 
denoted by $\xi_i$,
where $k$ was set such that the (global) standard deviation of $\left\{ \boldsymbol{\Delta}_{i}\right\} _{i}$ is the same as $\left\{ k\boldsymbol{S}_{i}\right\} _{i}$. Third, we compute the $40$ principal directions of the vectors in the tangent space by (i) applying SVD to the matrix whose columns are $\xi_i$ and (ii) taking the left singular vectors corresponding to the largest singular values. Fourth, we project each vector $\xi_i$ on the obtained $40$ principal directions, getting a new vector in $\mathbb{R}^{40}$ consisting of the $40$ PCs.

\begin{figure}
\centering
\subcaptionbox{}
{\includegraphics[width=0.35\paperwidth]{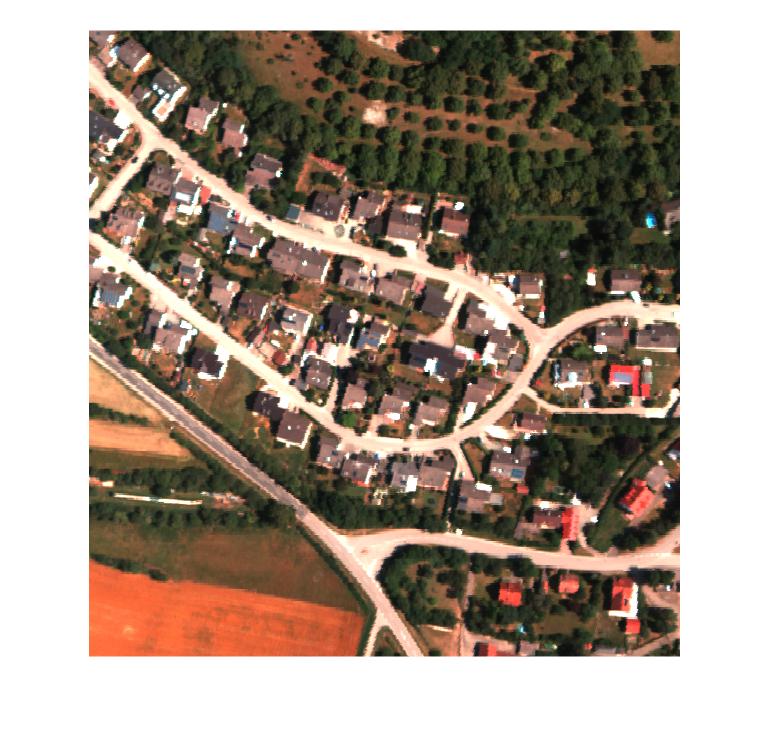}}
\subcaptionbox{}
{\includegraphics[width=0.35\paperwidth]{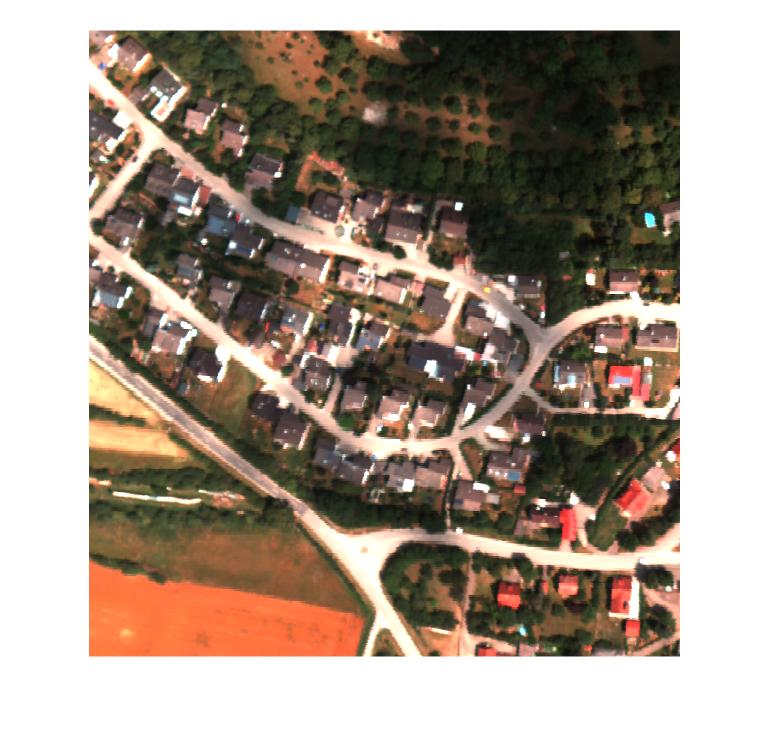}}
\caption{RGB representation (3 channels) of the two hyperspectral images $\boldsymbol{I}^{(1)}$ and $\boldsymbol{I}^{(2)}$, from the Greding dataset \cite{gross2019nonlinear}.
}
\label{fig:Greding_RGB}
\end{figure}

The obtained DA is evaluated by using $6$ land-cover labels from the Greding dataset: {\tt dark roof}, {\tt red roof}, {\tt concrete}, {\tt soil}, {\tt grass}, and {\tt tree}. We denote the set of local covariance matrices in $\boldsymbol{I}^{(1)}$ after applying \cref{alg:SPSD_DA} by $\mathcal{\widetilde{X}}^{(1)}$. \cref{fig:Greding_scat}(a) and \cref{fig:Greding_scat}(b) show the two PCs of $400$ matrices from $\mathcal{{X}}^{(1)}\cup\mathcal{X}^{(2)}$ and $\mathcal{\widetilde{X}}^{(1)}\cup\mathcal{X}^{(2)}$ respectively, where points are colored according to the land-cover labels. As can be seen in \cref{fig:Greding_scat}(a), points from $\mathcal{{X}}^{(1)}$, marked by circles, and points from $\mathcal{{X}}^{(2)}$, marked by asterisks, with the same label (color) reside in different regions. Conversely, in \cref{fig:Greding_scat}(b), after applying \cref{alg:SPSD_DA}, we observe that points both from $\mathcal{\widetilde{X}}^{(1)}$ and from $\mathcal{X}^{(2)}$ with the same label lie at the same region.

To evaluate our method numerically, we repeat the experiment reported in \cite{gross2019nonlinear}.
We train a linear SVM classifier on 10\% of the $40$ PCs of the local covariance matrices in $\mathcal{\widetilde{X}}^{(1)}$ and test it on the $40$ PCs of the local covariance matrices in $\mathcal{{X}}^{(2)}$. 
We remark that pixels at the boundaries and pixels without enough valid neighbors for the covariance estimation are ignored. We consider only pixels with at least 500 valid neighbors, which applies to 95\% of the labeled pixels.
\cref{fig:Greding_im_class} illustrates the classification results in the image plain, where pixels are colored according to their predicted class.
We use the following Cohen's kappa \cite{cohen1960coefficient} to objectively evaluate the classification results
$$
\kappa = \frac{p_o - p_e}{1 - p_e}
$$
where $p_o$ is the classification accuracy and $p_e$ is given by
$$
p_e = \frac{1}{N^2}\sum_k n^{(T)}_k n^{(P)}_k
$$
where $N$ is the number of observations to be classified, $n^{(T)}_k$ and $n^{(P)}_k$ is the true and predicted number of observations in class $k$, respectively.
After applying  \cref{alg:SPSD_DA}, the obtained Cohen's kappa  of the SVM classifier is $\kappa= 0.957 \pm 0.006$, where $0.957$ is the mean $\kappa$ over $10$ repetitions and $0.006$ is the standard deviation, while without DA it is only $\kappa = 0.575 \pm 0.041$. 
\cref{kappa_tab} compares our results to other DA algorithms reported in \cite{gross2019nonlinear}. We note that according to the reported setting in \cite{gross2019nonlinear}, the STCA, KEMA, GFK and NFNalign algorithms used the reflectance mode for image $\boldsymbol{I}^{(1)}$ and the radiance mode for $\boldsymbol{I}^{(2)}$, while the re-normalization and our algorithm used the reflectance mode for both images.

\begin{table}
\centering
\caption{Comparison of the classification results after applying different DA algorithms. The column ``Unsupervised'' indicates that the algorithm does not require labels. The column ``Unpaired'' indicates that the algorithm is not restricted to images which are defined on a common grid. The column ``Generic'' indicates that the algorithm could be used for different datasets and is not specifically-tailored for HSI.   \label{kappa_tab}}
\begin{tabular}{ p{3cm}|P{2.4cm}|P{2cm}|P{2.4cm}|>{\raggedright\arraybackslash}P{2.4cm} }
 Algorithm & Unsupervised & Unpaired & Generic & $\kappa $ (SVM)\\
 \hline
 NFNalign          &         & $\surd$ &         & 0.975 \\
 re-normalization  & $\surd$ &         & $\surd$ & 0.942 \\
 STCA              &         & $\surd$ &         & 0.901 \\
 KEMA              &         & $\surd$ & $\surd$ & 0.932 \\ 
 GFK               & $\surd$ & $\surd$ & $\surd$ & 0.920 \\
 \textbf{Proposed} & $\surd$ & $\surd$ & $\surd$ & 0.957 $\pm$ 0.006 \\
 \hline
\end{tabular}
\end{table}

\begin{figure}
\centering
\subcaptionbox{}
{\includegraphics[width=0.35\paperwidth]{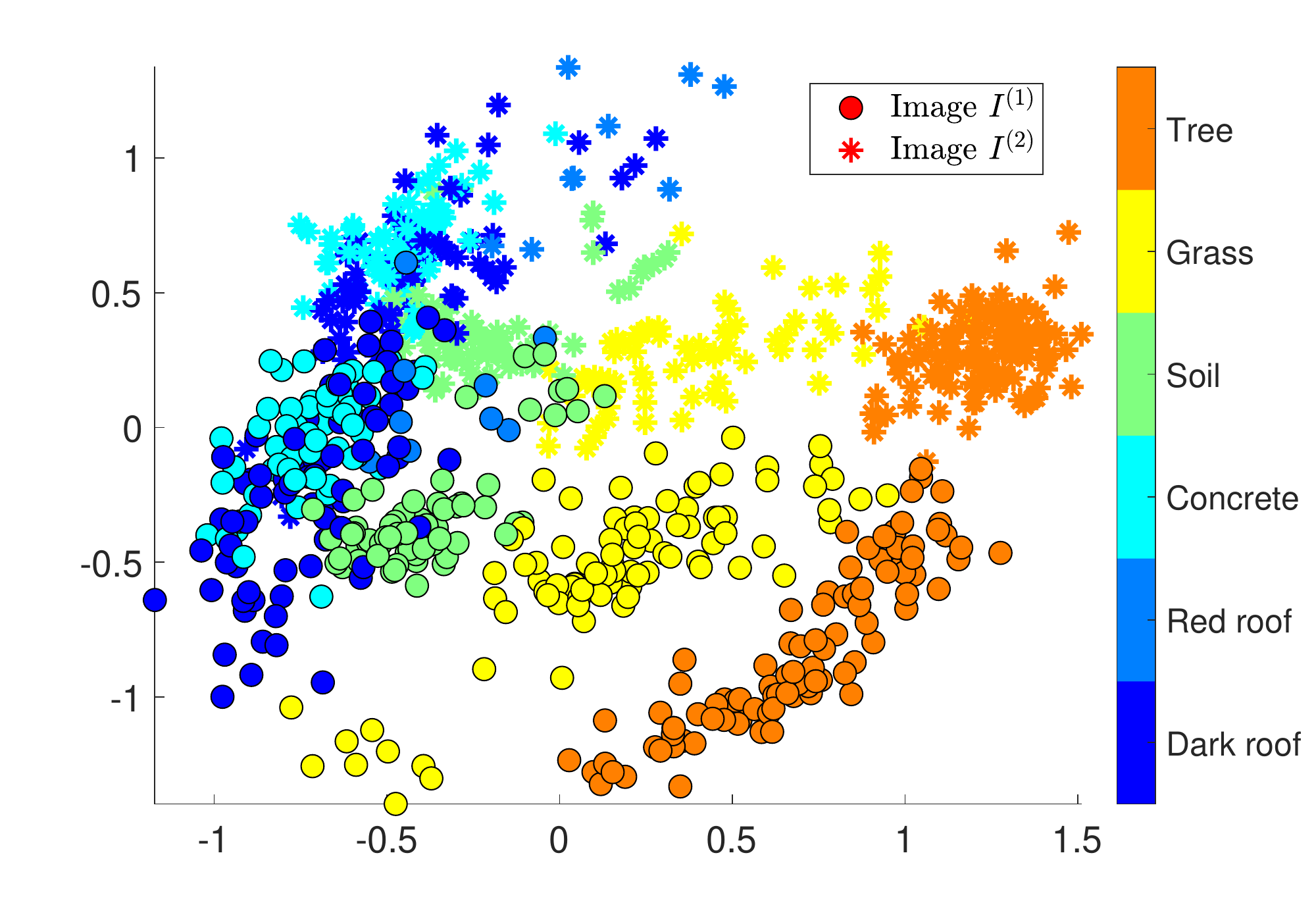}}
\subcaptionbox{}
{\includegraphics[width=0.35\paperwidth]{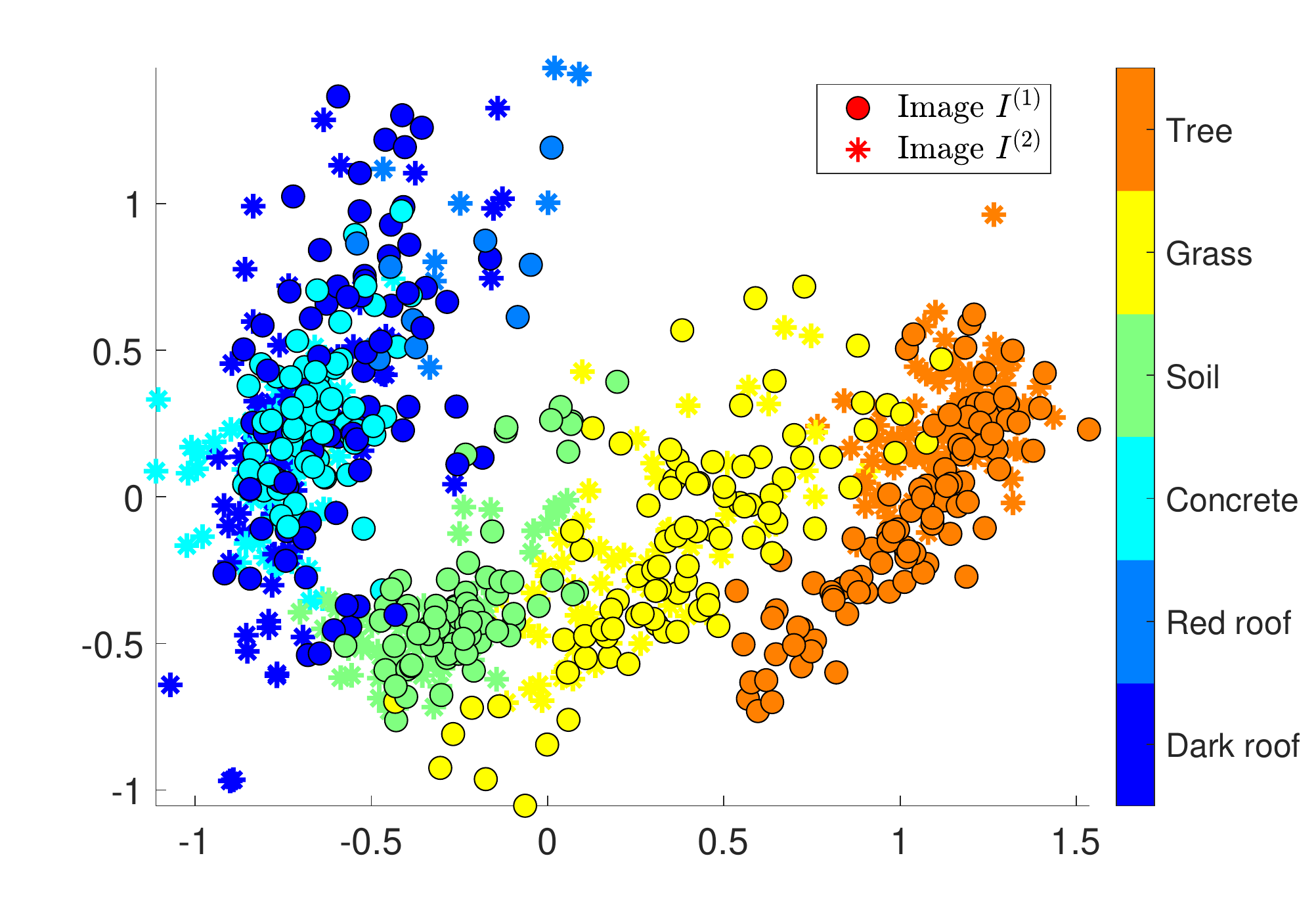}}
\caption{First 2 PC of 400 covariance matrices. Matrices computed in image $\boldsymbol{I}^{(1)}$ marked by circles, and matrices computed in image $\boldsymbol{I}^{(2)}$ marked by asterisks: (a) before DA, (b) after DA using \cref{alg:SPSD_DA} .Points are colored according to the land-cover classes.}\label{fig:Greding_scat}
\end{figure}
\begin{figure}
\centering
\subcaptionbox{}
{\includegraphics[width=0.233\paperwidth]{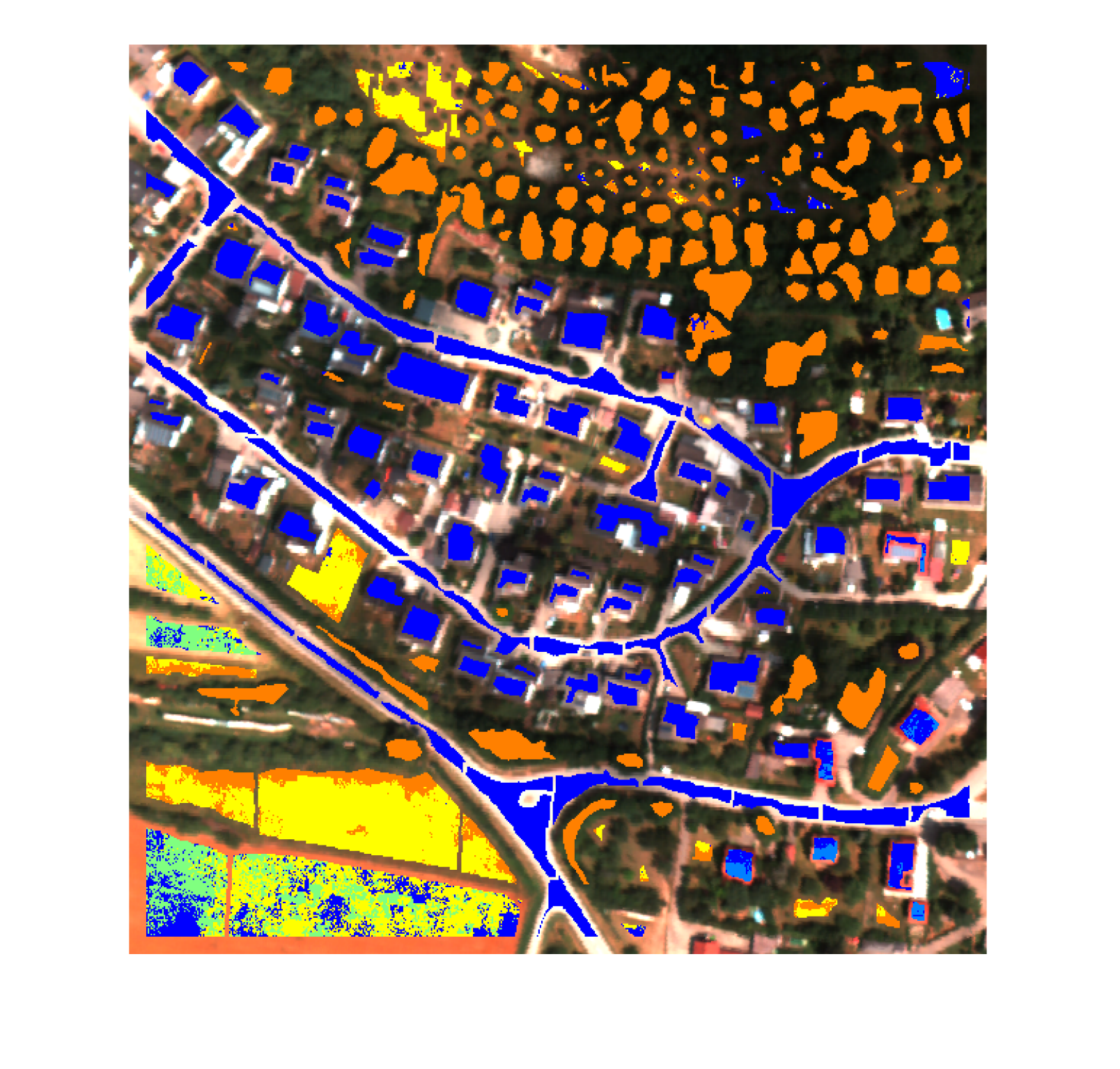}}
\subcaptionbox{}
{\includegraphics[width=0.233\paperwidth]{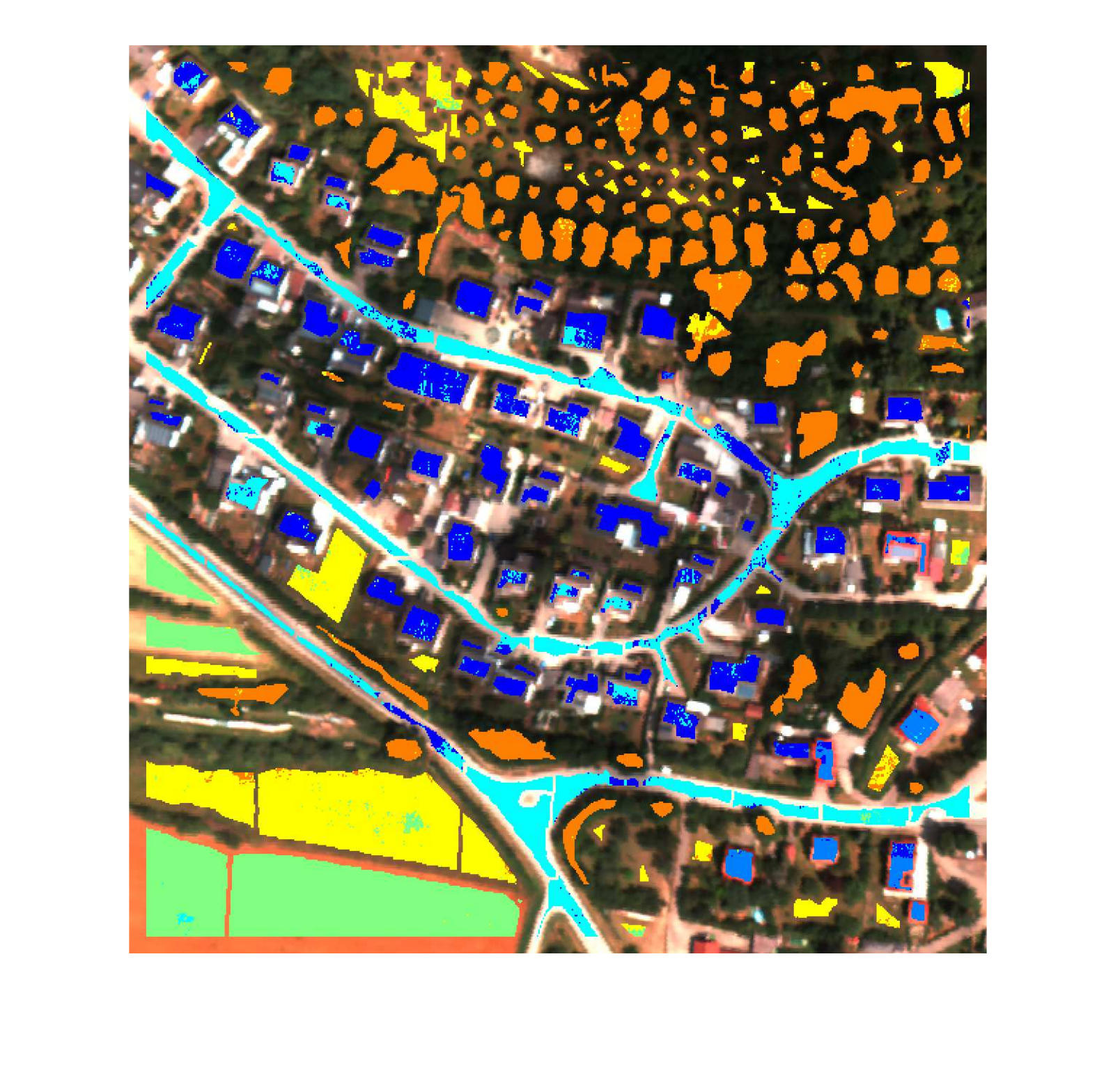}}
\subcaptionbox{}
{\includegraphics[width=0.233\paperwidth]{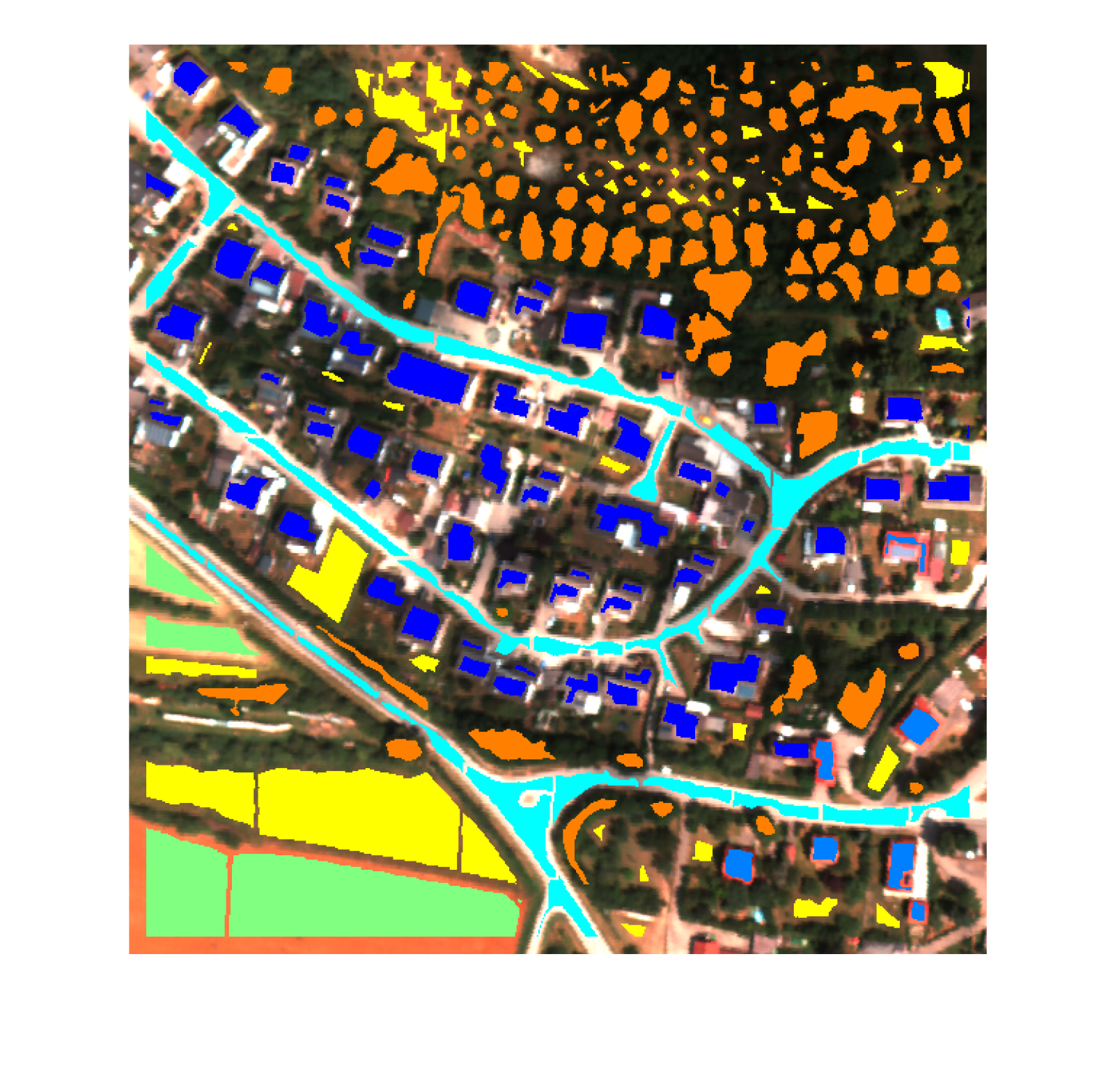}}
\caption{Classification results. Labeled pixels are colored according to the predicted land-cover classes: (a) before DA, (b) after DA using \cref{alg:SPSD_DA} and (c) ground truth.}\label{fig:Greding_im_class}
\end{figure}

\subsection{Motion recognition}
In this experiment, we use the proposed method to apply domain adaption to the motion capture database HDM05 which is described in \cite{cg-2007-2}.
The HDM05 dataset contains more than 70 motion classes in 10 to 50 realizations executed by various actors. 
Some motions for example are: a cartwheel (left hand start), a clap (1 repetition), and a clap above head (1 repetition).
The dataset contains recordings from five different actors that we denote for simplicity by Actor \#1 to Actor \#5.
The data are acquired from 31 markers that are attached to the actor's body throughout the motion, see \cref{fig:HDM05}(a).
Specifically, each marker provides a 3D position at each time frame, see \cref{fig:HDM05}(b).
A single motion is about $3$ second long recorded at $120$Hz sampling rate. Overall, we write the $i$th motion of the $k$th actor as 
$$
\boldsymbol{m}_{i}^{\left(k\right)}=\mathbb{R}^{31\times3\times T_{i}^{\left(k\right)}}
$$
where the first dimension represents a specific marker, the second dimension represents the $x,y,z$ coordinates, and $T_{i}^{\left(k\right)}\approx360$ is the number of frames in the motion.
From each motion $\boldsymbol{m}_{i}^{\left(k\right)}$, we compute the $\boldsymbol{X}_{i}^{\left(k\right)}\in\mathbb{R}^{93\times93}$ covariance matrix (by flattening the first two dimensions of $\boldsymbol{m}_{i}^{\left(k\right)}$ into a column stack vector).
Empirically, we found that only four eigenvalues are consistently greater than zero for most $\boldsymbol{X}_{i}^{\left(k\right)}$. Thus, we set the fixed rank to $r=4$, and as a consequence, we view the covariance matrices as points on $\mathcal{S}_{93,4}^{+}$.

\begin{figure}
    \centering
    \includegraphics[width=0.8\columnwidth]{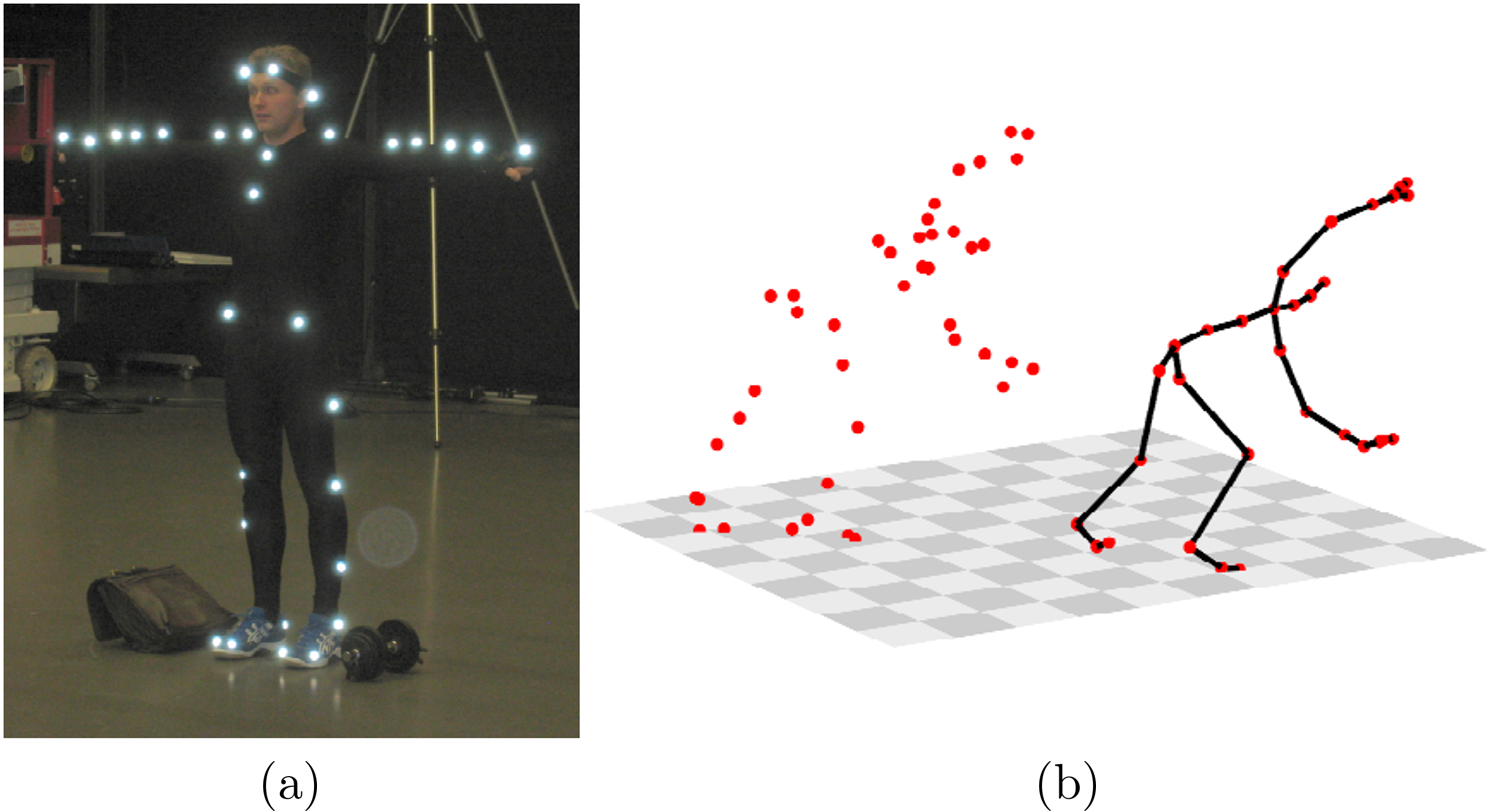}
    \caption{(a) The 31 markers attached to the actor's body.
             (b) A frame from a cartwheel motion.
             Both images were taken from \cite{cg-2007-2}.}
    \label{fig:HDM05}
\end{figure}

To demonstrate the need for domain adaption between different actors, we take Actors \#1 and \#3 and consider all motions with more than $20$ repetitions (combined). This provides us with the two sets $\left\{ \boldsymbol{X}_{i}^{\left(1\right)}\right\} _{i=1}^{48}$ and $\left\{ \boldsymbol{X}_{i}^{\left(3\right)}\right\} _{i=1}^{42}$.
\cref{fig:Sub1and3}(a) presents the 2D representation obtained by projecting the union $\left\{ \boldsymbol{X}_{i}^{\left(1\right)}\right\} _{i=1}^{48}\cup\left\{ \boldsymbol{X}_{i}^{\left(3\right)}\right\} _{i=1}^{42}$ to the tangent plane $\mathcal{T}_{\overline{\boldsymbol{C}}}\mathcal{S}_{d,r}^{+}$ where $\overline{\boldsymbol{C}}$ is the mean of the union using \cref{eq:SpsdLogMap}.
Next, we apply tSNE \cite{maaten2008visualizing} to the obtained vectors and using the induced metric by the inner product \cref{eq:SPSD_inner} with $k=1$.
Motions corresponding to Actor \#1 are marked by circles (with black edges) and motions corresponding to Actor \#3 are marked by asterisks.
Different colors correspond to different motion types.
We observe that the same motions by different actors do not reside in the same vicinity.
To circumvent this undesired discrepancy we apply \cref{alg:SPSD_DA} to $\left\{ \boldsymbol{X}_{i}^{\left(1\right)}\right\} _{i=1}^{48}$ and $\left\{ \boldsymbol{X}_{i}^{\left(3\right)}\right\} _{i=1}^{42}$ and obtain the new SPSD representation $\left\{ \widetilde{\boldsymbol{X}}_{i}^{\left(1\right)}\right\} _{i=1}^{48}$.
\cref{fig:Sub1and3}(b) presents the 2D representation obtained by applying tSNE to the union $\left\{ \widetilde{\boldsymbol{X}}_{i}^{\left(1\right)}\right\} _{i=1}^{48}\cup\left\{ \boldsymbol{X}_{i}^{\left(3\right)}\right\} _{i=1}^{42}$.
We now observe that the same type of motions recorded from different actors reside in the same vicinity, thereby implying that we have achieved a meaningful DA between the two actors.
\cref{fig:Sub2and3} is similar to \cref{fig:Sub1and3} but with Actors \#2 and \#3 (instead of \#1 and \#3).
To provide quantitative results, we train linear SVM classifiers using the SPSD matrices of each actor and test the classification accuracy on all other actors. 
Specifically, {\color{black} as in \cref{sub:HSI}}, given the two sets of SPSD matrices $\left\{ \boldsymbol{X}_{i}^{\left(k_{1}\right)}\right\} _{i}$ and $\left\{ \boldsymbol{X}_{i}^{\left(k_{2}\right)}\right\} _{i}$, the classifiers were trained in the tangent space $\mathcal{T}_{\overline{\boldsymbol{X}}}\mathcal{S}_{93,4}$ where $\overline{\boldsymbol{X}}$ is the mean of the union $\overline{\boldsymbol{X}}=M\left(\left\{ \boldsymbol{X}_{i}^{\left(k_{1}\right)}\right\} \cup\left\{ \boldsymbol{X}_{i}^{\left(k_{2}\right)}\right\} \right)$.
We repeat this experiment twice, once before applying \cref{alg:SPSD_DA} that is, we use $\left\{ \boldsymbol{X}_{i}^{\left(k_{1}\right)}\right\} _{i}$ and $\left\{ \boldsymbol{X}_{i}^{\left(k_{2}\right)}\right\} _{i}$, and once after applying \cref{alg:SPSD_DA}, that is, we use $\left\{ \widetilde{\boldsymbol{X}}_{i}^{\left(k_{1}\right)}\right\} _{i}$ and $\left\{ \boldsymbol{X}_{i}^{\left(k_{2}\right)}\right\} _{i}$. We note that we omit Actor \#4 since the number of common motions between this actor and all other actors is too small.
\cref{tab:HDM05accuracy}(a) presents the classification accuracy obtained before applying \cref{alg:SPSD_DA}. 
\cref{tab:HDM05accuracy}(b) presents the classification accuracy obtained after applying \cref{alg:SPSD_DA}.
We observe that in all cases (except one) applying \cref{alg:SPSD_DA} indeed improve the classification accuracy significantly.

\begin{figure}
    \centering
    \includegraphics[width=0.8\columnwidth]{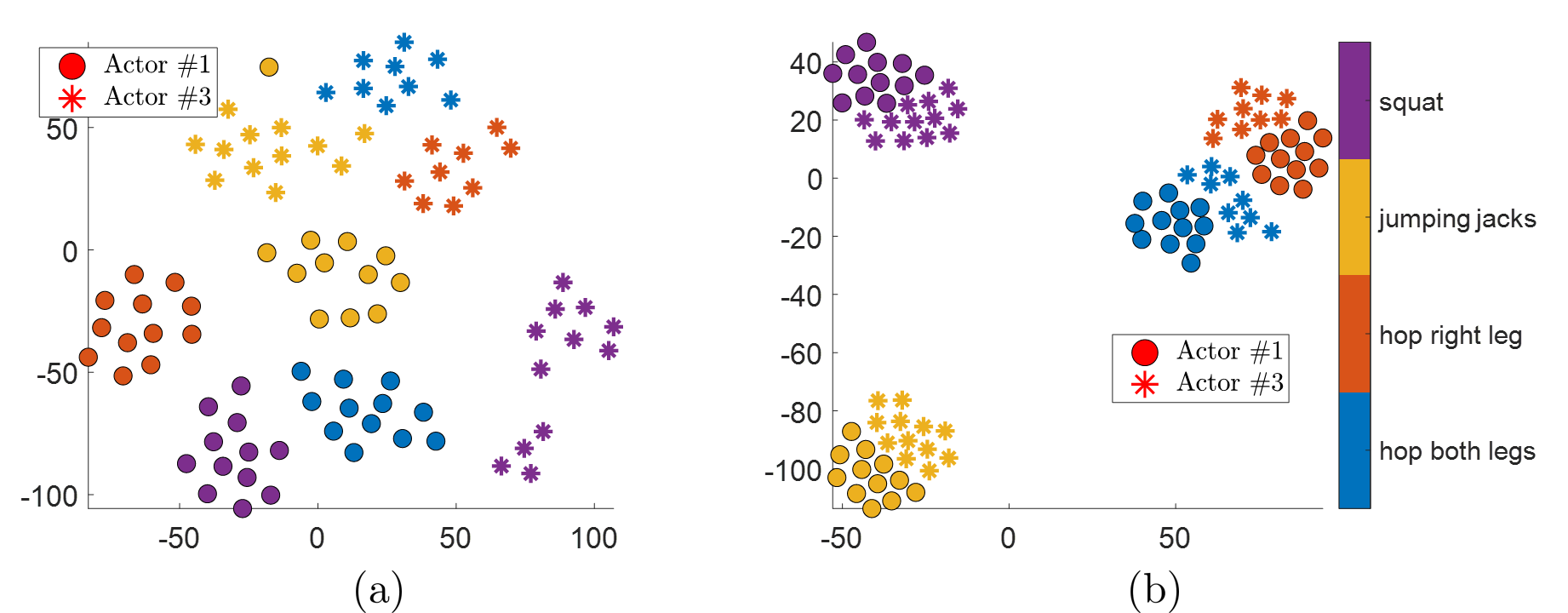}
    \caption{
        (a) 2D tSNE representation of the SPSD matrices $\left\{ \boldsymbol{X}_{i}^{\left(1\right)}\right\} _{i=1}^{48}\cup\left\{ \boldsymbol{X}_{i}^{\left(3\right)}\right\} _{i=1}^{42}$
        (b) 2D tSNE representation of the SPSD matrices $\left\{ \widetilde{\boldsymbol{X}}_{i}^{\left(1\right)}\right\} _{i=1}^{48}\cup\left\{ {\boldsymbol{X}}_{i}^{\left(3\right)}\right\} _{i=1}^{42}$, where $\widetilde{\boldsymbol{X}}_{i}^{\left(1\right)}$ are the SPSD matrics obtained by \cref{alg:SPSD_DA}
     }
    \label{fig:Sub1and3}
\end{figure}

\begin{figure}
    \centering
    \includegraphics[width=0.8\columnwidth]{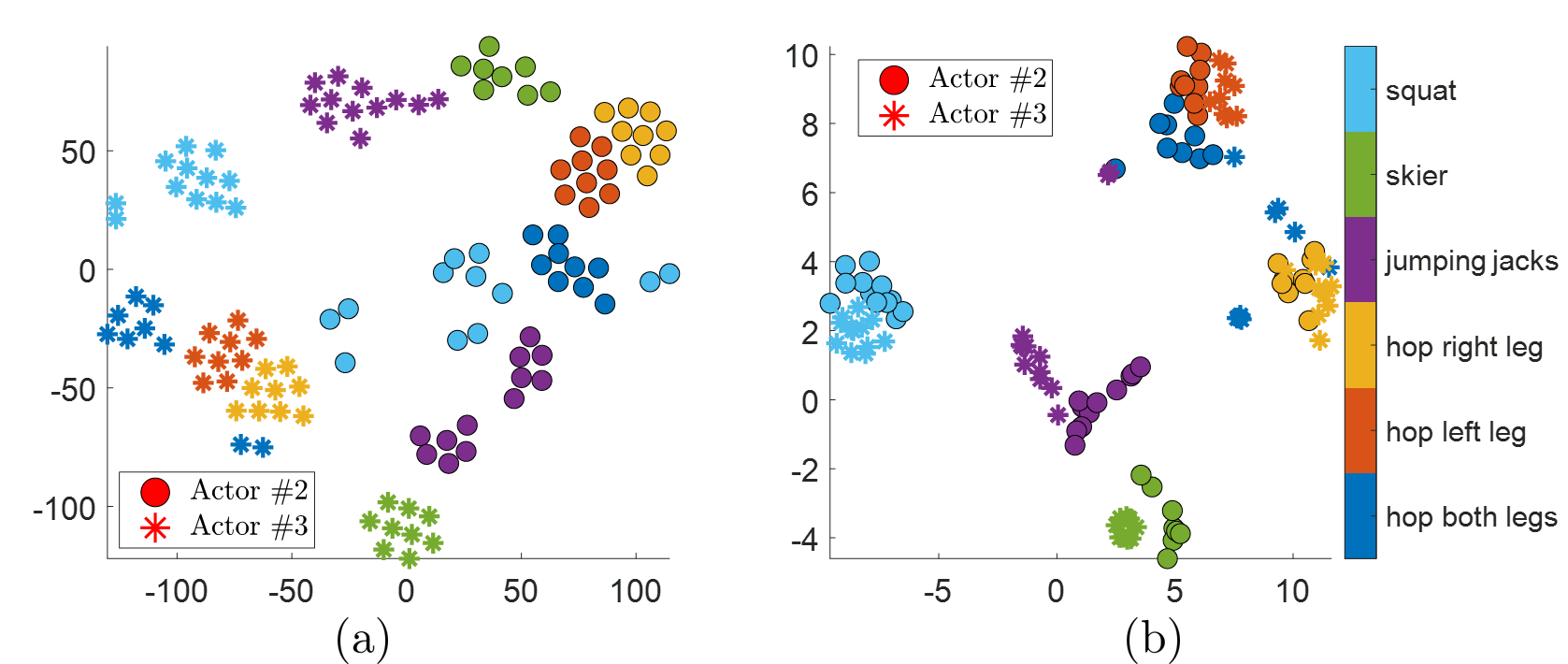}
    \caption{
        (a) 2D tSNE representation of the SPSD matrices $\left\{ \boldsymbol{X}_{i}^{\left(2\right)}\right\} _{i=1}^{59}\cup\left\{ \boldsymbol{X}_{i}^{\left(3\right)}\right\} _{i=1}^{60}$
        (b) 2D tSNE representation of the SPSD matrices $\left\{ \widetilde{\boldsymbol{X}}_{i}^{\left(2\right)}\right\} _{i=1}^{59}\cup\left\{ {\boldsymbol{X}}_{i}^{\left(3\right)}\right\} _{i=1}^{60}$, where $\widetilde{\boldsymbol{X}}_{i}^{\left(2\right)}$ are the SPSD matrices obtained by \cref{alg:SPSD_DA}.
     }
    \label{fig:Sub2and3}
\end{figure}

\begin{table}
\caption{
    Motion recognition classification accuracy.
    (a) Classification without DA. Overall accuracy is 75.91\%
    (b) Classification using \cref{alg:SPSD_DA}. Overall accuracy is 96.73\%.
}
    \label{tab:HDM05accuracy}
\parbox{.495\linewidth}{
\centering
\begin{tabular}{|c|cccc|}
\hline 
{\footnotesize{}\textbf{\backslashbox[1mm]{Train}{Test}}} & \textbf{\footnotesize{}\#1} & \textbf{\footnotesize{}\#2} & \textbf{\footnotesize{}\#3} & \textbf{\footnotesize{}\#5}\tabularnewline
\hline 
\textbf{\footnotesize{}\#1} &  & {\footnotesize{}92.36} & {\footnotesize{}45.83} & {\footnotesize{}96.00}\tabularnewline
\hline 
\textbf{\footnotesize{}\#2} & {\footnotesize{}97.62} &  & {\footnotesize{}95.83} & {\footnotesize{}95.83}\tabularnewline
\hline 
\textbf{\footnotesize{}\#3} & {\footnotesize{}29.17} & {\footnotesize{}91.67} &  & {\footnotesize{}87.50}\tabularnewline
\hline 
\textbf{\footnotesize{}\#5} & {\footnotesize{}98.75} & {\footnotesize{}95.83} & {\footnotesize{}83.33} & \tabularnewline
\hline 
\end{tabular}
\caption*{(a)}

}
\hfill
\parbox{.495\linewidth}{
\centering
\begin{tabular}{|c|cccc|}
\hline 
{\footnotesize{}\textbf{\backslashbox{Train}{Test}}} & \textbf{\footnotesize{}\#1} & \textbf{\footnotesize{}\#2} & \textbf{\footnotesize{}\#3} & \textbf{\footnotesize{}\#5}\tabularnewline
\hline 
\textbf{\footnotesize{}\#1} &  & {\footnotesize{}97.92} & {\footnotesize{}86.11} & {\footnotesize{}96.00}\tabularnewline
\hline 
\textbf{\footnotesize{}\#2} & {\footnotesize{}100} &  & {\footnotesize{}95.83} & {\footnotesize{}100}\tabularnewline
\hline 
\textbf{\footnotesize{}\#3} & {\footnotesize{}100} & {\footnotesize{}100} &  & {\footnotesize{}100}\tabularnewline
\hline 
\textbf{\footnotesize{}\#5} & {\footnotesize{}85.00} & {\footnotesize{}100} & {\footnotesize{}100} & \tabularnewline
\hline 
\end{tabular}
\caption*{(b)}
}
\end{table}

\section{Conclusions}
\label{sec:conclusions}

Data analysis techniques using Riemannian geometry have proven to be useful in a broad range of fields.
In this work, we extend existing results on the Riemannian geometry of SPSD matrices and establish a convenient framework for developing data analysis methods that rely on SPSD matrices as the data features. 
Notable examples for such features are (low rank) covariance matrices, various kernel matrices, and graph Laplacians.
We demonstrate the usefulness of this framework and propose an algorithm for DA using PT on the manifold of SPSD matrices.
We test the algorithm on two applications, hyper-spectral image fusion and motion recognition, and observe good performance.

While the present work follows common practice and the experimental study focuses on covariance matrices, we intend in future work to examine other SPSD matrices.
Perhaps the most significant future direction is the examination of graph Laplacians, which are inherently fixed-rank SPSD matrices and facilitate the representation of entire graphs as data features.

\section*{Acknowledgments}
The motion capture data used in this project was obtained from HDM05 \cite{cg-2007-2}. We thank Wolfgang Gross for making the HSI data from \cite{gross2019nonlinear} available.

\bibliographystyle{siamplain}
\bibliography{references}

\begin{thebibliography}{10}

\bibitem{absil2004riemannian}
{\sc P.-A. Absil, R.~Mahony, and R.~Sepulchre}, {\em Riemannian geometry of
  grassmann manifolds with a view on algorithmic computation}, Acta Applicandae
  Mathematica, 80 (2004), pp.~199--220.

\bibitem{bacak2016second}
{\sc M.~Bac{\'a}k, R.~Bergmann, G.~Steidl, and A.~Weinmann}, {\em A second
  order nonsmooth variational model for restoring manifold-valued images}, SIAM
  Journal on Scientific Computing, 38 (2016), pp.~A567--A597.

\bibitem{barachant2013classification}
{\sc A.~Barachant, S.~Bonnet, M.~Congedo, and C.~Jutten}, {\em Classification
  of covariance matrices using a riemannian-based kernel for bci applications},
  Neurocomputing, 112 (2013), pp.~172--178.

\bibitem{bergmann2018priors}
{\sc R.~Bergmann, J.~H. Fitschen, J.~Persch, and G.~Steidl}, {\em Priors with
  coupled first and second order differences for manifold-valued image
  processing}, Journal of mathematical imaging and vision, 60 (2018),
  pp.~1459--1481.

\bibitem{bhatia2009positive}
{\sc R.~Bhatia}, {\em Positive definite matrices}, vol.~24, Princeton
  university press, 2009.

\bibitem{bonnabel2013rank}
{\sc S.~Bonnabel, A.~Collard, and R.~Sepulchre}, {\em Rank-preserving geometric
  means of positive semi-definite matrices}, Linear Algebra and its
  Applications, 438 (2013), pp.~3202--3216.

\bibitem{bonnabel2010riemannian}
{\sc S.~Bonnabel and R.~Sepulchre}, {\em Riemannian metric and geometric mean
  for positive semidefinite matrices of fixed rank}, SIAM Journal on Matrix
  Analysis and Applications, 31 (2010), pp.~1055--1070.

\bibitem{cohen1960coefficient}
{\sc J.~Cohen}, {\em A coefficient of agreement for nominal scales},
  Educational and psychological measurement, 20 (1960), pp.~37--46.

\bibitem{courty2014domain}
{\sc N.~Courty, R.~Flamary, and D.~Tuia}, {\em Domain adaptation with
  regularized optimal transport}, in Joint European Conference on Machine
  Learning and Knowledge Discovery in Databases, Springer, 2014, pp.~274--289.

\bibitem{courty2016optimal}
{\sc N.~Courty, R.~Flamary, D.~Tuia, and A.~Rakotomamonjy}, {\em Optimal
  transport for domain adaptation}, IEEE transactions on pattern analysis and
  machine intelligence, 39 (2016), pp.~1853--1865.

\bibitem{deng2018active}
{\sc C.~Deng, X.~Liu, C.~Li, and D.~Tao}, {\em Active multi-kernel domain
  adaptation for hyperspectral image classification}, Pattern Recognition, 77
  (2018), pp.~306--315.

\bibitem{edelman1998geometry}
{\sc A.~Edelman, T.~A. Arias, and S.~T. Smith}, {\em The geometry of algorithms
  with orthogonality constraints}, SIAM journal on Matrix Analysis and
  Applications, 20 (1998), pp.~303--353.

\bibitem{fang2018new}
{\sc L.~Fang, N.~He, S.~Li, A.~J. Plaza, and J.~Plaza}, {\em A new
  spatial--spectral feature extraction method for hyperspectral images using
  local covariance matrix representation}, IEEE Transactions on Geoscience and
  Remote Sensing, 56 (2018), pp.~3534--3546.

\bibitem{gross2019nonlinear}
{\sc W.~Gross, D.~Tuia, U.~Soergel, and W.~Middelmann}, {\em Nonlinear feature
  normalization for hyperspectral domain adaptation and mitigation of nonlinear
  effects}, IEEE Transactions on Geoscience and Remote Sensing, 57 (2019),
  pp.~5975--5990.

\bibitem{HyperspectralLowRank}
{\sc A.~{Halimi}, P.~{Honeine}, M.~{Kharouf}, C.~{Richard}, and
  J.~{Tourneret}}, {\em Estimating the intrinsic dimension of hyperspectral
  images using a noise-whitened eigengap approach}, IEEE Transactions on
  Geoscience and Remote Sensing, 54 (2016), pp.~3811--3821.

\bibitem{huang2015log}
{\sc Z.~Huang, R.~Wang, S.~Shan, X.~Li, and X.~Chen}, {\em Log-euclidean metric
  learning on symmetric positive definite manifold with application to image
  set classification}, in International conference on machine learning, 2015,
  pp.~720--729.

\bibitem{iordache2011sparse}
{\sc M.-D. Iordache, J.~M. Bioucas-Dias, and A.~Plaza}, {\em Sparse unmixing of
  hyperspectral data}, IEEE Transactions on Geoscience and Remote Sensing, 49
  (2011), pp.~2014--2039.

\bibitem{jayasumana2015kernel}
{\sc S.~Jayasumana, R.~Hartley, M.~Salzmann, H.~Li, and M.~Harandi}, {\em
  Kernel methods on riemannian manifolds with gaussian rbf kernels}, IEEE
  transactions on pattern analysis and machine intelligence, 37 (2015),
  pp.~2464--2477.

\bibitem{kang2017hyperspectral}
{\sc X.~Kang, X.~Zhang, S.~Li, K.~Li, J.~Li, and J.~A. Benediktsson}, {\em
  Hyperspectral anomaly detection with attribute and edge-preserving filters},
  IEEE Transactions on Geoscience and Remote Sensing, 55 (2017),
  pp.~5600--5611.

\bibitem{kapur2016gene}
{\sc A.~Kapur, K.~Marwah, and G.~Alterovitz}, {\em Gene expression prediction
  using low-rank matrix completion}, BMC bioinformatics, 17 (2016), p.~243.

\bibitem{maaten2008visualizing}
{\sc L.~v.~d. Maaten and G.~Hinton}, {\em Visualizing data using t-sne},
  Journal of machine learning research, 9 (2008), pp.~2579--2605.

\bibitem{maman2019domain}
{\sc G.~Maman, O.~Yair, D.~Eytan, and R.~Talmon}, {\em Domain adaptation using
  riemannian geometry of spd matrices}, in ICASSP 2019-2019 IEEE International
  Conference on Acoustics, Speech and Signal Processing (ICASSP), IEEE, 2019,
  pp.~4464--4468.

\bibitem{moakher2005differential}
{\sc M.~Moakher}, {\em A differential geometric approach to the geometric mean
  of symmetric positive-definite matrices}, SIAM Journal on Matrix Analysis and
  Applications, 26 (2005), pp.~735--747.

\bibitem{cg-2007-2}
{\sc M.~M\"{u}ller, T.~R\"{o}der, M.~Clausen, B.~Eberhardt, B.~Kr\"{u}ger, and
  A.~Weber}, {\em Documentation mocap database hdm05}, Tech. Report CG-2007-2,
  Universit\"{a}t Bonn, June 2007.

\bibitem{niu2016hyperspectral}
{\sc Y.~Niu and B.~Wang}, {\em Hyperspectral anomaly detection based on
  low-rank representation and learned dictionary}, Remote Sensing, 8 (2016),
  p.~289.

\bibitem{pennec2006riemannian}
{\sc X.~Pennec, P.~Fillard, and N.~Ayache}, {\em A riemannian framework for
  tensor computing}, International Journal of computer vision, 66 (2006),
  pp.~41--66.

\bibitem{rodrigues2017dimensionality}
{\sc P.~Rodrigues, F.~Bouchard, M.~Congedo, and C.~Jutten}, {\em Dimensionality
  reduction for bci classification using riemannian geometry}, 2017.

\bibitem{rodrigues2018riemannian}
{\sc P.~L.~C. Rodrigues, C.~Jutten, and M.~Congedo}, {\em Riemannian procrustes
  analysis: transfer learning for brain--computer interfaces}, IEEE
  Transactions on Biomedical Engineering, 66 (2018), pp.~2390--2401.

\bibitem{shrivastava2014unsupervised}
{\sc A.~Shrivastava, S.~Shekhar, and V.~M. Patel}, {\em Unsupervised domain
  adaptation using parallel transport on grassmann manifold}, in IEEE winter
  conference on applications of computer vision, IEEE, 2014, pp.~277--284.

\bibitem{wang2012covariance}
{\sc R.~Wang, H.~Guo, L.~S. Davis, and Q.~Dai}, {\em Covariance discriminative
  learning: A natural and efficient approach to image set classification}, in
  2012 IEEE Conference on Computer Vision and Pattern Recognition, IEEE, 2012,
  pp.~2496--2503.

\bibitem{wu2017kernel}
{\sc C.~Wu, L.~Zhang, and B.~Du}, {\em Kernel slow feature analysis for scene
  change detection}, IEEE Transactions on Geoscience and Remote Sensing, 55
  (2017), pp.~2367--2384.

\bibitem{yair2019parallel}
{\sc O.~Yair, M.~Ben-Chen, and R.~Talmon}, {\em Parallel transport on the cone
  manifold of spd matrices for domain adaptation}, IEEE Transactions on Signal
  Processing, 67 (2019), pp.~1797--1811.

\bibitem{yair2019optimal}
{\sc O.~Yair, F.~Dietrich, R.~Talmon, and I.~G. Kevrekidis}, {\em Optimal
  transport on the manifold of spd matrices for domain adaptation}, arXiv
  preprint arXiv:1906.00616,  (2019).

\bibitem{zhang2013hyperspectral}
{\sc H.~Zhang, W.~He, L.~Zhang, H.~Shen, and Q.~Yuan}, {\em Hyperspectral image
  restoration using low-rank matrix recovery}, IEEE Transactions on Geoscience
  and Remote Sensing, 52 (2013), pp.~4729--4743.

\end{thebibliography}

\clearpage

\section{Supplementary materials}
\begin{proposition}
Let $\mathcal{X}=\left\{ \boldsymbol{P}_{i}\right\} _{i}$ be a set of points on $\mathcal{P}_d$ with the Riemannian mean $\overline{\boldsymbol{P}}$.
Consider the map $t:\mathcal{P}_d \rightarrow \mathcal{P}_d$ defined by
\[
\boldsymbol{R}_{i}=t(\boldsymbol{P}_i)=\boldsymbol{T}\boldsymbol{P}_{i}\boldsymbol{T}^{T}
\]
where $\boldsymbol{T}\in \mathrm{GL}_{d}$.
Let $\overline{\boldsymbol{R}}$ be the Riemannian mean of the resulting set $\left\{ \boldsymbol{R}_{i}\right\}_i$ .
The following holds
\[
\Gamma_{\overline{\boldsymbol{P}}\rightarrow\overline{\boldsymbol{R}}}^{+}\left(\boldsymbol{P}_{i}\right)=\boldsymbol{R}_{i},\qquad\forall i
\]
if and only if $\boldsymbol{T}$ is of the form $\boldsymbol{T}=\overline{\boldsymbol{P}}^{\frac{1}{2}}\boldsymbol{B}\overline{\boldsymbol{P}}^{-\frac{1}{2}}$
where either $\boldsymbol{B}\succ0$ or $\boldsymbol{B}\prec0$.
\end{proposition}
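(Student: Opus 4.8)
The plan is to reduce the matrix identity $\Gamma^+_{\overline{\boldsymbol{P}}\to\overline{\boldsymbol{R}}}(\boldsymbol{P}_i) = \boldsymbol{E}\boldsymbol{P}_i\boldsymbol{E}^T = \boldsymbol{R}_i = \boldsymbol{T}\boldsymbol{P}_i\boldsymbol{T}^T$ to a single transparent condition on the \emph{orthogonal} part of $\boldsymbol{T}$, read in coordinates whitened by $\overline{\boldsymbol{P}}$. Since $\overline{\boldsymbol{P}}^{\frac12}$ is invertible, I would first write, without loss of generality, $\boldsymbol{T} = \overline{\boldsymbol{P}}^{\frac12}\boldsymbol{A}\,\overline{\boldsymbol{P}}^{-\frac12}$ for some $\boldsymbol{A}\in\mathrm{GL}_d$; the target form $\boldsymbol{T}=\overline{\boldsymbol{P}}^{\frac12}\boldsymbol{B}\,\overline{\boldsymbol{P}}^{-\frac12}$ with $\boldsymbol{B}\succ0$ or $\boldsymbol{B}\prec0$ is then exactly the statement that $\boldsymbol{A}$ is $\pm$SPD.

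First I would pin down $\boldsymbol{E}$. By the congruence invariance of the geometric mean (the property $(*)$ used in \cref{eq:SpdMeanTransportation}), $\overline{\boldsymbol{R}} = M(\{\boldsymbol{T}\boldsymbol{P}_i\boldsymbol{T}^T\}) = \boldsymbol{T}\overline{\boldsymbol{P}}\boldsymbol{T}^T$. Substituting the parametrization gives $\overline{\boldsymbol{R}}\,\overline{\boldsymbol{P}}^{-1} = \overline{\boldsymbol{P}}^{\frac12}(\boldsymbol{A}\boldsymbol{A}^T)\overline{\boldsymbol{P}}^{-\frac12}$, a matrix similar to the SPD matrix $\boldsymbol{A}\boldsymbol{A}^T$. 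Since the principal square root commutes with similarity transformations, $\boldsymbol{E} = (\overline{\boldsymbol{R}}\,\overline{\boldsymbol{P}}^{-1})^{\frac12} = \overline{\boldsymbol{P}}^{\frac12}(\boldsymbol{A}\boldsymbol{A}^T)^{\frac12}\overline{\boldsymbol{P}}^{-\frac12}$, which is consistent with \cref{eq:E}.

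Next comes the key cancellation. I would compute $\boldsymbol{N}:=\boldsymbol{T}^{-1}\boldsymbol{E}$ using the polar decomposition $\boldsymbol{A}=\boldsymbol{U}\boldsymbol{H}$ with $\boldsymbol{U}\in\mathcal{O}_d$ and $\boldsymbol{H}\succ0$; then $(\boldsymbol{A}\boldsymbol{A}^T)^{\frac12}=\boldsymbol{U}\boldsymbol{H}\boldsymbol{U}^T$ and the $\boldsymbol{H}$-factors cancel, leaving $\boldsymbol{N} = \overline{\boldsymbol{P}}^{\frac12}\boldsymbol{U}^T\overline{\boldsymbol{P}}^{-\frac12}$. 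Multiplying the desired identity $\boldsymbol{E}\boldsymbol{P}_i\boldsymbol{E}^T=\boldsymbol{T}\boldsymbol{P}_i\boldsymbol{T}^T$ by $\boldsymbol{T}^{-1}$ on the left and $\boldsymbol{T}^{-T}$ on the right turns it into $\boldsymbol{N}\boldsymbol{P}_i\boldsymbol{N}^T=\boldsymbol{P}_i$ for all $i$, and whitening ($\boldsymbol{Q}_i:=\overline{\boldsymbol{P}}^{-\frac12}\boldsymbol{P}_i\overline{\boldsymbol{P}}^{-\frac12}$) collapses this to $\boldsymbol{U}^T\boldsymbol{Q}_i\boldsymbol{U}=\boldsymbol{Q}_i$, i.e., $\boldsymbol{U}$ commutes with every $\boldsymbol{Q}_i$. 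The forward (``if'') direction is then immediate: $\boldsymbol{A}$ being $\pm$SPD means $\boldsymbol{U}=\pm\boldsymbol{I}$, which commutes with everything (and in fact $\boldsymbol{E}=\pm\boldsymbol{T}$, whence $\boldsymbol{E}\boldsymbol{P}_i\boldsymbol{E}^T=\boldsymbol{T}\boldsymbol{P}_i\boldsymbol{T}^T$).

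The converse is where the real content lies. The identity forces the orthogonal factor $\boldsymbol{U}$ to lie in the common centralizer of $\{\boldsymbol{Q}_i\}$ inside $\mathcal{O}_d$, and I must then argue this centralizer is only $\{\pm\boldsymbol{I}\}$, which is precisely the step requiring the data set to be nondegenerate: generically the $\boldsymbol{P}_i$ (equivalently the $\boldsymbol{Q}_i$) generate the full algebra $M_d(\mathbb{R})$, so by a Schur-type argument any commuting matrix is scalar, and an orthogonal scalar is $\pm\boldsymbol{I}$. Given this, $\boldsymbol{U}=\pm\boldsymbol{I}$ yields $\boldsymbol{A}=\pm\boldsymbol{H}$, i.e., $\boldsymbol{B}\succ0$ or $\boldsymbol{B}\prec0$, completing the characterization. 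I expect this centralizer-triviality step to be the main obstacle: it is exactly what distinguishes $\pm\boldsymbol{I}$ from the sign-flip symmetries that survive when the $\boldsymbol{Q}_i$ share a common eigenbasis, so I would either build the required nondegeneracy into the hypothesis or isolate it as the precise condition under which the ``only if'' holds.
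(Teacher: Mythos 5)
Your proposal is correct, and in the converse direction it takes a genuinely different (and cleaner) route than the paper. The paper's ``if'' direction is essentially the same as yours: congruence invariance gives $\overline{\boldsymbol{R}}=\boldsymbol{T}\overline{\boldsymbol{P}}\boldsymbol{T}^{T}$, and a direct computation with \cref{eq:E} shows $\boldsymbol{E}=\overline{\boldsymbol{P}}^{\frac{1}{2}}\left(\boldsymbol{B}\boldsymbol{B}\right)^{\frac{1}{2}}\overline{\boldsymbol{P}}^{-\frac{1}{2}}=\pm\boldsymbol{T}$. For the ``only if'' direction, however, the paper does not use a polar decomposition. It first assumes, ``without loss of generality,'' that $\boldsymbol{I}\in\mathcal{X}$, plugs this into the identity to conclude that $\boldsymbol{E}^{-1}\boldsymbol{T}$ is orthogonal, then assumes (again ``WLOG'') that some $\boldsymbol{P}_{j}\in\mathcal{X}$ has distinct eigenvalues, and argues from $\boldsymbol{P}_{j}=\boldsymbol{U}\boldsymbol{P}_{j}\boldsymbol{U}^{T}$ that each eigenvector of $\boldsymbol{P}_{j}$ is preserved up to sign, concluding $\boldsymbol{U}=\pm\boldsymbol{I}$. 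Your whitened polar factorization $\boldsymbol{A}=\boldsymbol{U}\boldsymbol{H}$ buys two concrete improvements. First, you never need $\boldsymbol{I}\in\mathcal{X}$: the mean itself always satisfies the identity, since $\boldsymbol{E}\overline{\boldsymbol{P}}\boldsymbol{E}^{T}=\overline{\boldsymbol{R}}=\boldsymbol{T}\overline{\boldsymbol{P}}\boldsymbol{T}^{T}$ by \cref{eq:SpdMeanTransportation} and congruence invariance, and this is exactly the statement that your polar factor $\boldsymbol{U}$ is orthogonal; so the paper's first ``WLOG'' is an unnecessary hypothesis that your argument removes. Second, your reduction of the whole problem to ``$\boldsymbol{U}$ lies in the common centralizer of $\left\{ \boldsymbol{Q}_{i}\right\}$ inside $\mathcal{O}_{d}$'' makes the logical structure of the equivalence transparent, whereas the paper's eigenvector argument obscures it.

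In fact, your centralizer formulation exposes a genuine flaw in the paper's own last step, precisely the one you anticipated. From $\boldsymbol{U}^{T}\boldsymbol{v}_{k}=\pm\boldsymbol{v}_{k}$ for every eigenvector $\boldsymbol{v}_{k}$ of $\boldsymbol{P}_{j}$, one may only conclude that $\boldsymbol{U}=\boldsymbol{V}\diag\left(\epsilon_{1},\dots,\epsilon_{d}\right)\boldsymbol{V}^{T}$ with $\epsilon_{k}\in\left\{ \pm1\right\}$, where $\boldsymbol{V}$ is the eigenbasis of $\boldsymbol{P}_{j}$ --- not that all signs agree. The paper silently discards these $2^{d}-2$ sign-flip symmetries; they are ruled out only when no nontrivial sign matrix commutes with \emph{all} the $\boldsymbol{Q}_{i}$, i.e., exactly the nondegeneracy (trivial common centralizer) you identify. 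Indeed, if all the $\boldsymbol{Q}_{i}$ commute (share an eigenbasis), then taking $\boldsymbol{A}=\boldsymbol{U}\boldsymbol{H}$ with $\boldsymbol{U}$ a nontrivial sign matrix in that basis and $\boldsymbol{H}\succ0$ produces a $\boldsymbol{T}$ for which the transport identity holds although $\boldsymbol{B}$ is neither positive nor negative definite, so the ``only if'' statement is false as written in that degenerate case. Your suggestion to make the trivial-centralizer condition an explicit hypothesis is therefore not merely stylistic; it is what the proposition actually requires, and your proof is complete once that hypothesis is added.
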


\begin{proof}
Using the congruence invariance property of the geometric mean (see \cite{bhatia2009positive}), we have:
\[
\overline{\boldsymbol{R}}=\boldsymbol{T}\overline{\boldsymbol{P}}\boldsymbol{T}^{T}
\]
Note that
\begin{align*}
\boldsymbol{E} & =\left(\overline{\boldsymbol{R}}\overline{\boldsymbol{P}}^{-1}\right)^{\frac{1}{2}}\\
 & =\left(\boldsymbol{T}\overline{\boldsymbol{P}}\boldsymbol{T}^{T}\overline{\boldsymbol{P}}^{-1}\right)^{\frac{1}{2}}\\
 & =\left(\overline{\boldsymbol{P}}^{\frac{1}{2}}\overline{\boldsymbol{P}}^{-\frac{1}{2}}\boldsymbol{T}\overline{\boldsymbol{P}}\boldsymbol{T}^{T}\overline{\boldsymbol{P}}^{-\frac{1}{2}}\overline{\boldsymbol{P}}^{-\frac{1}{2}}\right)^{\frac{1}{2}}\\
 & =\overline{\boldsymbol{P}}^{\frac{1}{2}}\left(\overline{\boldsymbol{P}}^{-\frac{1}{2}}\boldsymbol{T}\overline{\boldsymbol{P}}\boldsymbol{T}^{T}\overline{\boldsymbol{P}}^{-\frac{1}{2}}\right)^{\frac{1}{2}}\overline{\boldsymbol{P}}^{-\frac{1}{2}}
\end{align*}
\paragraph{First direction} Assume $\boldsymbol{T}=\overline{\boldsymbol{P}}^{\frac{1}{2}}\boldsymbol{B}\overline{\boldsymbol{P}}^{-\frac{1}{2}}$,
then:
\begin{align*}
\boldsymbol{E} & =\overline{\boldsymbol{P}}^{\frac{1}{2}}\left(\overline{\boldsymbol{P}}^{-\frac{1}{2}}\boldsymbol{T}\overline{\boldsymbol{P}}\boldsymbol{T}^{T}\overline{\boldsymbol{P}}^{-\frac{1}{2}}\right)^{\frac{1}{2}}\overline{\boldsymbol{P}}^{-\frac{1}{2}}\\
 & =\overline{\boldsymbol{P}}^{\frac{1}{2}}\left(\overline{\boldsymbol{P}}^{-\frac{1}{2}}\overline{\boldsymbol{P}}^{\frac{1}{2}}\boldsymbol{B}\overline{\boldsymbol{P}}^{-\frac{1}{2}}\overline{\boldsymbol{P}}\overline{\boldsymbol{P}}^{-\frac{1}{2}}\boldsymbol{B}\overline{\boldsymbol{P}}^{\frac{1}{2}}\overline{\boldsymbol{P}}^{-\frac{1}{2}}\right)^{\frac{1}{2}}\overline{\boldsymbol{P}}^{-\frac{1}{2}}\\
 & =\overline{\boldsymbol{P}}^{\frac{1}{2}}\left(\boldsymbol{B}\boldsymbol{B}\right)^{\frac{1}{2}}\overline{\boldsymbol{P}}^{-\frac{1}{2}}\\
 & =\pm\boldsymbol{T}
\end{align*}
There will be a $+$ sign if $\boldsymbol{B}\succ0$ and a $-$ sign
if $\boldsymbol{B}\prec0$.
Hence:
\[
\Gamma_{\overline{\boldsymbol{P}}\rightarrow\overline{\boldsymbol{R}}}^{+}\left(\boldsymbol{P}_{i}\right)=\boldsymbol{E}\boldsymbol{P}_{i}\boldsymbol{E}^{T}=\left(\pm\boldsymbol{T}\right)\boldsymbol{P}_{i}\left(\pm\boldsymbol{T}\right)^{T}=\boldsymbol{R}_{i}
\]
\paragraph{Second direction} First, recall that
\[
\boldsymbol{E}=\overline{\boldsymbol{P}}^{\frac{1}{2}}\left(\overline{\boldsymbol{P}}^{-\frac{1}{2}}\boldsymbol{T}\overline{\boldsymbol{P}}\boldsymbol{T}^{T}\overline{\boldsymbol{P}}^{-\frac{1}{2}}\right)^{\frac{1}{2}}\overline{\boldsymbol{P}}^{-\frac{1}{2}}
\]
and let $\boldsymbol{B}\coloneqq\left(\overline{\boldsymbol{P}}^{-\frac{1}{2}}\boldsymbol{T}\overline{\boldsymbol{P}}\boldsymbol{T}^{T}\overline{\boldsymbol{P}}^{-\frac{1}{2}}\right)^{\frac{1}{2}}\succ0$
thus:
\[
\boldsymbol{E}=\overline{\boldsymbol{P}}^{\frac{1}{2}}\boldsymbol{B}\overline{\boldsymbol{P}}^{-\frac{1}{2}}.
\]
Hence, it is enough to show that $$\boldsymbol{T}=\pm\boldsymbol{E}=\overline{\boldsymbol{P}}^{\frac{1}{2}}\left(\pm\boldsymbol{B}\right)\overline{\boldsymbol{P}}^{-\frac{1}{2}}.$$
Assume the mapping is exact, that is:
\[
\boldsymbol{E}\boldsymbol{P}_{i}\boldsymbol{E}^{T}=\boldsymbol{T}\boldsymbol{P}_{i}\boldsymbol{T}^{T},\qquad\forall\boldsymbol{P}_{i}\in\mathcal{X}
\]
then, without loss of generality, we consider $\boldsymbol{P}_{i}=\boldsymbol{I}\in\mathcal{X}$
for some $i$, and thus:
\[
\boldsymbol{E}\boldsymbol{E}^{T}=\boldsymbol{T}\boldsymbol{T}^{T}
\]
leading to
\[
\boldsymbol{I}=\boldsymbol{E}^{-1}\boldsymbol{T}\boldsymbol{T}^{T}\boldsymbol{E}^{-T}=\left(\boldsymbol{E}^{-1}\boldsymbol{T}\right)\left(\boldsymbol{E}^{-1}\boldsymbol{T}\right)^{T}
\]
which implies that $\boldsymbol{E}^{-1}\boldsymbol{T=}\boldsymbol{U}$
is unitary. Now, from
\[
\boldsymbol{E}\boldsymbol{P}_{i}\boldsymbol{E}^{T}  =\boldsymbol{T}\boldsymbol{P}_{i}\boldsymbol{T}^{T},\qquad\forall\boldsymbol{P}_{i}\in\mathcal{X}
\]
we have
\[
\boldsymbol{P}_{i}=\left(\boldsymbol{E}^{-1}\boldsymbol{T}\right)\boldsymbol{P}_{i}\left(\boldsymbol{E}^{-1}\boldsymbol{T}\right)^{T}=\boldsymbol{U}\boldsymbol{P}_{i}\boldsymbol{U}^{T},\qquad\forall\boldsymbol{P}_{i}\in\mathcal{X}
\]
Again, without loss of generality, assume there exists $\boldsymbol{P}_{j}\in\mathcal{X}$
with unique eigenvalues, and thus, also with unique eigenvectors (up to
a sign).
Let $\boldsymbol{v}$ and $\lambda$ be an eigenvector and its corresponding
eigenvalue, such that:
\[
\boldsymbol{P}_{j}\boldsymbol{v}=\lambda\boldsymbol{v}
\]
Since $\boldsymbol{P}_{i}=\boldsymbol{U}\boldsymbol{P}_{i}\boldsymbol{U}^{T}$
for all $\boldsymbol{P}_{i}\in\mathcal{X}$ we have
\[
\implies\boldsymbol{U}\boldsymbol{P}_{j}\boldsymbol{U}^{T}\boldsymbol{v}=\lambda\boldsymbol{v}
\]
and thus:
\[
\boldsymbol{P}_{j}\boldsymbol{U}^{T}\boldsymbol{v}=\lambda\boldsymbol{U}^{T}\boldsymbol{v}
\]
Since, the eigenvectors are unique, we have:
\[
\boldsymbol{v}=\pm\boldsymbol{U}^{T}\boldsymbol{v}
\]
Since this is true for the all the eigenvectors of $\boldsymbol{P}_{j}$
we have:
\[
\boldsymbol{U}=\pm\boldsymbol{I}
\]
So that
\[
\boldsymbol{E}^{-1}\boldsymbol{T=}\pm\boldsymbol{I}
\]
which can be recast as
\[
\boldsymbol{T}=\pm\boldsymbol{E}
\]
\end{proof}

\begin{proposition}
    
Let $\overline{\boldsymbol{Q}} \in \left[\overline{\boldsymbol{Q}}\right]$ and $\overline{\boldsymbol{V}}\in\left[\overline{\boldsymbol{V}}\right]$
be two points in $\mathcal{O}_{d}$, such that $\overline{\boldsymbol{V}}=\Pi_{\overline{\boldsymbol{Q}}}\left(\overline{\boldsymbol{V}}\right)$.
Define $\Gamma_{\overline{\boldsymbol{Q}}\to\overline{\boldsymbol{V}}}^{+}:\mathcal{G}_{d,r}\to\mathcal{G}_{d,r}$ by
\begin{equation}
\Gamma_{\overline{\boldsymbol{Q}}\rightarrow\overline{\boldsymbol{V}}}^{+}\left(\boldsymbol{Q}_{i}\right)=\mathrm{Exp}_{\overline{\boldsymbol{V}}}\left(\Gamma_{\overline{\boldsymbol{Q}}\rightarrow\overline{\boldsymbol{V}}}\left(\mathrm{Log}_{\overline{\boldsymbol{Q}}}\left(\boldsymbol{Q}_{i}\right)\right)\right)
\end{equation}
Then
\begin{equation}
\Gamma_{\overline{\boldsymbol{Q}}\rightarrow\overline{\boldsymbol{V}}}^{+}\left(\boldsymbol{Q}_{i}\right)\sim\Gamma_{\overline{\boldsymbol{Q}}\rightarrow\overline{\boldsymbol{V}}}\left(\boldsymbol{Q}_{i}\right)=\overline{\boldsymbol{V}}\overline{\boldsymbol{Q}}^{T}\boldsymbol{Q}_{i}
\end{equation}
where $\sim$ is the equivalent class,
and if $\boldsymbol{Q}_{i}$ is chosen such that $\boldsymbol{Q}_{i}=\Pi_{\overline{\boldsymbol{Q}}}\left(\boldsymbol{Q}_{i}\right)$, then the equivalence become equality:
\begin{equation}
\Gamma_{\overline{\boldsymbol{Q}}\rightarrow\overline{\boldsymbol{V}}}^{+}\left(\boldsymbol{Q}_{i}\right)=\Gamma_{\overline{\boldsymbol{Q}}\rightarrow\overline{\boldsymbol{V}}}\left(\boldsymbol{Q}_{i}\right)=\overline{\boldsymbol{V}}\overline{\boldsymbol{Q}}^{T}\boldsymbol{Q}_{i}
\end{equation}
\end{proposition}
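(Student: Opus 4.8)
The plan is to evaluate the three-step composition defining $\Gamma^{+}_{\overline{\boldsymbol{Q}}\to\overline{\boldsymbol{V}}}$ in closed form, using the explicit exponential map \cref{eq:GrassExp}, the logarithmic map \cref{eq:GrassLogQ}, and the parallel transport \cref{eq:GrassPT}, and then to reconcile the outcome with the equivalence relation $\sim$. First I would exploit the hypothesis $\overline{\boldsymbol{V}}=\Pi_{\overline{\boldsymbol{Q}}}(\overline{\boldsymbol{V}})$: by the definition of the projection in \cref{eq:GrassProjectionQ} this gives $\overline{\boldsymbol{V}}=\mathrm{Exp}_{\overline{\boldsymbol{Q}}}(\mathrm{Log}_{\overline{\boldsymbol{Q}}}(\overline{\boldsymbol{V}}))=\overline{\boldsymbol{Q}}\exp(\overline{\boldsymbol{B}}^{\text{skew}})$, so that $\overline{\boldsymbol{V}}$ is precisely a point of the form $\mathrm{Exp}_{\overline{\boldsymbol{Q}}}(\widetilde{\boldsymbol{\Delta}})$. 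This is exactly the configuration in which the specialized transport identity stated at the end of \cref{eq:GrassPT} applies.

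Next, write $\mathrm{Log}_{\overline{\boldsymbol{Q}}}(\boldsymbol{Q}_{i})=\overline{\boldsymbol{Q}}\boldsymbol{B}_{i}^{\text{skew}}$ as in \cref{eq:GrassLogQ}. Two consequences follow immediately. By the definition of the projection, $\overline{\boldsymbol{Q}}\exp(\boldsymbol{B}_{i}^{\text{skew}})=\Pi_{\overline{\boldsymbol{Q}}}(\boldsymbol{Q}_{i})$, hence $\exp(\boldsymbol{B}_{i}^{\text{skew}})=\overline{\boldsymbol{Q}}^{T}\Pi_{\overline{\boldsymbol{Q}}}(\boldsymbol{Q}_{i})$; and by the specialized identity of \cref{eq:GrassPT} the transported vector is $\Gamma_{\overline{\boldsymbol{Q}}\to\overline{\boldsymbol{V}}}(\overline{\boldsymbol{Q}}\boldsymbol{B}_{i}^{\text{skew}})=\overline{\boldsymbol{V}}\boldsymbol{B}_{i}^{\text{skew}}$. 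Applying the exponential map \cref{eq:GrassExp} at $\overline{\boldsymbol{V}}$ then yields
\[
\Gamma^{+}_{\overline{\boldsymbol{Q}}\to\overline{\boldsymbol{V}}}(\boldsymbol{Q}_{i})=\mathrm{Exp}_{\overline{\boldsymbol{V}}}(\overline{\boldsymbol{V}}\boldsymbol{B}_{i}^{\text{skew}})=\overline{\boldsymbol{V}}\exp(\boldsymbol{B}_{i}^{\text{skew}})=\overline{\boldsymbol{V}}\,\overline{\boldsymbol{Q}}^{T}\Pi_{\overline{\boldsymbol{Q}}}(\boldsymbol{Q}_{i}).
\]
In particular, when $\boldsymbol{Q}_{i}=\Pi_{\overline{\boldsymbol{Q}}}(\boldsymbol{Q}_{i})$ the right-hand side is exactly $\overline{\boldsymbol{V}}\,\overline{\boldsymbol{Q}}^{T}\boldsymbol{Q}_{i}$, which settles the equality claim.

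For the general equivalence claim, I would observe that $\Pi_{\overline{\boldsymbol{Q}}}(\boldsymbol{Q}_{i})\sim\boldsymbol{Q}_{i}$, meaning they differ by right multiplication by a block-diagonal matrix $\diag(\boldsymbol{O}_{r},\boldsymbol{O}_{d-r})$ as in \cref{eq:g_fiber}, since the projection merely selects a different representative of the same subspace. Because $\overline{\boldsymbol{V}}\,\overline{\boldsymbol{Q}}^{T}\in\mathcal{O}_{d}$ acts by left multiplication and left multiplication commutes with the right block-diagonal action defining $\sim$, we obtain $\overline{\boldsymbol{V}}\,\overline{\boldsymbol{Q}}^{T}\Pi_{\overline{\boldsymbol{Q}}}(\boldsymbol{Q}_{i})\sim\overline{\boldsymbol{V}}\,\overline{\boldsymbol{Q}}^{T}\boldsymbol{Q}_{i}$, the asserted equivalence. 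The substitutions into the closed forms are routine; the one step deserving care is the legitimacy of invoking the specialized transport identity of \cref{eq:GrassPT}, which hinges on $\overline{\boldsymbol{V}}$ lying in the image of $\mathrm{Exp}_{\overline{\boldsymbol{Q}}}$ (guaranteed by $\overline{\boldsymbol{V}}=\Pi_{\overline{\boldsymbol{Q}}}(\overline{\boldsymbol{V}})$ together with the standing assumption that all principal angles are below $\frac{\pi}{2}$, so every logarithm and geodesic is well defined). I expect the main, though mild, obstacle to be the final bookkeeping of the equivalence relation — verifying that replacing $\Pi_{\overline{\boldsymbol{Q}}}(\boldsymbol{Q}_{i})$ by $\boldsymbol{Q}_{i}$ respects $\sim$ after the orthogonal left multiplication.
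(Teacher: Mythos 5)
Your proposal is correct and follows essentially the same route as the paper's own proof: decompose $\mathrm{Log}_{\overline{\boldsymbol{Q}}}(\boldsymbol{Q}_{i})=\overline{\boldsymbol{Q}}\boldsymbol{B}^{\text{skew}}$, apply the specialized transport identity at the end of \cref{eq:GrassPT} (valid because $\overline{\boldsymbol{V}}=\Pi_{\overline{\boldsymbol{Q}}}(\overline{\boldsymbol{V}})$ puts $\overline{\boldsymbol{V}}$ in the image of $\mathrm{Exp}_{\overline{\boldsymbol{Q}}}$), evaluate $\mathrm{Exp}_{\overline{\boldsymbol{V}}}(\overline{\boldsymbol{V}}\boldsymbol{B}^{\text{skew}})=\overline{\boldsymbol{V}}\exp(\boldsymbol{B}^{\text{skew}})=\overline{\boldsymbol{V}}\,\overline{\boldsymbol{Q}}^{T}\Pi_{\overline{\boldsymbol{Q}}}(\boldsymbol{Q}_{i})$, and pass through the equivalence $\Pi_{\overline{\boldsymbol{Q}}}(\boldsymbol{Q}_{i})\sim\boldsymbol{Q}_{i}$. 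Your write-up is in fact slightly more careful than the paper's, spelling out why the specialized identity applies and why left multiplication by $\overline{\boldsymbol{V}}\,\overline{\boldsymbol{Q}}^{T}$ respects $\sim$, steps the paper leaves implicit.
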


\begin{proof}
Let 
\[
\overline{\boldsymbol{Q}}\boldsymbol{B}^{\text{skew}}=\text{Log}_{\overline{\boldsymbol{Q}}}\left(\boldsymbol{Q}_{i}\right)\in\mathcal{T}_{\overline{\boldsymbol{Q}}}\mathcal{G}_{d,r}
\]
Thus
\begin{align*}
\Gamma_{\overline{\boldsymbol{Q}}\rightarrow\overline{\boldsymbol{V}}}^{+}\left(\boldsymbol{Q}_{i}\right) & =\mathrm{Exp}_{\overline{\boldsymbol{V}}}\left(\Gamma_{\overline{\boldsymbol{Q}}\rightarrow\overline{\boldsymbol{V}}}\left(\mathrm{Log}_{\overline{\boldsymbol{Q}}}\left(\boldsymbol{Q}_{i}\right)\right)\right)\\
 & =\mathrm{Exp}_{\overline{\boldsymbol{V}}}\left(\Gamma_{\overline{\boldsymbol{Q}}\rightarrow\overline{\boldsymbol{V}}}\left(\overline{\boldsymbol{Q}}\boldsymbol{B}^{\text{skew}}\right)\right)\\
 & =\mathrm{Exp}_{\overline{\boldsymbol{V}}}\left(\overline{\boldsymbol{V}}\boldsymbol{B}^{\text{skew}}\right)\\
 & =\overline{\boldsymbol{V}}\text{exp}\left(\boldsymbol{B}^{\text{skew}}\right)\\
 & =\overline{\boldsymbol{V}}\overline{\boldsymbol{Q}}^{T}\underbrace{\overline{\boldsymbol{Q}}\text{exp}\left(\boldsymbol{B}^{\text{skew}}\right)}_{\sim\boldsymbol{Q}_{i}}\\
 & \sim\overline{\boldsymbol{V}}\overline{\boldsymbol{Q}}^{T}\boldsymbol{Q}_{i}
\end{align*}
\end{proof}

\end{document}